\title{Robustness in the Face of Partial Identifiability\\in Reward Learning}
\author{%
Filippo Lazzati\\
Politecnico di Milano\\
% Piazza Leonardo da Vinci, 32\\
Milan, Italy\\
\texttt{filippo.lazzati@polimi.it} \\
\And
Alberto Maria Metelli \\
Politecnico di Milano\\
% Piazza Leonardo da Vinci, 32\\
Milan, Italy\\
% \texttt{email} \\
% \And
% Coauthor \\
% Affiliation \\
% Address \\
% \texttt{email} \\
% \And
% Coauthor \\
% Affiliation \\
% Address \\
% \texttt{email} \\
}
\let\oldnl\nl% Store \nl in \oldnl
\newcommand{\nonl}{\renewcommand{\nl}{\let\nl\oldnl}}% Remove line number for one line
\begin{document}
\setlength{\abovedisplayskip}{5pt}
\setlength{\belowdisplayskip}{5pt}
\setlength{\textfloatsep}{11pt}

\renewcommand\thmcontinues[1]{Continued}

\maketitle

\begin{abstract}
    In Reward Learning (ReL), we are given \emph{feedback} on an unknown
    \emph{target reward}, and the goal is to use this information to recover it
    in order to carry out some downstream \emph{application}, e.g., planning.
    When the feedback is not informative enough, the target reward is only
    \emph{partially identifiable}, i.e., there exists a set of rewards, called
    the \emph{feasible set}, that are equally plausible candidates for the
    target reward.
    In these cases, the ReL algorithm might recover a reward function different
    from the target reward, possibly leading to a failure in the application.
    In this paper, we introduce a general ReL framework that permits to
    \emph{quantify} the drop in ``performance'' suffered in the considered
    application because of identifiability issues.
    Building on this, we propose a \emph{robust} approach to address the
    identifiability problem in a principled way, by maximizing the
    ``performance'' with respect to the worst-case reward in the feasible set.
    We then develop \rob, a ReL algorithm that applies this robust approach to
    the subset of ReL problems aimed at assessing a preference between two
    policies, and we provide theoretical guarantees on sample and iteration
    complexity for \rob.
    We conclude with a proof-of-concept experiment to illustrate the considered
    setting.
\end{abstract}

\section{Introduction}\label{sec: introduction}

Reward Learning (ReL) is the problem of learning a reward function from data
\cite{jeon2020rewardrational}. When the data are demonstrations, ReL is known as
Inverse Reinforcement Learning (IRL) \cite{russell1998learning}, whereas when
the data are (pairwise) comparisons of trajectories, ReL is usually called
Preference-based Reinforcement Learning (PbRL) \cite{wirth2017pbrl} or
Reinforcement Learning from Human Feedback (RLHF)
\cite{kaufmann2024surveyreinforcementlearninghuman}.

The main strength of ReL is that the reward function that it aims to learn,
referred to as the \emph{target reward}, corresponds to ``a succinct and
transferable representation of the preferences of an agent''
\cite{russell1998learning,arora2020survey}. As such, \emph{ideally}, ReL allows
the use of datasets of demonstrations and comparisons for a variety of important
applications, such as reward design \cite{hadfield2017inverse}, Imitation
Learning (IL) \cite{abbeel2004apprenticeship}, risk-sensitive IL
\cite{lacotte2019riskil}, preference inference \cite{hadfield2016cirl}, behavior
transfer across environments \cite{Fu2017LearningRR}, behavior improvement
\cite{syed2007game}, and, more generally, any task that can be carried out using
a reward.

However, in practice, ReL has been successfully applied mainly to IL
\cite{finn2016guided} and reward design \cite{christiano2017deep}. The primary
obstacle to the adoption of ReL algorithms for other applications is
\emph{partial identifiability}
\cite{cao2021identifiability,kim2021rewardidentification,skalse2023invariancepolicyoptimisationpartial}.
This arises when the available feedback (demonstrations, comparisons, or
otherwise) does not allow to \emph{uniquely} identify the target reward, but
instead leads to a set of rewards (referred to as the \emph{feasible set}
\cite{metelli2021provably,metelli2023towards}) that represent equally-plausible
candidates for the target reward. As a result, the recovered reward might differ
from the target reward, potentially causing failure in the downstream
application. As noted by several works
\cite{cao2021identifiability,skalse2023invariancepolicyoptimisationpartial,finn2016guided},
most existing ReL methods, including
\cite{ng2000algorithms,ratliff2006maximum,ziebart2008maxent,boularias2011relative,wulfmeier2016maximumentropydeepirl,christiano2017deep},
are sensitive to this issue.

The standard solution in the literature is to try to ensure that the feasible
set contains (almost) only the target reward. This is typically achieved by
collecting additional feedback
\cite{amin2016resolving,cao2021identifiability,schlaginhaufen2024transferability,lazzati2024utility}
or by imposing further assumptions on the available feedback
\cite{kim2021rewardidentification}. However, in practice, additional feedback
may not be available, and the added assumptions may be too strong.

A more powerful and general approach was recently proposed by
\cite{skalse2023invariancepolicyoptimisationpartial}.
Rather than requiring the target reward to be uniquely identifiable, they
instead ask that the feasible set contains only reward functions that are
``equivalent'' to the target reward \emph{w.r.t. the considered application}.
Intuitively, this milder condition is applicable only when we have prior
knowledge of the intended use (i.e., the application) of the learned reward,
which is almost always the case
\cite{ng2000algorithms,ziebart2008maxent,Fu2017LearningRR,christiano2017deep}.

However, this approach has two drawbacks. First, it is \emph{difficult to apply}
in practice because, except for simple feedback and applications, it is
non-trivial to verify whether the equivalence condition holds. Second, it is
\emph{qualitative}: if the feasible set contains a reward that is not equivalent
to all others, then \cite{skalse2023invariancepolicyoptimisationpartial}
classify the ReL problem as prone to failure, without quantifying how severe the
difference is. Intuitively, if this is small, the downstream application might
still be carried out nearly successfully.

In this paper, we present a novel general framework for ReL that enables
\emph{quantitative} considerations. Based on this, we propose an
\emph{easy-to-apply} robust approach for addressing the identifiability problem.

\textbf{Contributions.}~~The contributions of this paper are summarized as
follows.
\begin{itemize}
[noitemsep, leftmargin=*, topsep=-2pt]
% [leftmargin=*]
  \item We introduce a new quantitative framework for ReL (Section \ref{sec: rel
  problem formulation}).
  \item We propose a robust approach for tackling the identifiability problem
  (Section \ref{sec: robustness}).
  \item We present \rob, an efficient algorithm implementing the robust approach
  (Section \ref{sec: use case}).
  \item We conduct an illustrative experiment to exemplify the problem setting
  and the method (Section \ref{sec: numerical simulations}).
\end{itemize}

All results are proved in Appendix \ref{apx: proofs rob}, while additional
related work is discussed in Appendix \ref{apx: additional rel work}.

\section{Preliminaries}\label{sec: preliminaries}

{\thinmuskip=2.5mu
\medmuskip=2.5mu \thickmuskip=2.5mu
\paragraph{Notation.} Given $N \in \mathbb{N}$, we denote
$\dsb{N}\coloneqq\{1,\dots,N\}$.
Given a finite set $\cX$, we denote by $|\cX|$ its cardinality and by
$\Delta^\cX$ the probability simplex on $\cX$. Given two sets $\cX$ and $\cY$,
we denote the set of conditional distributions as
$\Delta_\cY^\cX\coloneqq\{q:\cY\rightarrow\Delta^\cX\}$.
We use $\RR_+^k$ to denote the non-negative orthant in $k$ dimensions.
A vector $v\in\RR^k$ is a subgradient for a function $h:\RR^k\to\RR$ at
$u\in\RR^k$ if, for all $w\in\RR^k$ in the domain of $h$, it holds that $h(w)\ge
h(u)+v^\intercal (w-u)$.
Sometimes, we use $\dotp{v,w}=v^\intercal w$ for the dot product of vectors
$v,w\in\RR^k$.
We say that a function $d:\cX\times\cX\to\RR_+$ is a \emph{premetric} if, for
all $x\in\cX$, we have $d(x,x)=0$.
Moreover, for any $x\in\cX$, we denote the $\ell_2$-projection onto a set $\cY$
as any point such that: $\Pi_{\cY}(x)\in\argmin_{y\in\cY}\|x-y\|_2$.}

{\thinmuskip=2.5mu
\medmuskip=2.5mu \thickmuskip=2.5mu
\paragraph{Markov Decision Processes (MDPs).} A finite-horizon \emph{Markov
decision process} (MDP) \cite{puterman1994markov} is defined as a tuple
$\cM\coloneqq\tuple{\cS,\cA,H,s_0,p,r}$, where $\cS$ is the finite state space
($S\coloneqq|\cS|$), $\cA$ is the finite action space ($A\coloneqq|\cA|$),
$s_0\in\cS$ is the initial state, $H \in \mathbb{N}$ is the horizon,
$p\in\Delta^\cS_{\SAH}$ is the transition model, and $r\in\fR\coloneqq\{r:\SAH\to[0,1]\}$ is the
reward.
A policy is a mapping $\pi\in\Pi\coloneqq\Delta_{\SH}^{\cA}$. We let $\P_{\pi}$
denote the probability distribution induced by $\pi$ in $\cM$ starting from
$s_0$ (we omit $s_0,p$ for simplicity), and $\E_{\pi}$ denote the expectation
w.r.t. $\P_{\pi}$.
% (we omit $s_0$ in the notation).
%
The visitation distribution induced by $\pi$ in $\cM$ is defined as
$d^{\pi}_h(s,a)\coloneqq\P_{\pi}(s_h=s,a_h=a)$ for all $s,a,h$, so that
$\sum_{(s,a)\in\SA}d^{\pi}_h(s,a) = 1$ for every $h \in \dsb{H}$.
We denote the set of all state-action trajectories as $\Omega\coloneqq
(\SA)^H\times\cS$.
Given a trajectory $\omega=\tuple{s_1,a_1,\dotsc,s_H,a_H,s_{H+1}}\in\Omega$, we
define the ``visitation distribution'' $d^\omega$ of $\omega$ at each $s,a,h$ as
% as $d^\omega:\SAH\to\{0,1\}$,
$d^\omega_h(s,a)=\indic{s=s_h,a=a_h}$.
%
% For any $\omega=\{s_1,a_1,\dotsc,s_H,a_H,s_{H+1}\}\in\Omega$ and
Moreover, we let $G(\omega;r)\coloneqq\sum_{h\in\dsb{H}}r_h(s_h,a_h)$ be the
return of $\omega$ under reward $r\in\fR$, and note that
$G(\omega;r)=\dotp{d^\omega,r}$.
We denote the expected return of a policy $\pi$ in MDP $\cM$ as
$J^\pi(r;p)\coloneqq \E_{\pi}\bigs{\sum_{h\in\dsb{H}}r_h(s_h,a_h)} =
\dotp{d^{\pi},r}$, the optimal policy $\pi^*$ as any policy in $\argmax_\pi
J^\pi(r;p)$, and the optimal expected return as $J^*(r;p)\coloneqq\max_\pi
J^\pi(r;p)$.
% $\Pi^*(r;p)\coloneqq$
%
Finally, for any $\beta\ge0$ and stochastic policy $\pi$, we let
 $J^\pi_\beta(r;p)\coloneqq \E_{\pi}\bigs{\sum_{h\in\dsb{H}}
 (r_h(s_h,a_h)-\beta\log\pi_h(a_h|s_h))}$ be the entropy-regularized return
 \cite{Ziebart2010ModelingPA,haarnoja2017rldeepenergypolicies}.}

 \paragraph{Reward Learning (ReL).}

 In the literature
 \cite{russell1998learning,jeon2020rewardrational,skalse2023invariancepolicyoptimisationpartial},
 ReL is defined as the problem of learning an unknown \emph{target reward}
 $r^\star$ from a certain amount of \emph{feedback}, i.e., data, like
 demonstrations \cite{ng2000algorithms} or trajectory comparisons
 \cite{wirth2017pbrl}, that ``leak information'' about $r^\star$. The ultimate
 goal is to use the recovered reward for some downstream \emph{application}
 \cite{skalse2023invariancepolicyoptimisationpartial}, such as finding the
 optimal policy (planning).
 The concept of \emph{partial identifiability}
 \cite{cao2021identifiability,kim2021rewardidentification,skalse2023invariancepolicyoptimisationpartial}
 refers to the existence of multiple rewards that are equally plausible
 candidates for the target reward with respect to the given feedback.
 This set of rewards is called the \emph{feasible set}
 \cite{metelli2021provably,metelli2023towards}.

\section{A Quantitative Framework for Reward Learning}
\label{sec: rel problem formulation}

In this section, we present a new framework for studying ReL problems.
Beyond modeling feedback in a simple yet flexible way, our framework crucially
models applications in a \emph{quantitative} manner,\footnote{In Appendix
\ref{apx: comparison with skalse}, we provide a comparison of our framework with
the \emph{qualitative} framework of
\cite{skalse2023invariancepolicyoptimisationpartial}, while in Appendix
\ref{apx: misspecification and active learning} we provide a quantitative
discussion on model selection through our new framework.} paving the way to new
approaches to ReL (e.g., see Section \ref{sec: robustness}).

In our framework, we define a ReL problem
\cite{russell1998learning,jeon2020rewardrational} as a pair $\tuple{\cF,g}$,
where $\cF=\{f_i\}_i$ is a set of \emph{feedback} and $g$ is an
\emph{application}. Informally, the feedback $\cF$ represent what we know about
the unknown target reward $r^\star$, while the application $g$ represents what
we want to do with it.
In the following two sections, we formalize these important concepts.

\subsection{Feedback}\label{sec: feedback}

\begin{table}[t!]
    \centering
    \resizebox{0.92\columnwidth}{!}{%
     \begin{tabular}{||c c c||} 
     \hline
     Feeback $f$ & Feedback type and $Q_f$ & Feasible set $\cR_f$\\
     \hline\hline
     optimal expert \cite{ng2000algorithms} & \multirow{3}{*}{demonstrations
     $\pi^E$} & $\{r:\,J^{\pi^E}(r;p)=J^*(r;p)\}$\\
     $\beta$-MCE expert \cite{Ziebart2010ModelingPA} & &
     $\{r:\,\pi^E=\argmax_\pi J^\pi_\beta(r;p)\}$\\
      $t$-suboptimal expert
      \cite{poiani2024inversereinforcementlearningsuboptimal}, $t\ge 0$ & &
      $\{r:\,J^{\pi^E}(r;p)\ge J^*(r;p)-t\}$\\
     \hline
     BTL with $q$ \cite{christiano2017deep} & \multirow{2}{*}{trajectory
     comparison $(\omega^1,\omega^2)$} &
     $\{r:\,q=e^{G(\omega^1;r)}/\sum_{i\in\{1,2\}} e^{G(\omega^i;r)}\}$\\
     hard preference \cite{jeon2020rewardrational} & & $\{r:\,G(\omega^1;r)\le
     G(\omega^2;r)\}$\\
     \hline
     BTL with $q$ (\emph{new}) & \multirow{2}{*}{policy comparison $(\pi^1,\pi^2)$} &
     $\{r:\,q=e^{J^{\pi^1}(r;p)}/\sum_{i\in\{1,2\}} e^{J^{\pi^i}(r;p)}\}$\\
     hard preference (\emph{new}) & & $\{r:\,J^{\pi^1}(r;p)\le
     J^{\pi^2}(r;p)\}$\\
     \hline
     \end{tabular}%
    }
     \caption{A list of some feedback considered in literature.
    %  Each feedback $f$ consists in a statement on $Q_f$ and $r^\star$ that can
    %  be written as a constraint on $r^\star$. The feasible set $\cR_f$ is the
    %  set of rewards satisfying such constraint.
     %
    For simplicity, we have grouped different feedback $f$ based on the quantity
    $Q_f$, obtaining the three categories in the first column.
    Note that the \emph{policy comparison} feedback
    % of type, i.e., those regarding a pair of policies $Q_f=(\pi^1,\pi^2)$,
    are introduced in this paper for the first time and capture the situation in
    which we are given a preference on the behavior of two other agents (more in
    Appendix \ref{apx: examples of feedback}).
     MCE stands for ``Maximum Causal Entropy'', while BTL abbreviates the
     ``Bradley-Terry-Luce'' model.}
 \label{table: feedback}
\end{table}

A feedback $f$ relates a known quantity $Q_f$ with the unknown target reward
$r^\star$. We consider as feedback only those statements that can be translated
into a constraint on $r^\star$ of the type $r^\star\in\cR_f$, where
$\cR_f\subseteq\fR$ is some set of rewards associated with feedback $f$, that we
call \emph{feasible set}. See Table \ref{table: feedback} for a list of popular
feedback and their corresponding feasible sets.

For instance, saying that ``\emph{policy $Q_f=\pi^E$ is optimal for $r^\star$}''
(i.e., the ``optimal expert'' \cite{ng2000algorithms} entry in Table \ref{table:
feedback}) is an example of feedback drawn from the IRL literature, and it is
equivalent to saying that
$r^\star\in\cR_f=\{r\in\fR:\,$\scalebox{0.92}{$J^{\pi^E}(r;p)=J^*(r;p)$}$\}$.
Another example of feedback, taken from the PbRL literature, is
``\emph{trajectories $Q_f=(\omega^1, \omega^2)$ are such that the return under
$r^\star$ of $\omega^1$ is no more than that of $\omega^2$}'' (i.e., the ``hard
preference'' \cite{jeon2020rewardrational} entry in Table \ref{table:
feedback}), and corresponds to $r^\star\in\cR_f=\{r\in\fR:\,G(\omega^1;r)\le
G(\omega^2;r)\}$.
Note that our formulation is very flexible and allows us to work with almost any
feedback we desire, such as ``\emph{given $Q_f=(s,a,\omega)$, the reward
$r^\star$ of the pair $(s,a)$ is 80\% of the return of $\omega$}'',
corresponding to $r^\star\in\cR_f=\{r\in\fR:\,r(s,a)=0.8\cdot G(\omega;r)\}$.

If we are given multiple feedback $\cF=\{f_i\}_i$, then we can combine them to
obtain a smaller feasible set $\cR_\cF$ of candidates for $r^\star$. Formally,
we define the feasible set of $\cF$ as the intersection
$\cR_\cF\coloneqq\bigcap_i \cR_{f_i}$ of the feasible sets of all the feedback
in $\cF$.
Note that $\cR_\cF\subseteq \cR_{f_i}$ for every $i$, meaning that combining
multiple feedback permits to reduce our ``uncertainty'' on $r^\star$.
If $\cR_\cF\neq\{r^\star\}$, then we suffer from \emph{partial identifiability}.

\subsection{Applications}

\begin{table}[t!]
    \centering
    \resizebox{0.85\columnwidth}{!}{%
        \begin{tabular}{||c c c||} 
        \hline
        Application $g$ & Set $\cX_g$ & Loss $\cL_g(r,x)$\\
        \hline\hline
        Imitation of $\overline{\pi}$ \cite{abbeel2004apprenticeship} & $\Pi$ &
        $|J^{\overline{\pi}}(r;p)-J^x(r;p)|$\\
        % Planning \cite{christiano2017deep} & $\Pi$ & $J^*(r;p)-J^x(r;p)$\\
        Planning in $p'$ \cite{christiano2017deep,Fu2017LearningRR} & $\Pi$ &
        $J^*(r;p')-J^x(r;p')$\\
        Constrained planning with $c,k$ \cite{schlaginhaufen2023identifiability} & $\Pi_{c,k}$ &
        $\max_{\pi\in\Pi_{c,k}} J^\pi(r;p)-J^x(r;p)$\\
        Assessing a trajectory preference $\omega^1,\omega^2$ & $\RR$
        & $| x - (G(\omega^1;r) - G(\omega^2;r)) |$\\
        Assessing a policy preference $\pi^1,\pi^2$ & $\RR$
        & $| x - (J^{\pi^1}(r;p) - J^{\pi^2}(r;p)) |$\\
        Learning a reward \cite{Ramachandran2007birl} & $\fR$ & $\| x - r
        \|_2$\\
        \hline
        \end{tabular}%
    } \caption{A list of some applications considered in literature.
    % An application $g$ is made of a set of objects $\cX_g$ and a ``loss''
    %     $\cL_g$.
    Note that we used $\Pi_{c,k}\coloneqq\{\pi:\, J^\pi(c;p)\le k\}$, where $c$
        is the cost and $k$ the threshold. }
        \label{table: application}
\end{table}

We define an application $g$ as a pair $\tuple{\cX_g,\cL_g}$, where $\cX_g$ is a
set, and $\cL_g:\fR\times\cX_g\to\RR_+$ is a ``loss'' function.
An application $g$ is \emph{carried out} by choosing an $x\in\cX_g$, which
results in suffering from a loss $\cL_g(r^\star,x)$ (see examples in Table
\ref{table: application}).
To \emph{solve a ReL problem} $\tuple{\cF,g}$, we must carry out the application
$g$ while incurring the minimum possible loss, i.e., we must select an object
$x\in\cX_g$ for deployment such that the loss $\cL_g(r^\star,x)$ is as small as
possible.
Thus, ideally, the goal is to output:
\begin{align*}
    x^\star\in\argmin\limits_{x'\in\cX_g}\cL_g(r^\star,x').
\end{align*}
However, a ReL problem is not an optimization problem because the target reward
$r^\star$, and therefore the loss $\cL_g(r^\star,\cdot)$, are unknown.
For this reason, the function $\cL_g$ is defined over \emph{all} rewards
$r\in\fR$, with the meaning that $\cL_g(r,\cdot)$ quantifies the \emph{loss we
would suffer if $r$ were the target reward $r^\star$}.

An example of a ReL application $g$ is the well-known IL problem
\cite{ho2016modelfreeimitationlearningpolicy,osa2018IL} (see ``imitation of
$\overline{\pi}$'' in Table \ref{table: application}), where we aim to output a
policy $x$ that ``imitates'' some given policy $\overline{\pi}$, i.e., that
matches its expected return under the unknown reward $r^\star$. Thus, we can set
$\cX_g=\Pi$ and $\cL_g(r,x)=|J^{\overline{\pi}}(r;p)-J^x(r;p)|$, with the
intuition that, if $r=r^\star$, then the imitation error of $x$ is $\cL_g(r,x)$.

Another example of an application $g$ is planning (see ``planning in $p'$'' in
Table \ref{table: application}), which arises in many contexts, including reward
design \cite{christiano2017deep} and transferring behavior
\cite{Fu2017LearningRR}. Here, we aim to find a policy $x$ with the largest
possible expected return under $r^\star$ in some environment with different
dynamics $p'$. Thus, we have $\cX_g=\Pi$ and $\cL_g(r,x)=J^*(r;p')-J^x(r;p')$.

As a final example, consider the problem of assessing how much a trajectory
$\omega^1$ is preferred to $\omega^2$ by some agent (see Table \ref{table:
application}). Assuming that $r^\star$ models the agent's preferences, we can
view this problem as an application $g$ where $\cX_g=\RR$ and $\cL_g(r,x)=| x -
(G(\omega^1;r) - G(\omega^2;r))|$.

Note that our framework can also be used in scenarios where the ultimate goal is
learning $r^\star$ (e.g., because the application $g$ is not known yet) by
setting $\cX_g=\fR$ and using some distance between rewards for $\cL_g$, e.g.,
$\cL_g(r,x)=\|x-r\|_2$ \cite{Ramachandran2007birl} (see more in Appendix
\ref{apx: object deploy reward}).

\begin{remark}\label{remark: finite data} In this section, we considered
  feedback $f$ and applications $g$ that are fully known, in the sense that all
  the quantities (e.g., policies, transition models and other parameters)
  involved in the definitions of $\cR_f$, $\cX_g$, $\cL_g$ are known exactly.
  However, in practice, these quantities are unknown and must be estimated
  from finite samples.
  We will consider the finite-sample regime in Section \ref{sec: use case}.
\end{remark}

\section{A Robust Approach to Tackle Partial Identifiability}
\label{sec: robustness}

In the previous section, we presented a framework for formalizing ReL problems.
In this section, we introduce a novel, principled way to \emph{solve} a ReL
problem $(\cF,g)$, i.e., to select the object $x\in\cX_g$ to deploy.
We begin by reviewing the existing approaches adopted in the literature.

\paragraph{Existing approaches.}

In the literature, the majority of existing ReL methods, including the most
popular IRL
\cite{ziebart2008maxent,boularias2011relative,wulfmeier2016maximumentropydeepirl,finn2016guided}
and PbRL
\cite{christiano2017deep,ibarz2018rewardlearning,jeon2020rewardrational}
algorithms, solve a ReL problem $(\cF,g)$ by first drawing an \emph{arbitrary}
reward $\widetilde{r}$ from the feasible set $\cR_\cF$, and then deploying the
object $\widetilde{x}$ that minimizes the ``loss'' $\cL_g(\widetilde{r},\cdot)$
w.r.t. the recovered reward $\widetilde{r}$, as if $\widetilde{r}$ were the true
target reward $r^\star$:
\begin{align}\label{eq: approach existing literature}
  \widetilde{x}\in\argmin\limits_{x'\in\cX_g}\cL_g(\widetilde{r},x').
\end{align}
However, there are two main problems with this approach: $(i)$ there is no clear
motivation for why this choice of $\widetilde{x}$ is a ``good'' way for solving
the ReL problem,
% \footnote{We remark that for IL this approach is principled, but for other
% applications it does not \cite{Fu2017LearningRR}.}
because in general $\widetilde{r}\neq r^\star$
\cite{amin2016resolving,cao2021identifiability,kim2021rewardidentification}, and
so the minima of $\cL_g(\widetilde{r},\cdot)$ might incur a very large value of
the true loss $\cL_g(r^\star,\cdot)$;
$(ii)$ none of the aforementioned works provides an estimate of the true loss
$\cL_g(r^\star,\widetilde{x})$ incurred by selecting $\widetilde{x}$, thus
providing no information on whether the chosen $\widetilde{x}$ can be safely
deployed or not.
In the next two paragraphs, we present our approach to overcome these
limitations $(i)$ and $(ii)$.

\paragraph{Our approach.}

We propose a \emph{robust} approach that arises quite naturally once the
framework introduced in Section \ref{sec: rel problem formulation} is adopted.
Specifically, the idea is that, whatever choice $x'\in\cX_g$ we make, since we
know that the target reward $r^\star$ belongs to the feasible set $\cR_\cF$,
then, in the worst-case, the true loss $\cL_g(r^\star,x')$ incurred by deploying
$x'$ is upper bounded as:
\begin{align*}
  \cL_g(r^\star,x')\le \max\limits_{r\in\cR_\cF}\cL_g(r,x').
\end{align*}
For this reason, we propose to deploy the object $x_{\cF,g}\in\cX_g$ that
minimizes the loss associated with the worst possible value that $r^\star$ can
take:
\begin{align}\label{eq: minimax robust}
    x_{\cF,g}\in\argmin\limits_{x'\in\cX_g}\max\limits_{r\in\cR_\cF}\cL_g(r,x').
\end{align}
Some observations are in order.
First, whether the optimization problem in Eq. \eqref{eq: minimax robust} can be
solved efficiently depends on the specific application $g$ and feedback $\cF$ in
question.
Next, we choose to be robust (minimax) because our problem setting is not
Bayesian \cite{Ramachandran2007birl}, i.e., we do not have a distribution over
the set of rewards, but we only know that $r^\star\in\cR_\cF$.
Finally, note that many IL algorithms (e.g.,
\cite{abbeel2004apprenticeship,syed2007game,ho2016modelfreeimitationlearningpolicy})
can be seen as adopting our robust approach (see Appendix \ref{apx: additional
rel work}).

\paragraph{Quantifying the error.}

Eq. \eqref{eq: minimax robust} represents a principled way to solve ReL problems
that, unlike the approach commonly adopted in the literature (see Eq. \ref{eq:
approach existing literature}), provides worst-case guarantees.
However, we can\emph{not} solve every ReL problem $(\cF,g)$ by merely outputting
$x_{\cF,g}$, because although $x_{\cF,g}$ is the choice with the smallest
worst-case loss, the loss associated with $x_{\cF,g}$ might still be too large
in the worst case.
In other words, there are ReL problems $(\cF,g)$ that \emph{cannot be solved
robustly}, because we cannot guarantee that, in the worst case, the true loss
falls below some pre-specified threshold.
Intuitively, this happens when the application $g$ requires significant
knowledge about $r^\star$, which is not sufficiently provided by the feedback
$\cF$.
In such cases, we must collect additional feedback if available; otherwise, we
must tolerate weaker guarantees than those worst-case.

For these reasons, it is important to \emph{quantify} the loss suffered in the
worst case by $x_{\cF,g}$. Thanks to our new framework, we can compute it
as:
\begin{align}\label{eq: informativeness}
    \cI_{\cF,g}\coloneqq\max\limits_{r\in\cR_\cF}\cL_g(r,x_{\cF,g})
    =\min\limits_{x'\in\cX_g}\max\limits_{r\in\cR_\cF}\cL_g(r,x') .
\end{align}
Since $\cI_{\cF,g}$ measures how \emph{un}informative is $\cF$ for $g$, we call
it the \emph{uninformativeness} of $\cF$ for $g$.

\paragraph{A special case.}

We conclude this section with some observations on the special case where
$\cX_g=\fR$, i.e., when we aim to output a reward. In such cases, the robust
choice $x_{\cF,g}$ in Eq. \eqref{eq: minimax robust} can be interpreted as the
\emph{Chebyshev center} \cite{alimov2019chebyshev} of the feasible set $\cR_\cF$
in the premetric space $\tuple{\fR,\cL_g}$.
Building on the properties of the Chebyshev center, it is possible to derive
interesting results. The main of these is that the robust reward
$x_{\cF,g}\in\fR$ \emph{does not necessarily belong to the feasible set}
$\cR_\cF$, which is rather counterintuitive, especially because \emph{the entire
ReL literature has focused on recovering a reward function from the feasible
set} $\cR_\cF$.
We provide more details on this setting in Appendix \ref{apx: object deploy
reward}.

\section{Rob-ReL: A Robust Algorithm for ReL}
\label{sec: use case}

The goal of this section is to introduce an algorithm for solving ReL problems
using the robust approach presented in Section \ref{sec: robustness}. 
To this aim, we make two important observations.
\begin{itemize}
  [leftmargin=*]
    \item Solving ReL problems using the robust approach is \emph{not merely an
    optimization problem}. In fact, $\cF,g$ usually have to be estimated from
    finite data (see Remark \ref{remark: finite data}).
    \item \emph{No single and simple algorithm} can solve robustly all ReL
    problems. Indeed, even with infinite data, depending on $\cF$ and $g$, Eq.
    \eqref{eq: minimax robust} exhibits different properties from the optimization
    viewpoint.
\end{itemize}
For these reasons, we now focus on a specific \emph{subset} of ReL problems in
the \emph{finite-sample} regime and we present \rob, a provably efficient
algorithm for solving this subclass of ReL problems.

\subsection{The Family of ReL Problems Solvable by Rob-ReL}
\label{sec: family problems}

We consider an \emph{interesting} and \emph{explanatory} family of ReL problems
$\tuple{\cF,g}$ that have received limited attention in the literature.
Specifically, we let $g$ be the application of ``assessing a policy preference''
(see Table \ref{table: application}) between policies $\pi^1,\pi^2$ in a target
MDP without reward $\cM=\tuple{\cS,\cA,H,s_0,p}$, i.e., we aim to output a
scalar $x$ as close as possible to the difference in their expected returns:
\begin{align*}
    \cX_g=\RR,\qquad \cL_g(r,x)=\big| x -
    (J^{\pi^1}(r;p) - J^{\pi^2}(r;p)) \big|.
\end{align*}
In addition, we require that the set of feedback $\cF=\cF_{\text{D}}\cup
\cF_{\text{TC}} \cup\cF_{\text{PC}}$ contains only \emph{demonstrations}
$\cF_{\text{D}}$, \emph{trajectory comparisons} $\cF_{\text{TC}}$ and
\emph{policy comparisons} $\cF_{\text{PC}}$ feedback (see Table \ref{table:
feedback}), of the following kind.

We allow for $m_{\text{D}}\ge0$ demonstrations feedback
$\cF_{\text{D}}=\{f_{\text{D},i}\}_{i=1}^{m_{\text{D}}}$, where each
$f_{\text{D},i}$ is a ``$t_i$-suboptimal expert'' feedback (Table \ref{table:
feedback}) in some MDP without reward\footnote{All the considered environments
share the same $\cS,\cA,H$ because we work with rewards on this domain.}
$\cM_{\text{D},i}=\tuple{\cS,\cA,H,s_{0,\text{D},i},p_{\text{D},i}}$, with
$t_i\in[0,H]$, corresponding to the feasible set:
\begin{align}\label{eq: RF D}
    \cR_{f_{\text{D},i}}=\{r\in\fR:\,
  J^{\pi_{\text{D},i}}(r;p_{\text{D},i})\ge
  J^*(r;p_{\text{D},i})-t_i\},\qquad \forall i\in\dsb{m_{\text{D}}}.
\end{align}
Moreover, we allow for $m_{\text{TC}}\ge0$ trajectory comparison feedback
$\cF_{\text{TC}}=\{f_{\text{TC},i}\}_{i=1}^{m_{\text{TC}}}$, where each
$f_{\text{TC},i}$ is a ``hard preference'' feedback (Table \ref{table:
feedback}) in some MDP without reward
$\cM_{\text{TC},i}=\tuple{\cS,\cA,H,s_{0,\text{TC},i},p_{\text{TC},i}}$, with
feasible set:
\begin{align}\label{eq: RF TC}
    \cR_{f_{\text{TC},i}}=\{r\in\fR:\, G(\omega^1_{\text{TC},i};r)\le
  G(\omega^2_{\text{TC},i};r)\},\qquad \forall i\in\dsb{m_{\text{TC}}}.
 \end{align}
Finally, we allow for $m_{\text{PC}}\ge0$ policy comparison feedback
$\cF_{\text{PC}}=\{f_{\text{PC},i}\}_{i=1}^{m_{\text{PC}}}$, where each
$f_{\text{PC},i}$ is a ``hard preference'' feedback (Table \ref{table:
feedback}) in some MDP without reward
$\cM_{\text{PC},i}=\tuple{\cS,\cA,H,s_{0,\text{PC},i},p_{\text{PC},i}}$, with
feasible set:
  \begin{align}\label{eq: RF C}
    \cR_{f_{\text{PC},i}}=\{r\in\fR:\,
  J^{\pi^1_{\text{PC},i}}(r;p_{\text{PC},i})\le
  J^{\pi^2_{\text{PC},i}}(r;p_{\text{PC},i})\},\qquad \forall i\in\dsb{m_{\text{PC}}}.
  \end{align}

\paragraph{Finite data.}

To keep things realistic, we assume that the policies
$\pi^1,\pi^2,\pi_{\text{D},i}$, $\pi^1_{\text{PC},i}$, $\pi^2_{\text{PC},i}$ and
the transition models $p,p_{\text{D},i}$, $p_{\text{TC},i}$, $p_{\text{PC},i}$
are \emph{not} known and must instead be estimated from data.
We adopt a mixed offline-online setting that is common in the literature (e.g.,
see GAIL \cite{ho2016generativeadversarialimitationlearning}).
To estimate the policies $\pi^1,\pi^2,\pi_{\text{D},i}$, $\pi^1_{\text{PC},i}$,
$\pi^2_{\text{PC},i}$, we assume access to batch datasets of trajectories
$\cD^1,\cD^2,\cD_{\text{D},i}$, $\cD^1_{\text{PC},i}$, $\cD^2_{\text{PC},i}$
obtained by executing the policies in the corresponding environments
$\cM,\cM_{\text{D},i}$, $\cM_{\text{PC},i}$ for $n^1,n^2,n_{\text{D},i}$,
$n^1_{\text{PC},i}$, $n^2_{\text{PC},i}$ trajectories, respectively.
To estimate the transition models $p, p_{\text{D},i}$, $p_{\text{TC},i}$,
$p_{\text{PC},i}$, we assume access to a forward sampling model\footnote{A
\emph{forward model} \cite{dann2015episodic,kakade2003sample} of an MDP $\cM'$
permits to collect trajectories from $\cM'$ by exploring at will.} for each MDP
without reward $\cM, \cM_{\text{D},i}$, $\cM_{\text{TC},i}$,
$\cM_{\text{PC},i}$, from which we can collect $N, N_{\text{D},i}$,
$N_{\text{TC},i}$, $N_{\text{PC},i}$ trajectories, respectively.

\subsection{Rob-ReL}

We now present \rob (\roblong, Algorithm \ref{alg: rob}), a ReL algorithm for
solving this family of ReL problems using the robust approach from Section
\ref{sec: robustness}.
Specifically, in this setting, even with \emph{infinite} data available, the
robust approach (Eq. \ref{eq: minimax robust}) requires solving the following
optimization problem, where the constraints define the feasible set $\cR_\cF$:
\begin{align}\label{eq: opt prob rob}
    % \begin{split}
        x_{\cF,g}\in
        % \argmin\limits_{x'\in\cX_g}\max\limits_{r\in\cR_\cF}\cL_g(r,x')=
        \argmin\limits_{x'\in\RR}\max\limits_{r\in\fR}&\;
    | x' - \dotp{\popred{d^{\pi^1}}-\popred{d^{\pi^2}},r}|\;&&\\
    \text{s.t.: }& \max_\pi J^\pi(r;\popred{p_{\text{D},i}})-\dotp{\popred{d^{\pi_{\text{D},i}}},r}\le
    t_i&&\forall i\in\dsb{m_{\text{D}}},\nonumber\\
    & \dotp{d^{\omega^1_{\text{TC,i}}}-d^{\omega^2_{\text{TC,i}}}, r}\le 0
    &&\forall i\in\dsb{m_{\text{TC}}},\nonumber\\
    & \dotp{\popred{d^{\pi^1_{\text{PC},i}}}-\popred{d^{\pi^2_{\text{PC},i}}}, r}\le 0
    &&\forall i\in\dsb{m_{\text{PC}}}.\nonumber
\end{align}
However, with \emph{finite} data, the quantities highlighted in red are not
known. Therefore, \rob instead solves the optimization problem obtained by
replacing these quantities with their estimates.
The next two paragraphs describe these estimates are computed by \rob and the
optimization method it employs.

\paragraph{Estimation.}

Given a dataset
$\cD=\{\tuple{s_1^j,a_1^j,\dotsc,s_H^j,a_H^j,s_{H+1}^j}\}_{j\in\dsb{n}}$ of $n$
trajectories collected by some policy $\pi$, we can estimate the visit
distribution of $\pi$ at all $(s,a,h)\in\SAH$ as:
\begin{align}\label{eq: empirical estimates visit distributions}
    \widehat{d}_h^\pi(s,a)=\frac{1}{n}\sum\limits_{j\in\dsb{n}}
    \indic{s_h^j=s,a_h^j=a}.
\end{align}
In this way, \rob estimates $d^{\pi^1}$, $d^{\pi^2}$, $d^{\pi_{\text{D},i}}$,
$d^{\pi^1_{\text{PC},i}}$, $d^{\pi^2_{\text{PC},i}}$ from the corresponding
datasets $\cD^1,\cD^2,\cD_{\text{D},i}$, $\cD^1_{\text{PC},i}$,
$\cD^2_{\text{PC},i}$ (Line \ref{line: est visit distribs}).
Next, to estimate $p_{\text{D},i}$, \rob executes RF-Express
\cite{menard2021fast} (Line \ref{line: rfexpress}), a minimax-optimal
\emph{reward-free} exploration algorithm
\cite{jin2020RFE,lazzati2024compatibility}.
In short, RF-Express collects $N_{\text{D},i}$ trajectories from each
$\cM_{\text{D},i}$, and then uses the resulting data to estimate
$p_{\text{D},i}$.
Note that, since $p$, $p_{\text{TC},i}$, and $p_{\text{PC},i}$ do not appear in
Eq. \eqref{eq: opt prob rob}, then \rob does not need to estimate them.

\begin{figure}[!t]
  \centering
  \begin{minipage}[t!]{0.48\textwidth}
    \centering
    \input{subgradient.tex}
\end{minipage}
\hfill
\begin{minipage}[t!]{0.48\textwidth}
    \centering
    \vspace{-0.5pt}
    \input{pdsm_min.tex}
\end{minipage}
 \end{figure}

\paragraph{Optimization.}

Let $\widehat{\cR}_\cF$ and $\widehat{\cL}_g$ be the empirical counterparts of
$\cR_\cF$ and $\cL_g$ obtained by replacing the quantities in red in Eq.
\eqref{eq: opt prob rob} with the estimates described above. Then, \rob
addresses the optimization problem:
\begin{align}\label{eq: estimated opt prob rob}
    \widehat{x}_{\cF,g}\in \argmin\limits_{x'\in\RR}
    \max\limits_{r\in\widehat{\cR}_\cF}\widehat{\cL}_g(r,x').
\end{align}
This is a minimax problem with non-trivial constraints. 
Nevertheless, we can simplify it. Define $\widehat{M},\widehat{m}$ as the
largest and smallest values of
$\dotp{\widehat{d}^{\pi^1}-\widehat{d}^{\pi^2},r}$ over $\widehat{\cR}_\cF$:
\begin{align}\label{eq: est M m}
  &\widehat{M}\coloneqq \max\limits_{r\in\widehat{\cR}_\cF}
  \dotp{\widehat{d}^{\pi^1}-\widehat{d}^{\pi^2},r},
  \qquad \widehat{m}\coloneqq \min\limits_{r\in\widehat{\cR}_\cF}
  \dotp{\widehat{d}^{\pi^1}-\widehat{d}^{\pi^2},r}.
\end{align}
Then, we can rewrite Eq. \eqref{eq: estimated opt prob rob} in a more convenient
form:
\begin{restatable}{prop}{rewriteobjectivesusecase}\label{prop: rewrite
  objectives use case}%
  It holds that: $\widehat{x}_{\cF,g}=(\widehat{M}+\widehat{m})/2$.
\end{restatable}
As a result, instead of directly solving Eq. \eqref{eq: estimated opt prob rob},
\rob computes $\widehat{x}_{\cF,g}$ by first computing $\widehat{M}$ and
$\widehat{m}$ via Eq. \eqref{eq: est M m} (see Lines \ref{line: pdsm
min}-\ref{line: pdsm max}), and then combining the results through Proposition
\ref{prop: rewrite objectives use case} (see Line \ref{line: x}).
Observe that the optimization problems in Eq. \eqref{eq: est M m} are
\emph{convex}, since both the objective functions and constraints are linear
or convex (the pointwise maximum of linear functions is convex
\cite{boyd2004convex}).
To solve them, \rob finds saddle points of the Lagrangian function using the
\emph{primal-dual subgradient method} (PDSM) \cite{nedic2009subgradient}, which
alternates subgradient updates for the primal and dual variables.
Specifically, for any $r\in\fR$ and $\lambda\coloneqq
(\lambda_{\text{D}}^\intercal,\lambda_{\text{TC}}^\intercal,
\lambda_{\text{PC}}^\intercal)^\intercal\in\RR^{m_{\text{D}}+
m_{\text{TC}}+m_{\text{PC}}}$, the Lagrangian $\widehat{L}$ of both problems in
Eq. \eqref{eq: est M m} is:
\begin{align}\label{eq: estimated lagrangian}
   \widehat{L}(r,\lambda)&
  =\dotp{\widehat{d}^{\pi^1}-\widehat{d}^{\pi^2},r}
  +\sum\nolimits_{i\in\dsb{m_{\text{D}}}}\lambda_{\text{D}}^i
  \bigr{\max\nolimits_\pi J^\pi(r;\widehat{p}_{\text{D},i})-\dotp{\widehat{d}^{\pi_{\text{D},i}},r}
  -t_i}\\
  & \qquad+\sum\nolimits_{i\in\dsb{m_{\text{TC}}}}\lambda_{\text{TC}}^i\dotp{d^{\omega^1_{\text{TC,i}}}-d^{\omega^2_{\text{TC,i}}}, r}
  +\sum\nolimits_{i\in\dsb{m_{\text{PC}}}}\lambda_{\text{PC}}^i \dotp{\widehat{d}^{\pi^1_{\text{PC},i}}
  -\widehat{d}^{\pi^2_{\text{PC},i}}, r}.
  \nonumber
\end{align}
Then, the subroutines \texttt{PDSM-MIN} (Algorithm \ref{alg: pdsm min}) and
\texttt{PDSM-MAX} (Algorithm \ref{alg: pdsm max}, Appendix \ref{apx: proofs
rob}), invoked by \rob at Lines \ref{line: pdsm min}-\ref{line: pdsm max}, aim
to compute $\max_{\lambda\ge 0}\min_{r\in\fR} \widehat{L}(r,\lambda)$ and
$\min_{\lambda\le 0}\max_{r\in\fR} \widehat{L}(r,\lambda)$ by alternating
between one subgradient step for $r$ and one for $\lambda$.
Here, $\alpha>0$ is the step size, $s$ is a hyperparameter (see Theorem
\ref{thr: guarantees caty} for principled choices of $\alpha$ and $s$). The
expressions $\partial_r \widehat{L}(r_k,\lambda_k)$ and $\partial_\lambda
\widehat{L}(r_k,\lambda_k)$ denote subgradients of $\widehat{L}$ w.r.t. $r$ and
$\lambda$, evaluated at $(r_k,\lambda_k)$ (see Appendix \ref{apx: subgradients
lagrangian} for their formulas). To compute $\max\nolimits_\pi
J^\pi(r;\widehat{p}_{\text{D},i})$, both subroutines make use of the backward
induction algorithm \cite{puterman1994markov} (see Appendix \ref{apx:
subgradients lagrangian}).

Finally, recall from Section \ref{sec: robustness} that we are also interested
in quantifying the worst-case loss $\cI_{\cF,g}$ that might be incurred when
solving problem $(\cF,g)$. This is computed as $\widehat{\cI}_K$ by \rob at Line
\ref{line: uninf} based on the following result:
\begin{restatable}{prop}{uninf}\label{prop: uninf}%
  It holds that: $\widehat{\cI}_{\cF,g}\coloneqq
    \max_{r\in\widehat{\cR}_\cF}\widehat{\cL}_g(r,\widehat{x}_{\cF,g}) =
    (\widehat{M}-\widehat{m})/2$.
\end{restatable}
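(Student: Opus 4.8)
The plan is to establish the identity $\widehat{\cI}_{\cF,g} = (\widehat{M} - \widehat{m})/2$ by directly evaluating the worst-case loss at the optimal point $\widehat{x}_{\cF,g} = (\widehat{M} + \widehat{m})/2$ provided by Proposition \ref{prop: rewrite objectives use case}. First I would substitute the definition of the loss $\widehat{\cL}_g(r, x) = | x - \dotp{\widehat{d}^{\pi^1} - \widehat{d}^{\pi^2}, r} |$ into the definition of $\widehat{\cI}_{\cF,g}$, giving
\begin{align*}
    \widehat{\cI}_{\cF,g} = \max_{r \in \widehat{\cR}_\cF}
    \bigl| \widehat{x}_{\cF,g} - \dotp{\widehat{d}^{\pi^1} - \widehat{d}^{\pi^2}, r} \bigr|.
\end{align*}

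The key observation is that the quantity $\dotp{\widehat{d}^{\pi^1} - \widehat{d}^{\pi^2}, r}$, as $r$ ranges over $\widehat{\cR}_\cF$, takes values exactly in the interval $[\widehat{m}, \widehat{M}]$ by the very definitions in Eq. \eqref{eq: est M m}. Since this inner product is a linear (hence continuous) function of $r$ and $\widehat{\cR}_\cF$ is convex and connected, the image is precisely the closed interval $[\widehat{m}, \widehat{M}]$. Therefore the maximization over $r \in \widehat{\cR}_\cF$ reduces to a maximization of $|\widehat{x}_{\cF,g} - v|$ over the scalar variable $v \in [\widehat{m}, \widehat{M}]$. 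The maximum of the distance from a fixed point $\widehat{x}_{\cF,g}$ to points in an interval is attained at one of the two endpoints, so
\begin{align*}
    \widehat{\cI}_{\cF,g} = \max\bigl\{ |\widehat{x}_{\cF,g} - \widehat{m}|,\; |\widehat{x}_{\cF,g} - \widehat{M}| \bigr\}.
\end{align*}

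Finally I would plug in $\widehat{x}_{\cF,g} = (\widehat{M} + \widehat{m})/2$ and compute both endpoint distances: each equals $(\widehat{M} - \widehat{m})/2$ (using $\widehat{M} \ge \widehat{m}$), so their maximum is also $(\widehat{M} - \widehat{m})/2$, yielding the claim. This mirrors the elementary fact that the midpoint of an interval is equidistant from both endpoints and minimizes the maximal deviation. The only step requiring slight care is arguing that the image of $\widehat{\cR}_\cF$ under the linear map is the full interval rather than merely a subset, which is where convexity of $\widehat{\cR}_\cF$ enters; this convexity was already noted in the excerpt (the feasible set is defined by linear and convex constraints). I expect this surjectivity-onto-the-interval step to be the only genuine subtlety, and it is mild — everything else is a routine endpoint calculation. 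Note the close parallel with Proposition \ref{prop: rewrite objectives use case}, whose proof presumably also hinges on reducing the minimax over $\widehat{\cR}_\cF$ to a minimax over the interval $[\widehat{m}, \widehat{M}]$; the two results share the same underlying one-dimensional geometry.
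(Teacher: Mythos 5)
Your proof is correct, but it takes a slightly different route from the paper's, and the difference is instructive. The paper dispatches both Proposition \ref{prop: rewrite objectives use case} and Proposition \ref{prop: uninf} with a single elementary lemma (Lemma \ref{lemma: change objective}): for any $k\in\RR$ and any function $h$ attaining its minimum $m_h$ and maximum $M_h$ on the feasible set, one writes $|k-h(y)|=\max\{k-h(y),\,h(y)-k\}$ and exchanges the maximization over $y$ with the outer binary maximum, obtaining $\max_y |k-h(y)|=\max\{k-m_h,\,M_h-k\}$ directly; plugging $k=(M_h+m_h)/2$ finishes the argument. This needs \emph{no} convexity or connectedness of $\widehat{\cR}_\cF$ --- only that the extreme values $\widehat{m},\widehat{M}$ are attained. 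Your proof instead establishes that the image of $\widehat{\cR}_\cF$ under $r\mapsto\dotp{\widehat{d}^{\pi^1}-\widehat{d}^{\pi^2},r}$ is the \emph{full} interval $[\widehat{m},\widehat{M}]$, and you flag this surjectivity step as the only genuine subtlety. In fact that step is dispensable: for any set $V$ with $\widehat{m},\widehat{M}\in V\subseteq[\widehat{m},\widehat{M}]$ one has $|k-v|\le\max\{k-\widehat{m},\,\widehat{M}-k\}$ for every $v\in V$, with equality at whichever endpoint is farther from $k$, so the worst case is attained at an endpoint regardless of what lies in between. Thus the paper's argument is marginally more general (it would survive a nonconvex feasible set) and avoids invoking convexity altogether; your version is self-contained and makes the one-dimensional geometry explicit, and your reduction-to-endpoints calculation after that point coincides with the paper's. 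Both arguments share the implicit requirement that $\widehat{M}$ and $\widehat{m}$ are attained (compactness of $\widehat{\cR}_\cF$), which holds since it is a closed subset of $\fR$.
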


\subsection{Theoretical Analysis}

We now show that \rob is both computationally and sample efficient.
To this aim, we make the assumption that the feasible set $\cR_\cF$ contains a
\emph{strictly} feasible reward $\overline{r}$, which is common in both the
optimization \citep{nedic2009subgradient} and the RL \citep{ding2020npg}
literature:
\begin{ass}[Slater's condition]\label{ass: slater condition}%
  There exist $\xi>0$ and $\overline{r}\in\fR$ such that:
  \begin{align*}
    \begin{cases}
      \max_\pi J^\pi(\overline{r};p_{\text{D},i})-\dotp{d^{\pi_{\text{D},i}},\overline{r}}
  -t_i\le-\xi & \forall i\in\dsb{m_{\text{D}}}\\
  \dotp{d^{\omega^1_{\text{TC,i}}}-d^{\omega^2_{\text{TC,i}}}, \overline{r}}\le-\xi & \forall i\in\dsb{m_{\text{TC}}}\\
  \dotp{d^{\pi^1_{\text{PC},i}}
  -d^{\pi^2_{\text{PC},i}}, \overline{r}}\le -\xi & \forall i\in\dsb{m_{\text{PC}}}
    \end{cases}.
  \end{align*}
\end{ass}
Then, we can prove the following result for \rob:
\begin{restatable}{thr}{thrcaty}
  \label{thr: guarantees caty}
Let $\tuple{\cF,g}$ be a ReL problem as described in Section \ref{sec: family
problems} for which Assumption \ref{ass: slater condition} holds.
Let $\epsilon\in(0,2H]$ and $\delta\in(0,1)$.
If we set $s=4H/\xi+\sqrt{(4H/\xi)^2+SAH/4}$ and $\alpha=\epsilon/(16H
(1+s\sqrt{m_{\text{D}}+m_{\text{TC}}+m_{\text{PC}}})^2)$, then, with probability
$1-\delta$, \rob satisfies:
\begin{align*}
  &\scalebox{0.96}{$  \displaystyle \cL_g(r^\star, \widehat{x}_K)\le
  \cI_{\cF,g}+\epsilon
  % \min\limits_{x\in\cX_g}\max\limits_{r\in\cR_\cF}\cL_g(r, x)+\epsilon
  \quad \text{and}\quad
  \big|\cI_{\cF,g}-\widehat{\cI}_K\big|\le \epsilon,$}
\end{align*}
with a number of samples:
\begin{align*}
  % \begin{split}
  % \label{eq: sample complexity}
  & \scalebox{0.96}{$  \displaystyle n^1,n^2,n_{\text{D},i},n^1_{\text{PC},i}, n^2_{\text{PC},i}\le \widetilde{\cO}\Big(
    \frac{SAH^5}{\epsilon^2\xi^2} \log\frac{1}{\delta}  
  \Big),$}\\
  & \scalebox{0.96}{$  \displaystyle
  N_{\text{D},i}\le \widetilde{\cO}\Big(
    \frac{SAH^5}{\epsilon^2\xi^2}\Big(S+\log\frac{1}{\delta}\Big)
    \Big),\;  N,N_{\text{TC},i},N_{\text{PC},i}=0,$}
  % & K\le\cO\biggr{
  %   \frac{H^{5/2}}{\xi\epsilon^2}\Bigr{\sqrt{SA}+\frac{\sqrt{H}}{\xi}}
  %   \Bigr{1+\sqrt{H(m_{\text{D}}+m_{\text{TC}}+m_{\text{PC}})(H/\xi^2+SA)}}
  % }.
  % \end{split}
\end{align*}
and a number of iterations:
\begin{align*}
  % \label{eq: iter complexity rob}
  \scalebox{0.93}{$  \displaystyle
  K\le\cO\Biggr{
    \frac{H^{5/2}}{\xi\epsilon^2}\biggr{\sqrt{SA}+\frac{\sqrt{H}}{\xi}}
    \biggr{1+\sqrt{H(m_{\text{D}}+m_{\text{TC}}+m_{\text{PC}})\Bigr{\frac{H}{\xi^2}+SA}}}^2
  }.$}
\end{align*}
  \end{restatable}
Simply put, Theorem \ref{thr: guarantees caty} tells us that \rob enjoys sample
and iteration complexities that are polynomial in the quantities of interest
$S,A,H,\frac{1}{\epsilon},\log\frac{1}{\delta}, \frac{1}{\xi},
m_{\text{D}},m_{\text{TC}}, m_{\text{PC}}$.
In Appendix \ref{apx: other kinds feedback}, we show how to extend \rob to other
forms of applications and feedback.
Finally, we observe that \rob can also be used for estimating the worst-case
loss $\max_{r\in\cR_\cF}\cL_g(r,x)$ incurred by deploying an arbitrary object
$x\in\cX_g$ (see Appendix \ref{apx: bounding worst case error with rob} for
details).
\begin{proofsketch}
Define $M\coloneqq \max_{r\in\cR_\cF}\dotp{d^{\pi^1}-d^{\pi^2},r}$ and
$m\coloneqq \min_{r\in\cR_\cF}\dotp{d^{\pi^1}-d^{\pi^2},r}$. Then, after having
shown that $x_{\cF,g}=(M+m)/2$ and $\cI_{\cF,g}=(M-m)/2$, the result can be
proved by upper bounding the \emph{estimation error}
$|M-\widehat{M}|+|m-\widehat{m}|$ and the \emph{iteration error}
$|\widehat{M}-\widehat{M}_K|+|\widehat{m}-\widehat{m}_K|$.
We upper bound the estimation error in Lemma \ref{lemma: estimation error}, by
first bounding the error in estimating the visitation distribution of the
policies (using Hoeffding's inequality) and the transition models (using the
results in \cite{menard2021fast}), and then showing that, under Assumption
\ref{ass: slater condition}, all $M,\widehat{M},m,\widehat{m}$ are saddle points
with bounded optimal Lagrange multipliers (using Lemma 3 of
\cite{nedic2009subgradient}).
Regarding the iteration error (see Lemma \ref{lemma: approximation error}), we
exploit the theoretical guarantees of the PDSM (Proposition 2 of
\cite{nedic2009subgradient}).
% , that can be applied under the previously-considered good event.
%
\end{proofsketch}

\section{Numerical Simulations}\label{sec: numerical simulations}

\begin{figure}[!t]
  \centering
  \begin{minipage}[t!]{0.47\textwidth}
    \centering
    \includegraphics[width=0.95\linewidth]{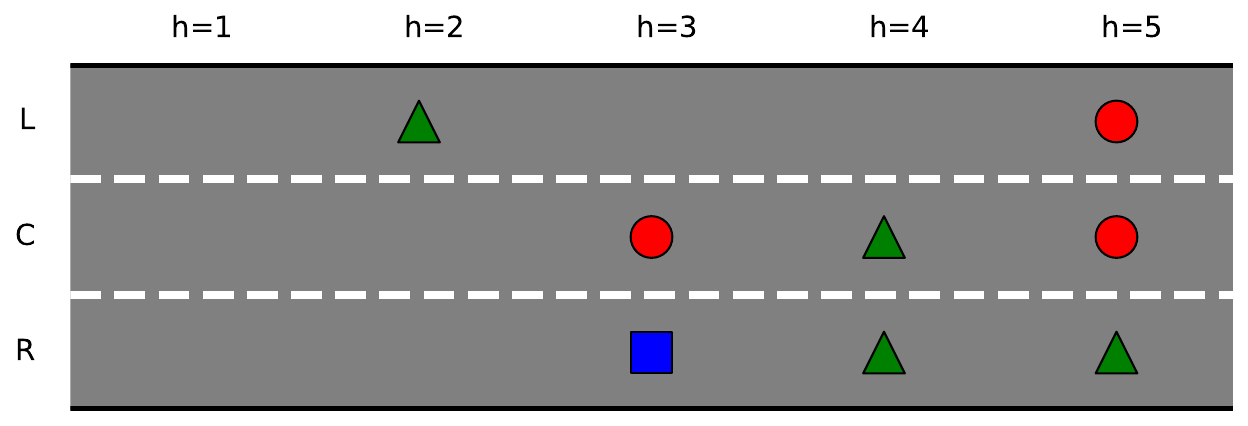}
\end{minipage}
\begin{minipage}[t!]{0.47\textwidth}
    \centering
    \includegraphics[width=0.95\linewidth]{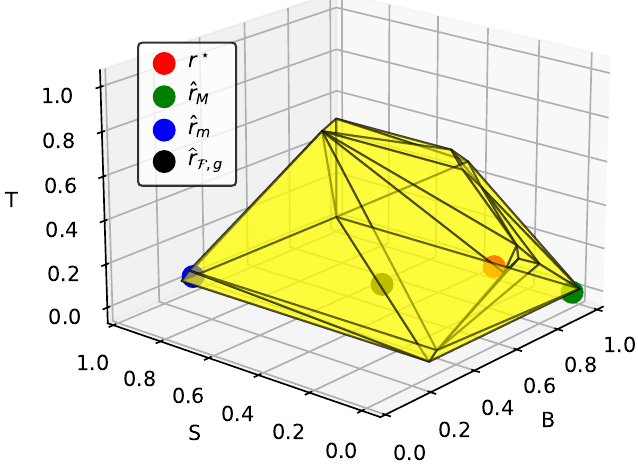}
\end{minipage}
\caption{(Left) The target environment considered in the
experiment.
(Right) The feasible set with $r^\star$ and
$\widehat{r},\widehat{r}_{M}, \widehat{r}_{m}$. Axis $B,S,T$ refer to
the reward values of B, S and T.}
\label{fig: road and fs}
\end{figure}

In this section, we present an illustrative experiment aimed at $(i)$
exemplifying the class of problems solvable by \rob and $(ii)$ providing a
\emph{graphical} intuition of the robust approach.
% adopted by \rob.
%
For these reasons, we focus on a low-dimensional problem.
More details are in Appendix \ref{apx: experimental details}.
  
\paragraph{Environment.}

We use the following state-action space $\tuple{\cS,\cA,H}$ to model the road
depicted on the left of Figure \ref{fig: road and fs}.
There are 12 states $s\in\cS$ obtained by combining three lanes (left (L),
center (C), right (R)) with four items (a ball (B), a square (S), a triangle
(T), or nothing (N)).
The action space contains three actions $\cA=\{a_L,a_C,a_R\}$, that aim to bring
an agent to, respectively, the left, keep the current lane, or go to the right,
respectively.
The horizon is $H=5$.

\paragraph{ReL problem.}

We consider the ReL problem $\tuple{\cF,g}$ in which an agent, \texttt{Alice}
(e.g., a human), has a preference over the items B, S and T, that can be modeled
through a three-dimensional reward $r^\star=[r^\star_B, r^\star_S, r^\star_T]$,
where each component represents the value that \texttt{Alice} associates with
visiting states containing items B, S and T ($r^\star$ is zero for N).
$r^\star$ is unknown to our learner, and for the experiment we set
$r^\star=[0.7,0.1,0.2]$.
The application $g$ consists of assessing how much \texttt{Alice} prefers the
policy $\pi^1$, that always takes action $a_R$, w.r.t. policy $\pi^2$, that
always selects $a_L$, in an environment constructed using $\tuple{\cS,\cA,H}$
and some dynamics $s_0$, $p$ (see Appendix \ref{apx: experimental details}).
This value is $\Delta J(r^\star)\coloneqq J^{\pi^1}(r^\star;p)
-J^{\pi^2}(r^\star;p)=0.39$, i.e., \texttt{Alice}'s preference for $\pi^1$ over
$\pi^2$ has an ``intensity'' of 0.39.
To simulate some feedback $\cF$ from \texttt{Alice}, we randomly generated some
% trajectories and visit distributions to construct
demonstrations, policy comparisons and trajectory comparisons feedback
consistent with $r^\star$ ($1+2+3=6$ feedback in total).
The resulting feasible set $\cR_\cF$, along with $r^\star$, is plotted in Figure
\ref{fig: road and fs} (right).

\paragraph{Simulation and results.}

Our goal is to output a scalar $x\in\RR$ that is as close as possible to $\Delta
J(r^\star)=0.39$ without knowing $r^\star$, but only that $r^\star$ belongs to
the yellow set $\cR_\cF$ in Figure \ref{fig: road and fs}.
We have executed \rob for $K=1200$ iterations with a step size $\alpha=0.01$
using for simplicity the exact values of policies and transition models,
obtaining $\widehat{m}_K=-0.62,\widehat{M}_K=1.02$, corresponding to the rewards
in $\cR_\cF$ that provide the smallest and largest values of $\Delta J$ (see the
blue $\widehat{r}_{m}$ and green $\widehat{r}_{M}$ rewards in Figure \ref{fig:
road and fs}).
The output of \rob is, therefore, $\widehat{x}_{\cF,g}=0.2$ (corresponding to the
black reward $\widehat{r}_{\cF,g}$ in Figure \ref{fig: road and fs}) and
$\widehat{\cI}_{\cF,g}=0.82$.
Thus, we know that the distance between $\widehat{x}_{\cF,g}=0.2$ and $\Delta
J(r^\star)=0.39$ is at most $\widehat{\cI}_{\cF,g}=0.82$ in the worst case, and
that it can be reduced by collecting more feedback.
Moreover, looking at these numbers, we realize that, if our robust approach had
not been adopted, i.e., if an arbitrary reward $r$ in $\cR_\cF$ had been used
for prediction, then \emph{the worst-case error might have been doubled} to
approximately $1.64$ (e.g., if $r=\widehat{r}_{M}$ and
$r^\star=\widehat{r}_{m}$).

\section{Conclusion}\label{sec: conclusion}

In this paper, we presented a unifying and quantitative framework for studying
ReL problems. We then introduced a principled approach for solving ReL problems
in general and we described \rob, an algorithm tailored to a specific subset of
ReL problems that uses our approach.

\paragraph{Limitations and future work.}

The main limitation of this work is that the proposed algorithm, \rob, does not
address all ReL problems. Therefore, we believe that future research should
focus on developing additional algorithms that adopt the robust approach to
tackle settings not covered by \rob. We hope this will facilitate solving ReL
problems in real-world scenarios.

\bibliographystyle{plain}
\bibliography{refs.bib}

\begin{thebibliography}{10}

\bibitem{abbeel2004apprenticeship}
Pieter Abbeel and Andrew~Y. Ng.
\newblock Apprenticeship learning via inverse reinforcement learning.
\newblock In {\em International Conference on Machine Learning 21 (ICML)}, 2004.

\bibitem{alimov2019chebyshev}
A.~R. Alimov and I.~G. Tsar'kov.
\newblock Chebyshev centres, jung constants, and their applications.
\newblock {\em Russian Mathematical Surveys}, 74, 2019.

\bibitem{amin2016resolving}
Kareem Amin and Satinder Singh.
\newblock Towards resolving unidentifiability in inverse reinforcement learning, 2016.

\bibitem{arora2020survey}
Saurabh Arora and Prashant Doshi.
\newblock A survey of inverse reinforcement learning: Challenges, methods and progress.
\newblock {\em Artificial Intelligence}, 297:103500, 2021.

\bibitem{boularias2011relative}
Abdeslam Boularias, Jens Kober, and Jan Peters.
\newblock Relative entropy inverse reinforcement learning.
\newblock In {\em International Conference on Artificial Intelligence and Statistics 14 (AISTATS 2011)}, pages 182--189, 2011.

\bibitem{boyd2022subgradients}
Stephen Boyd, Jhon Duchi, Mert Pilanci, and Lieven Vandenberghe.
\newblock Subgradients, 2022.
\newblock Notes for EE364b, Stanford University.

\bibitem{boyd2004convex}
Stephen Boyd and Lieven Vandenberghe.
\newblock {\em Convex optimization}.
\newblock Cambridge university press, 2004.

\bibitem{brown2020bayesianrobust}
Daniel Brown, Scott Niekum, and Marek Petrik.
\newblock Bayesian robust optimization for imitation learning.
\newblock In {\em Advances in Neural Information Processing Systems 33 (NeurIPS)}, pages 2479--2491, 2020.

\bibitem{cao2021identifiability}
Haoyang Cao, Samuel Cohen, and Lukasz Szpruch.
\newblock Identifiability in inverse reinforcement learning.
\newblock In {\em Advances in Neural Information Processing Systems 34 (NeurIPS)}, pages 12362--12373, 2021.

\bibitem{cheng2024rime}
Jie Cheng, Gang Xiong, Xingyuan Dai, Qinghai Miao, Yisheng Lv, and Fei-Yue Wang.
\newblock Rime: Robust preference-based reinforcement learning with noisy preferences.
\newblock In {\em International Conference on Machine Learning 41 (ICML)}, 2024.

\bibitem{christiano2017deep}
Paul~F Christiano, Jan Leike, Tom Brown, Miljan Martic, Shane Legg, and Dario Amodei.
\newblock Deep reinforcement learning from human preferences.
\newblock In {\em Advances in Neural Information Processing Systems 30 (NeurIPS)}, 2017.

\bibitem{dabbene2014probabilistic}
Fabrizio Dabbene, Mario Sznaier, and Roberto Tempo.
\newblock Probabilistic optimal estimation with uniformly distributed noise.
\newblock {\em IEEE Transactions on Automatic Control}, 59:2113--2127, 2014.

\bibitem{dann2015episodic}
Christoph Dann and Emma Brunskill.
\newblock Sample complexity of episodic fixed-horizon reinforcement learning.
\newblock In {\em Advances in Neural Information Processing Systems 28 (NeurIPS)}, 2015.

\bibitem{danzer1963helly}
L.~Danzer, B.~Gr{\"u}nbaum, and V.~Klee.
\newblock {\em Helly's Theorem and Its Relatives}.
\newblock Proceedings of symposia in pure mathematics: Convexity. American Mathematical Society, 1963.

\bibitem{ding2020npg}
Dongsheng Ding, Kaiqing Zhang, Tamer Basar, and Mihailo Jovanovic.
\newblock Natural policy gradient primal-dual method for constrained markov decision processes.
\newblock In {\em Advances in Neural Information Processing Systems 33 (NeurIPS)}, pages 8378--8390, 2020.

\bibitem{finn2016guided}
Chelsea Finn, Sergey Levine, and Pieter Abbeel.
\newblock Guided cost learning: Deep inverse optimal control via policy optimization.
\newblock In {\em International Conference on Machine Learning 33 (ICML)}, pages 49--58, 2016.

\bibitem{Fu2017LearningRR}
Justin Fu, Katie Luo, and Sergey Levine.
\newblock Learning robust rewards with adversarial inverse reinforcement learning.
\newblock In {\em International Conference on Learning Representations 5 (ICLR)}, 2017.

\bibitem{haarnoja2017rldeepenergypolicies}
Tuomas Haarnoja, Haoran Tang, Pieter Abbeel, and Sergey Levine.
\newblock Reinforcement learning with deep energy-based policies.
\newblock In {\em International Conference on Machine Learning 34 (ICML)}, volume~70, pages 1352--1361, 2017.

\bibitem{hadfield2017inverse}
Dylan Hadfield-Menell, Smitha Milli, Pieter Abbeel, Stuart~J Russell, and Anca Dragan.
\newblock Inverse reward design.
\newblock In {\em Advances in Neural Information Processing Systems 30 (NeurIPS)}, 2017.

\bibitem{hadfield2016cirl}
Dylan Hadfield-Menell, Stuart~J Russell, Pieter Abbeel, and Anca Dragan.
\newblock Cooperative inverse reinforcement learning.
\newblock In {\em Advances in Neural Information Processing Systems 29 (NeurIPS)}, 2016.

\bibitem{ho2016generativeadversarialimitationlearning}
Jonathan Ho and Stefano Ermon.
\newblock Generative adversarial imitation learning.
\newblock In {\em Advances in Neural Information Processing Systems 29 (NeurIPS)}, 2016.

\bibitem{ho2016modelfreeimitationlearningpolicy}
Jonathan Ho, Jayesh~K. Gupta, and Stefano Ermon.
\newblock Model-free imitation learning with policy optimization, 2016.

\bibitem{huang2018learning}
Jessie Huang, Fa~Wu, Doina Precup, and Yang Cai.
\newblock Learning safe policies with expert guidance.
\newblock In {\em Advances in Neural Information Processing Systems 31 (NeurIPS)}, 2018.

\bibitem{ibarz2018rewardlearning}
Borja Ibarz, Jan Leike, Tobias Pohlen, Geoffrey Irving, Shane Legg, and Dario Amodei.
\newblock Reward learning from human preferences and demonstrations in atari.
\newblock In {\em Advances in Neural Information Processing Systems 32 (NeurIPS)}, pages 8022--8034, 2018.

\bibitem{jeon2020rewardrational}
Hong~Jun Jeon, Smitha Milli, and Anca Dragan.
\newblock Reward-rational (implicit) choice: A unifying formalism for reward learning.
\newblock In {\em Advances in Neural Information Processing Systems 33 (NeurIPS)}, pages 4415--4426, 2020.

\bibitem{jin2020RFE}
Chi Jin, Akshay Krishnamurthy, Max Simchowitz, and Tiancheng Yu.
\newblock Reward-free exploration for reinforcement learning.
\newblock In {\em International Conference on Machine Learning 37 (ICML)}, volume 119, pages 4870--4879, 2020.

\bibitem{jin2020provablyefficient}
Chi Jin, Zhuoran Yang, Zhaoran Wang, and Michael~I Jordan.
\newblock Provably efficient reinforcement learning with linear function approximation.
\newblock In {\em Conference on Learning Theory 33 (COLT)}, volume 125, pages 2137--2143, 2020.

\bibitem{jung1901ueber}
Heinrich Jung.
\newblock Ueber die kleinste kugel, die eine räumliche figur einschliesst.
\newblock {\em Journal für die reine und angewandte Mathematik}, 123:241--257, 1901.

\bibitem{kakade2003sample}
Sham~Machandranath Kakade.
\newblock {\em On the sample complexity of reinforcement learning}.
\newblock University of London, University College London (United Kingdom), 2003.

\bibitem{kaufmann2024surveyreinforcementlearninghuman}
Timo Kaufmann, Paul Weng, Viktor Bengs, and Eyke Hüllermeier.
\newblock A survey of reinforcement learning from human feedback, 2024.

\bibitem{kim2021rewardidentification}
Kuno Kim, Shivam Garg, Kirankumar Shiragur, and Stefano Ermon.
\newblock Reward identification in inverse reinforcement learning.
\newblock In {\em International Conference on Machine Learning 38 (ICML)}, pages 5496--5505, 2021.

\bibitem{lacotte2019riskil}
Jonathan Lacotte, Mohammad Ghavamzadeh, Yinlam Chow, and Marco Pavone.
\newblock Risk-sensitive generative adversarial imitation learning.
\newblock In {\em International Conference on Artificial Intelligence and Statistics 22 (AISTATS)}, volume~89, pages 2154--2163, 2019.

\bibitem{lang2024aideceive}
Leon Lang, Davis Foote, Stuart Russell, Anca Dragan, Erik Jenner, and Scott Emmons.
\newblock When your ais deceive you: Challenges of partial observability in reinforcement learning from human feedback.
\newblock In {\em Advances in Neural Information Processing Systems 37 (NeurIPS)}, pages 93240--93299, 2024.

\bibitem{lazzati2024utility}
Filippo Lazzati and Alberto~Maria Metelli.
\newblock Learning utilities from demonstrations in markov decision processes, 2024.

\bibitem{lazzati2024compatibility}
Filippo Lazzati, Mirco Mutti, and Alberto~Maria Metelli.
\newblock How does inverse rl scale to large state spaces? a provably efficient approach.
\newblock In {\em Advances in Neural Information Processing Systems 37 (NeurIPS)}, pages 54820--54871, 2024.

\bibitem{lazzati2024offline}
Filippo Lazzati, Mirco Mutti, and Alberto~Maria Metelli.
\newblock Offline inverse rl: New solution concepts and provably efficient algorithms.
\newblock In {\em International Conference on Machine Learning 41 (ICML)}, 2024.

\bibitem{lindner2022active}
David Lindner, Andreas Krause, and Giorgia Ramponi.
\newblock Active exploration for inverse reinforcement learning.
\newblock In {\em Advances in Neural Information Processing Systems 35 (NeurIPS)}, pages 5843--5853, 2022.

\bibitem{lopes2009activelearning}
Manuel Lopes, Francisco Melo, and Luis Montesano.
\newblock Active learning for reward estimation in inverse reinforcement learning.
\newblock In {\em Machine Learning and Knowledge Discovery in Databases (ECML PKDD)}, pages 31--46, 2009.

\bibitem{menard2021fast}
Pierre Menard, Omar~Darwiche Domingues, Anders Jonsson, Emilie Kaufmann, Edouard Leurent, and Michal Valko.
\newblock Fast active learning for pure exploration in reinforcement learning.
\newblock In {\em International Conference on Machine Learning 38 (ICML)}, volume 139, pages 7599--7608, 2021.

\bibitem{metelli2023towards}
Alberto~Maria Metelli, Filippo Lazzati, and Marcello Restelli.
\newblock Towards theoretical understanding of inverse reinforcement learning.
\newblock In {\em International Conference on Machine Learning 40 (ICML)}, pages 24555--24591, 2023.

\bibitem{metelli2021provably}
Alberto~Maria Metelli, Giorgia Ramponi, Alessandro Concetti, and Marcello Restelli.
\newblock Provably efficient learning of transferable rewards.
\newblock In {\em International Conference on Machine Learning 38 (ICML)}, volume 139, pages 7665--7676, 2021.

\bibitem{nedic2009subgradient}
A.~Nedi\'{c} and A.~Ozdaglar.
\newblock Subgradient methods for saddle-point problems.
\newblock {\em Journal of Optimization Theory and Applications}, 142:205--228, 2009.

\bibitem{ng2000algorithms}
Andrew~Y. Ng and Stuart~J. Russell.
\newblock Algorithms for inverse reinforcement learning.
\newblock In {\em International Conference on Machine Learning 17 (ICML 2000)}, pages 663--670, 2000.

\bibitem{osa2018IL}
Takayuki Osa, Joni Pajarinen, Gerhard Neumann, J.~Andrew Bagnell, Pieter Abbeel, and Jan Peters.
\newblock An algorithmic perspective on imitation learning.
\newblock {\em Foundations and Trends® in Robotics}, 7(1-2):1--179, 2018.

\bibitem{poiani2024inversereinforcementlearningsuboptimal}
Riccardo Poiani, Gabriele Curti, Alberto~Maria Metelli, and Marcello Restelli.
\newblock Inverse reinforcement learning with sub-optimal experts, 2024.

\bibitem{puterman1994markov}
Martin~Lee Puterman.
\newblock {\em {M}arkov Decision Processes: Discrete Stochastic Dynamic Programming}.
\newblock John Wiley \& Sons, Inc., 1994.

\bibitem{Ramachandran2007birl}
Deepak Ramachandran and Eyal Amir.
\newblock Bayesian inverse reinforcement learning.
\newblock In {\em International Joint Conference on Artifical Intelligence 20 (IJCAI)}, pages 2586--2591, 2007.

\bibitem{ratliff2006maximum}
Nathan~D. Ratliff, J.~Andrew Bagnell, and Martin~A. Zinkevich.
\newblock Maximum margin planning.
\newblock In {\em International Conference on Machine Learning 23 (ICML 2006)}, pages 729--736, 2006.

\bibitem{rolland2022identifiability}
Paul Rolland, Luca Viano, Norman Sch\"{u}rhoff, Boris Nikolov, and Volkan Cevher.
\newblock Identifiability and generalizability from multiple experts in inverse reinforcement learning.
\newblock In {\em Advances in Neural Information Processing Systems 35 (NeurIPS)}, pages 550--564, 2022.

\bibitem{russell1998learning}
Stuart Russell.
\newblock Learning agents for uncertain environments (extended abstract).
\newblock In {\em Proceedings of the Eleventh Annual Conference on Computational Learning Theory 11 (COLT)}, pages 101--103, 1998.

\bibitem{schlaginhaufen2023identifiability}
Andreas Schlaginhaufen and Maryam Kamgarpour.
\newblock Identifiability and generalizability in constrained inverse reinforcement learning.
\newblock In {\em International Conference on Machine Learning 40 (ICML)}, volume 202, pages 30224--30251, 2023.

\bibitem{schlaginhaufen2024transferability}
Andreas Schlaginhaufen and Maryam Kamgarpour.
\newblock Towards the transferability of rewards recovered via regularized inverse reinforcement learning, 2024.

\bibitem{scott1991extension}
P.R. Scott.
\newblock An extension of jung's theorem.
\newblock {\em The Quarterly Journal of Mathematics}, 42:209--212, 1991.

\bibitem{Shani2021OnlineAL}
Lior Shani, Tom Zahavy, and Shie Mannor.
\newblock Online apprenticeship learning.
\newblock In {\em AAAI Conference on Artificial Intelligence 36 (AAAI)}, pages 8240--8248, 2022.

\bibitem{skalse2024quantifyingsensitivityinversereinforcement}
Joar Skalse and Alessandro Abate.
\newblock Quantifying the sensitivity of inverse reinforcement learning to misspecification.
\newblock In {\em International Conference on Learning Representations 12 (ICLR)}, 2024.

\bibitem{skalse2023invariancepolicyoptimisationpartial}
Joar Skalse, Matthew Farrugia-Roberts, Stuart Russell, Alessandro Abate, and Adam Gleave.
\newblock Invariance in policy optimisation and partial identifiability in reward learning.
\newblock In {\em International Conference on Machine Learning 40 (ICML)}, volume 202, pages 32033--32058, 2023.

\bibitem{syed2007game}
Umar Syed and Robert~E Schapire.
\newblock A game-theoretic approach to apprenticeship learning.
\newblock In {\em Advances in Neural Information Processing System 20 (NeurIPS)}, 2007.

\bibitem{viano2021robust}
Luca Viano, Yu-Ting Huang, Parameswaran Kamalaruban, Adrian Weller, and Volkan Cevher.
\newblock Robust inverse reinforcement learning under transition dynamics mismatch.
\newblock In {\em Advances in Neural Information Processing Systems 34 (NeurIPS)}, pages 25917--25931, 2021.

\bibitem{wiesemann2013robust}
Wolfram Wiesemann, Daniel Kuhn, and Breç Rustem.
\newblock Robust markov decision processes.
\newblock {\em Mathematics of Operations Research}, 38(1):153--183, 2013.

\bibitem{wirth2017pbrl}
Christian Wirth, Riad Akrour, Gerhard Neumann, and Johannes F\"{u}rnkranz.
\newblock A survey of preference-based reinforcement learning methods.
\newblock {\em Journal of Machine Learning Research}, 18:4945--4990, 2017.

\bibitem{wulfmeier2016maximumentropydeepirl}
Markus Wulfmeier, Peter Ondruska, and Ingmar Posner.
\newblock Maximum entropy deep inverse reinforcement learning, 2016.

\bibitem{zhao2023inverse}
Lei Zhao, Mengdi Wang, and Yu~Bai.
\newblock Is inverse reinforcement learning harder than standard reinforcement learning?
\newblock In {\em International Conference on Machine Learning 41 (ICML)}, 2024.

\bibitem{zhu2023principledRLHF}
Banghua Zhu, Michael Jordan, and Jiantao Jiao.
\newblock Principled reinforcement learning with human feedback from pairwise or k-wise comparisons.
\newblock In {\em International Conference on Machine Learning 40 (ICML)}, volume 202, pages 43037--43067, 2023.

\bibitem{Ziebart2010ModelingPA}
Brian~D. Ziebart.
\newblock Modeling purposeful adaptive behavior with the principle of maximum causal entropy, 2010.

\bibitem{ziebart2008maxent}
Brian~D. Ziebart, Andrew Maas, J.~Andrew Bagnell, and Anind~K. Dey.
\newblock Maximum entropy inverse reinforcement learning.
\newblock In {\em AAAI Conference on Artificial Intelligence 23 (AAAI)}, volume~3, pages 1433--1438, 2008.

\end{thebibliography}

\appendix

\newpage

\section{Additional Related Work}\label{apx: additional rel work}

In this appendix, we provide a comprehensive presentation of the main related
work of this paper.
We group the related work into four categories: papers that ``ignore'' partial
identifiability because it does not create issues in the applications that they
consider, papers that explicitly address partial identifiability by looking for
sufficient conditions that guarantee that it does not create issues, papers that
aim to estimate the feasible set in a provably-efficient manner, and a
miscellaneous of other papers.
Next, in Appendix \ref{apx: note IL}, we mention that some IL algorithms can be
seen as adopting our robust approach.
Finally, in Appendix \ref{apx: comparison with skalse}, we provide a thorough
comparison of our \emph{quantitative} framework introduced in Section \ref{sec:
rel problem formulation} with the \emph{qualitative} framework provided by
\cite{skalse2023invariancepolicyoptimisationpartial}.

\paragraph{Works that ``ignore'' partial identifiability.}

The partial identifiability in ReL is a topic that dates back to the seminal
works of \cite{russell1998learning,ng2000algorithms}. However, in many existing
ReL works
\cite{ng2000algorithms,ratliff2006maximum,ziebart2008maxent,Ziebart2010ModelingPA,boularias2011relative,wulfmeier2016maximumentropydeepirl,finn2016guided,christiano2017deep},
partial identifiability is not considered. The reason is that these works
consider ReL problems $\tuple{\cF,g}$ where partial identifiability does not
create issues, because all the rewards in the feasible set $\cR_\cF$ behave as
the unknown target reward $r^\star$ w.r.t. application $g$. 
Many of the mentioned papers are IRL works where the application $g$ is IL, and
the set of feedback $\cF=\{f\}$ contains a single demonstrations feedback $f$
able to provide enough information for IL.
In particular, \cite{ng2000algorithms,ratliff2006maximum} introduce the
heuristic of ``margin maximization'' to extract a reward from the feasible set
that facilitates the IL task, and then perform planning on it to find the
imitating policy to deploy. \cite{ziebart2008maxent,Ziebart2010ModelingPA}
introduce new ReL feedback consisting in demonstrations collected from a maximum
(causal) entropy policy and present an algorithm that recovers an arbitrary
reward from the corresponding feasible set (the authors let the inner
optimization algorithm break the ties).
\cite{boularias2011relative,wulfmeier2016maximumentropydeepirl,finn2016guided}
adopt the model of \cite{ziebart2008maxent} and extend its algorithm by,
respectively, adopting a model-free approach, using neural networks for
parameterizing the reward function to learn, and approximating the objective to
speed-up the learning process under unknown dynamics in high-dimensional
continuous systems.
Finally, we mention \cite{christiano2017deep} that, in the context of PbRL,
adopts a maximum likelihood approach to recover an arbitrary reward from the
corresponding feasible set (also \cite{christiano2017deep} let the inner
optimization algorithm break the ties) from finite data.

\paragraph{Works on partial identifiability.}

When the aforementioned algorithms have been applied to other applications
(i.e., by using the corresponding recovered reward function), then partial
identifiability turned out to be a problem, as observed empirically for instance
by \cite{finn2016guided}, who tried to transfer the recovered reward to a new
environment. Consequently, many works addressing the identifiability problem
have appeared
\cite{amin2016resolving,Fu2017LearningRR,cao2021identifiability,kim2021rewardidentification,viano2021robust,rolland2022identifiability,skalse2023invariancepolicyoptimisationpartial,schlaginhaufen2023identifiability,schlaginhaufen2024transferability,lang2024aideceive}.
In particular, \cite{amin2016resolving} consider access to demonstrations
feedback from multiple environments to reduce the size of the feasible set.
Similarly, \cite{cao2021identifiability} consider demonstrations feedback from
multiple environments or using multiple discount factors to improve the
identifiability of the target reward, while \cite{kim2021rewardidentification}
focuses on the properties of the considered MDPs.
\cite{Fu2017LearningRR} assume that the target reward is state-only and that the
environment satisfies certain additional properties to guarantee that the
feasible set contains only rewards that can be transferred to new environments
in the same way as the true target reward. \cite{viano2021robust} propose a
robust approach for the reward transfer application that is rather different
from ours. Specifically, the authors propose to deploy the policy that minimizes
the loss w.r.t. the worst possible transition model in a certain rectangular
uncertainty set \cite{wiesemann2013robust} centered in the transition model of
the target environment. \cite{rolland2022identifiability} studies the
identifiability problem using as feedback demonstrations from multiple experts,
and focuses on the application of reward transfer.
\cite{schlaginhaufen2023identifiability,schlaginhaufen2024transferability}
provide a similar study but use, respectively, different kinds of feedback and a
different method for studying the similarity of the environments.
\cite{skalse2023invariancepolicyoptimisationpartial} present a general study of
the identifiability problem in both IRL and PbRL, and we provide a detailed
analysis of this work in Appendix \ref{apx: comparison with skalse}. Finally,
\cite{lang2024aideceive} study the identifiability problem in PbRL assuming
that the feedback is based only on partial observations of the environment.

\paragraph{Works that estimate the feasible set.}

A recent line of research has focused on studying the sample complexity of
estimating the feasible set
\cite{metelli2021provably,lindner2022active,metelli2023towards,zhao2023inverse,lazzati2024offline,lazzati2024compatibility,poiani2024inversereinforcementlearningsuboptimal}.
All these papers consider the IRL problem settings where the feedback consists
in demonstrations from an optimal or $\epsilon$-optimal
\cite{poiani2024inversereinforcementlearningsuboptimal} expert's policy.
Specifically, \cite{metelli2021provably} is the seminal work in this context. It
assumes availability of a generative sampling model of the environment in the
tabular setting, and provides an upper bound to the sample complexity for
estimating the feasible set.
This result is completed by \cite{metelli2023towards} who present a lower bound
in a similar setting.
\cite{lindner2022active} and \cite{zhao2023inverse} focus
on a forward sampling model in tabular MDPs, and provide upper bounds to the
sample complexity.
\cite{lazzati2024compatibility} extend these theoretical results to the Linear
MDPs \cite{jin2020provablyefficient} setting.
Instead, \cite{lazzati2024compatibility} and \cite{zhao2023inverse} consider the
offline setting where only batch datasets of demonstrations are available, and
provide results under some concentrability assumptions.
\cite{poiani2024inversereinforcementlearningsuboptimal} present both lower and
upper bounds for estimating the feasible set in tabular settings assuming a
suboptimal expert.

\paragraph{Others.}

\cite{Ramachandran2007birl} adopt a Bayesian approach in the context of IRL. It
assumes a prior on the set of rewards $\fR$, and assumes that the feedback
provide a likelihood. The authors focus on two specific applications, i.e., IL
and learning $r^\star$, and propose to minimize the \emph{expected} loss. In
some way, \cite{Ramachandran2007birl} can be seen as deploying the object $x$
that minimizes the \emph{average} loss w.r.t. some known distribution over
$r^\star$ (instead of our \emph{worst-case} loss), and
\cite{brown2020bayesianrobust} that generalizes this approach using risk
measures.
We mention also \cite{zhu2023principledRLHF}, that in the context of PbRL make a
``pessimistic'' choice of policy given a ``confidence'' set of rewards. From a
high-level perspective, their pessimistic approach is very close to our robust
approach. However, their proposal aims to address estimation issues, while ours
concerns identifiability issues.
\cite{jeon2020rewardrational} introduce a framework for combining multiple and
various ReL feedback, that are all modeled using a Boltzmann distribution
constructed indirectly through the target reward.
\cite{cheng2024rime} adopt a robust approach to PbRL, but the
considered robustness aims to address a noisy feedback, and not partial
identifiability.
Finally, we mention \cite{huang2018learning} that adopt a robust approach for
policy optimization using multiple demonstrations.

\subsection{A Note on IL Works}\label{apx: note IL}

In the IL literature \cite{osa2018IL}, there are some algorithms that can be
interpreted as adopting the robust approach presented in Section \ref{sec:
robustness}. Specifically, such algorithms aim to solve the following
optimization problem:
\begin{align}\label{eq: il related works}
  \pi\in \argmin\limits_{\pi'\in\Pi}\max\limits_{r\in\cR}
  \Bigr{
  J^{\overline{\pi}}(r;p) - J^{\pi'}(r;p)},
\end{align}
where $\overline{\pi}$ is the expert's policy and $\cR$ is some set of reward
functions \cite{ho2016modelfreeimitationlearningpolicy}, e.g., rewards linear
\cite{abbeel2004apprenticeship} or convex \cite{syed2007game} in some known
feature map $\phi:\SA\to\RR^d$.
If we look at Eq. \eqref{eq: il related works} through the lens of our
framework, then $\pi$ can be seen as the \emph{robust} choice of policy that
minimizes the worst-case loss $\cL_g=J^{\overline{\pi}}(r;p) - J^{\pi}(r;p)$
over the feasible set $\cR_\cF=\cR$, and $\cX_g=\Pi$.

\subsection{Comparison with \cite{skalse2023invariancepolicyoptimisationpartial}}
\label{apx: comparison with skalse}

The work of \cite{skalse2023invariancepolicyoptimisationpartial} deserves its
own section because it introduces a ReL framework that shares similarities with
ours. Thus, it is important to remark the differences.

\emph{Similarly} to our framework, also the framework of
\cite{skalse2023invariancepolicyoptimisationpartial} provides a model of
feedback $f$ that always corresponds to a feasible set $\cR_f\subseteq\fR$.
The \emph{difference} lies in how they model the applications $g$. Specifically,
\cite{skalse2023invariancepolicyoptimisationpartial} model an application $g$ as
a function $g:\fR\to\cX_g$, and, for every reward $r\in\fR$, they let $g(r)$ be
the object that should be deployed in case reward $r$ were the target reward
$r^\star$. Crucially, this model is \emph{qualitative}, namely, it does not
contemplate the possibility that another object $x\neq g(r^\star)$ can be
deployed, and so the authors require that, for every pair of rewards
$r,r'\in\cR_f$ in the feasible set, it must hold that $g(r)=g(r')$ (since
$r^\star\in\cR_f$, this guarantees that no object other than $g(r^\star)$ can be
deployed). Instead, in our \emph{quantitative} framework, we allow for this to
happen as long as the loss $\cL_g(r^\star,x)$ of $x$ is sufficiently small.

In other words, the framework of
\cite{skalse2023invariancepolicyoptimisationpartial} and ours give birth to
different \emph{sufficient conditions} on whether a set of feedback $\cF$ can be
used for carrying out an application $g$. In particular, the framework of
\cite{skalse2023invariancepolicyoptimisationpartial} requires that:
\begin{align}\label{eq: suff cond skalse}
  \cR_\cF\subseteq g^{-1}\bigr{g(r^\star)},
\end{align}
i.e., that all the rewards in the feasible set $\cR_\cF$ prescribe the same
object $x=g(r^\star)$, the object prescribed also by $r^\star$. Clearly, this is
qualitative, and if we add a single reward $r'$ with $g(r')\neq g(r^\star)$ to
$\cR_\cF$, then the framework of
\cite{skalse2023invariancepolicyoptimisationpartial} concludes that we cannot
use $\cF$ for $g$ anymore.
Our framework, instead, provides a \emph{more general} quantitative sufficient
condition. Specifically, for some threshold $\Delta\ge0$, our framework requires
that there exists at least an item $x\in\cX_g$ such that:
% , for every reward in the feasible set:
\begin{align}\label{eq: suff cond ours}
  \cL_g(r,x)\le\Delta \qquad \forall r\in\cR_\cF.
\end{align}
Intuitively, if we set $\Delta=0$, then we recover Eq. \eqref{eq: suff cond
skalse}, because we would be requiring that all the rewards $r$ in the feasible
set have the same minima of $\cL_g(r,\cdot)$, and so, that they all prescribe
the same item. Instead, by enforcing Eq. \eqref{eq: suff cond ours}, we are
basically asking that it is possible to find at least one item (i.e., $x$ in Eq.
\ref{eq: suff cond ours}), that suffers from a true loss upper bounded by
$\Delta$, i.e., Eq. \eqref{eq: suff cond ours} guarantees that:
\begin{align*}
  \cL_g(r^\star,x)\le\Delta,
\end{align*}
since $r^\star\in\cR_\cF$.

A simple example can help in recognizing the advantages of our framework.
Consider an MDP with a single state $s$, three actions $a_1,a_2,a_3$, and
horizon $H=1$. Assume that we have received some feedback $\cF$ that results in
the feasible set $\cR_\cF=\{r_1,r_2\}$ containing only two rewards, and that the
application $g$ consists in outputting the optimal policy under $r^\star$.
Then, \cite{skalse2023invariancepolicyoptimisationpartial} tell us that we can
solve this problem \emph{if} the optimal policies induced by $r_1$ and $r_2$
coincide (in other words, if it is irrelevant whether $r^\star$ is $r_1$ or
$r_2$). However, what about the situation in which, e.g.:
\begin{align*}
  r_1(s,a)=\begin{cases}
    1 &\text{if }a=a_1\\
    0 &\text{if }a=a_2\\
    0 &\text{if }a=a_3\\
  \end{cases},
  \qquad 
  r_2(s,a)=\begin{cases}
    1-\Delta &\text{if }a=a_1\\
    1 &\text{if }a=a_2\\
    0 &\text{if }a=a_3\\
  \end{cases}\qquad?
\end{align*}
Clearly, $r_1$ makes $a_1$ optimal, while $r_2$ makes $a_2$ optimal. However, as
long as $\Delta$ is small, $a_1$ is an almost-optimal policy also for $r_2$,
thus, intuitively, it ``will not be a problem for $g$ if we deploy $a_1$ even if
$r^\star=r_2$''.
Note that our framework allows for this situation, by telling us that, as long
as $\Delta$ is sufficiently small, then partial identifiability is not an issue
for this problem, and we can solve it (by deploying the policy that plays $a_1$,
or better, based on the robust approach presented in Section \ref{sec:
robustness}, by deploying the policy that plays a mixture of $a_1,a_2$ with
equal probabilities).
In addition, note that $\Delta$ corresponds to the uninformativeness
$\cI_{\cF,g}$, and, thus, can be computed.

To sum up, our framework has the advantage of allowing for quantitative
considerations, while the framework of
\cite{skalse2023invariancepolicyoptimisationpartial} cannot.

\section{More on the \emph{Policy Comparison} Feedback}\label{apx: examples of
feedback} 

In this appendix, we provide additional details on the new \emph{policy
comparison} feedback (see Table \ref{table: feedback}) introduced in Section
\ref{sec: feedback}.

In the IRL literature, we are given a dataset of demonstrations, i.e.,
state-action trajectories $\cD=\{\omega^i\}_i\sim d^{\pi^E}$ collected by
executing some (expert) policy $\pi^E$. This setting models the situation in
which we observe an agent doing a task many times, and the assumption is that
the agent behavior is guided by the reward $r^\star$.

In the PbRL literature, we are given two trajectories and a preference signal
between them. This setting models the situation in which an agent expresses a
preference between two trajectories, i.e., the agent observes two trajectories
and says which one it prefers. In doing so, the choice is guided by the target
reward $r^\star$.

The \emph{policy comparison} feedback introduced in Section \ref{sec: feedback}
concerns with the scenario in which we have an expert agent (i.e., the agent
with $r^\star$ in mind) that expresses a preference (or, more generally, any
statement) between two datasets of demonstrations collected by (but not
necessarily) two other agents. Simply put, it can be seen as a mix of the
demonstrations and the trajectory comparisons feedback types.

For example, assume that we observe two agents $A_1,A_2$ demonstrating the task
of driving a car, and they provide two datasets of demonstrations
$\cD_1=\{\omega^1_i\}_i\sim d^{\pi^1},\cD_2= \{\omega^2_i\}_i\sim d^{\pi^2}$
where $\pi^1$ is the policy of $A_1$ and $\pi^2$ is the policy of $A_2$. Assume
that we do not want to learn the reward function that guides the behavior of
$A_1$ nor $A_2$, but we aim to learn the reward of a third agent $E$ (i.e.,
$r^\star$ is the reward of $E$). Thus, we can ask $E$ to provide us with a
preference signal (or, more generally, any statement between the demonstrated
olicies $\pi^1,\pi^2$) between the behavior of $A_1,A_2$. For instance, we can
show to $E$ the video of how $A_1,A_2$ drive, and we can ask him who drives
better. Then, we can use this feedback to infer $r^\star$ and carry out any
downstream application $g$.

\section{When the Application is to Deploy a Reward}
\label{apx: object deploy reward}

In this appendix, we provide additional insights on the class of ReL problems in
which the application $g$ requires the deployment of a reward function, i.e.,
$\cX_g = \fR$. Before that, we need some additional notation.

\paragraph{Additional notation.}

Let $\cX$ be a set, and let $d:\cX\times\cX\to\RR_+$ be a premetric in $\cX$.
If, in addition, $d$ satisfies $(i)$ $d(x,y)=0$ if and only if $x=y$ (identity
of indiscernibles), $(ii)$ $d(x,y)=d(y,x) \;\forall x,y\in\cX$ (simmetry),
$(iii)$ $d(x,y)\le d(x,z)+d(z,y)\;\forall x,y,z\in\cX$ (triangle inequality),
then we say that $d$ is a \emph{metric}.
Let $d$ be a premetric in a set $\cX$. The \emph{Chebyshev center}
\cite{alimov2019chebyshev} of a set $\cY\subseteq\cX$ is any of the points in
$\argmin_{x\in\cX}\max_{y\in\cY}d(x,y)$.
% It might be not unique, and it might not belong to $\cY$.
The \emph{Chebyshev radius} of $\cY$ is defined as
$\min_{x\in\cX}\max_{y\in\cY}d(x,y)$, while the diameter of $\cY$ is
$\max_{x,y\in\cY}d(x,y)$.
Moreover, given a reward $r$ and an environment with dynamics $p$, we denote by
$\Pi^*(r;p)\coloneqq\argmax_\pi J^\pi(r;p)$ the set of optimal policies under
$r$ in $p$.

\paragraph{Setting.}

There are situations in which we are interested in using some given feedback
$\cF$ for learning a reward function $\cX_g=\fR$. In these cases, the loss
$\cL_g:\fR\times\fR\to\RR_+$ can be seen as a premetric in the set of rewards
$\fR$. These settings can happen for various reasons, like:
\begin{itemize}
  \item The ultimate goal is to learn $r^\star$. Then, we can use as loss some
  ``standard'' metric in the set of rewards, like that induced by the 2-norm
  (see \cite{Ramachandran2007birl}):
  \begin{align}\label{eq: norm 2 rewards Lg}
    \cL_2(r,r')\coloneqq \|r-r'\|_2,
\end{align}
or by the max norm:
\begin{align}\label{eq: norm max rewards Lg}
  \cL_\infty(r,r')\coloneqq \|r-r'\|_\infty\coloneqq \max\limits_{s,a,h}|r_h(s,a)-r_h'(s,a)|.
\end{align}
\item The true application $g$ is unknown or revealed a posteriori, and we just
know that it belongs to a given set of applications. For instance, in case we
know that we want to use $r^\star$ to assess the return of some trajectory
$\omega'$, but we do not know $\omega'$ yet, then solving all the ReL problems
corresponding to every possible trajectory may be inefficient (there is an
exponential number of trajectories). Thus, we can simply learn a reward and then
use it at a later time when we will observe $\omega'$. In such setting, we can
use as loss:
\begin{align}\label{eq: max trajectories Lg}
  \cL_{\text{TR}}(r,r')\coloneqq \max_{\omega\in\Omega}
  \big|G(\omega;r)-G(\omega;r')\big|,
\end{align}
which quantifies the worst-case error among all possible trajectories.
\item For some reason, we aim to pass through a reward function to solve the
true application. For instance, in case the application is planning in some MDP
with transition model $p$ (see Table \ref{table: application}), then we can add
an intermediate step of passing through a reward function by using as loss:
\begin{align}\label{eq: loss planning}
  \cL_{\text{PL},p}(r,r')\coloneqq
  J^*(r';p)-\min\limits_{\pi\in\Pi^*(r;p)}J^{\pi}(r';p),
\end{align}
which considers, among all the optimal policies of $r$, the one that maximizes
the suboptimality under $r'$. Another similar example is the application of
computing the greedy policy
$\pi^{\text{gr}}(\cdot;r)\in\argmax_{a\in\cA}r(\cdot,a)$ in a stationary
environment \cite{zhu2023principledRLHF}. Here, for some distribution
$\rho\in\Delta^\cS$, we can use:
\begin{align}\label{eq: loss greedy}
  \cL_{\text{GR},\rho}(r,r')\coloneqq \E_{s\sim\rho}\big[
    \max_{a\in\cA} r'(s,a)-r'(s,
  \pi^{\text{gr}}(s;r))\big].
\end{align}
\end{itemize}

Note that we introduced the notion of premetric because not all of the
introduced dissimilarity functions are metrics:
\begin{restatable}{prop}{applicationpremetric}\label{prop: application pre
  metric} For any $p,\rho$, then
  $\cL_{\text{PL},p},\cL_{\text{CO},p},\cL_{\text{GR},\rho}$ are premetrics. Moreover,
  there are some $p,\rho$ such that:
  \begin{itemize}[noitemsep, leftmargin=*, topsep=-2pt]
    \item {\thinmuskip=1mu \medmuskip=1mu \thickmuskip=1mu
    $\cL_{\text{PL},p},\cL_{\text{CO},p},\cL_{\text{GR},\rho}$} lack the identity
    of indiscernibles;
    \item $\cL_{\text{PL},p},\cL_{\text{GR},\rho}$ are not simmetric;
    \item $\cL_{\text{PL},p},\cL_{\text{GR},\rho}$ lack the triangle
    inequality.
  \end{itemize}
  \end{restatable}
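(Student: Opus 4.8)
The plan is to prove Proposition~\ref{prop: application pre metric} in two parts: first the universal claim that $\cL_{\text{PL},p},\cL_{\text{CO},p},\cL_{\text{GR},\rho}$ are premetrics for \emph{all} $p,\rho$, and then the existential claim that suitable $p,\rho$ (and explicit small MDPs) witness the failure of identity of indiscernibles, symmetry, and the triangle inequality. For the premetric part, I would simply verify $\cL(r,r)=0$ for each loss. For $\cL_{\text{PL},p}$, when $r'=r$ we have $J^*(r;p)-\min_{\pi\in\Pi^*(r;p)}J^\pi(r;p)=J^*(r;p)-J^*(r;p)=0$, since every $\pi\in\Pi^*(r;p)$ is by definition optimal and hence attains $J^*(r;p)$. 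The same reasoning handles $\cL_{\text{CO},p}$ (presumably an analogous ``constrained'' or ``cost'' planning variant defined in the appendix). For $\cL_{\text{GR},\rho}$, setting $r'=r$ gives $\E_{s\sim\rho}[\max_a r(s,a)-r(s,\pi^{\text{gr}}(s;r))]$, which vanishes because $\pi^{\text{gr}}(s;r)\in\argmax_a r(s,a)$ by construction, so the bracketed term is identically zero. Each of these is a one-line check, so the premetric half is routine.

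For the failures, the strategy is to exhibit minimal counterexamples, ideally all in a single simple MDP (one state, a few actions, horizon $H=1$, stationary), mirroring the small example already used in Appendix~\ref{apx: comparison with skalse}. For the \emph{identity of indiscernibles}, I would pick two distinct rewards $r\neq r'$ that nonetheless induce the same optimal policies / greedy actions, so that the loss is zero despite $r\neq r'$; for instance, $r$ and $r'$ agreeing on which action is best but differing in magnitude, or differing only on a never-optimal action. For \emph{symmetry}, the key observation is that $\cL_{\text{PL},p}$ and $\cL_{\text{GR},\rho}$ evaluate one reward's optimal/greedy policy against the \emph{other} reward's value, which is an inherently asymmetric construction; I would choose $r,r'$ where $r$'s optimal action is badly suboptimal under $r'$ but $r'$'s optimal action happens to coincide with (or perform well under) $r$, forcing $\cL(r,r')\neq\cL(r',r)$. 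A two- or three-action example with carefully chosen reward values should separate the two directions.

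For the \emph{triangle inequality}, the plan is to find three rewards $r,r',r''$ with $\cL(r,r'')>\cL(r,r')+\cL(r',r'')$. The natural mechanism is that the optimal-policy map $r\mapsto\Pi^*(r;p)$ is discontinuous: small reward changes can flip the argmax, so the loss can ``jump'' across an intermediate point. I would arrange $r$ and $r'$ to share an optimal action (making $\cL(r,r')$ small), $r'$ and $r''$ to share an optimal action (making $\cL(r',r'')$ small), yet $r$'s optimal action to be strongly suboptimal under $r''$ (making $\cL(r,r'')$ large). Since each pairwise loss only ``sees'' the first argument's policy evaluated under the second, the intermediate reward $r'$ need not transfer the discrepancy, which is exactly what breaks subadditivity.

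I expect the main obstacle to be the triangle-inequality counterexample: unlike the symmetry and indiscernibility failures, which follow almost immediately from the structural asymmetry and degeneracy of the argmax, constructing three rewards that simultaneously make both summands small while keeping the direct loss large requires balancing the argmax flips against the numerical reward gaps. I would handle this by writing down explicit numerical rewards on the minimal MDP and directly computing the three losses, choosing the values so that $\Pi^*(r;p)$, $\Pi^*(r';p)$, $\Pi^*(r'';p)$ are each singletons with the desired overlap pattern. Care is also needed to confirm which losses fail which properties (the statement claims $\cL_{\text{CO},p}$ fails only identity of indiscernibles but is symmetric and satisfies the triangle inequality), so I would treat $\cL_{\text{CO},p}$ separately and verify its better-behaved structure rather than reusing the planning/greedy counterexamples verbatim.
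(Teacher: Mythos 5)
Your overall strategy --- check that each loss vanishes on the diagonal, then exhibit tiny one-state, horizon-one counterexamples --- is the same as the paper's, and your handling of the premetric property, of identity of indiscernibles for $\cL_{\text{PL},p}$ and $\cL_{\text{GR},\rho}$, and of symmetry for $\cL_{\text{PL},p}$ and $\cL_{\text{GR},\rho}$ matches the paper's proof in substance. However, there are two genuine gaps.

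First, $\cL_{\text{CO},p}$. It is not a constrained-planning variant: in the paper's proof it is the policy-comparison loss $\cL_{\text{CO},p}(r,r')=\max_\pi\big|J^{\pi}(r;p)-J^{\pi}(r';p)\big|$. For this loss, neither of your proposed indiscernibility counterexamples works: if $r'$ is a rescaling of $r$, or if $r$ and $r'$ differ only on a \emph{reachable} but never-optimal action, then the policy that maximizes the visitation probability of a state--action--stage triple where they differ certifies $\max_\pi|J^\pi(r;p)-J^\pi(r';p)|>0$, because the maximum ranges over \emph{all} policies, not just optimal or greedy ones. The paper's construction is different in kind: it chooses a transition model $p$ under which some triple $(s,a,h)$ is unreachable by every policy (i.e., $d^\pi_h(s,a)=0$ for all $\pi$), and lets $r,r'$ differ only there. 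So the $\cL_{\text{CO},p}$ case requires an idea --- unreachability --- that your plan does not contain and that your constructions cannot replace.

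Second, your triangle-inequality pattern is unrealizable as literally stated. If $\Pi^*(r;p)$, $\Pi^*(r';p)$, $\Pi^*(r'';p)$ are singletons, then ``$r$ shares an optimal action with $r'$'' and ``$r'$ shares an optimal action with $r''$'' force all three to share the same optimal action, giving $\cL_{\text{PL},p}(r,r'')=0$; if instead you allow ties at $r'$, the minimum over $\Pi^*(r';p)$ inside $\cL_{\text{PL},p}(r',r'')$ picks up the action that is bad under $r''$, and that leg becomes large rather than small. The mechanism is repairable in two ways: make the sharing approximate rather than exact (e.g., with one state, two actions, $H=1$: $r=(1,0)$, $r'=(1-\epsilon,1)$, $r''=(0,1)$ gives $\cL_{\text{PL},p}(r,r'')=1>\epsilon+0=\cL_{\text{PL},p}(r,r')+\cL_{\text{PL},p}(r',r'')$), or do what the paper does and exploit positive homogeneity: take $r''=0.5\,r$ with $\Pi^*(r;p)=\Pi^*(r'';p)=\{\pi^1\}$ and $\Pi^*(r';p)=\{\pi^2\}$, so that $\cL_{\text{PL},p}(r',r'')=0.5\,\cL_{\text{PL},p}(r',r)$ and $\cL_{\text{PL},p}(r'',r)=0$, whence $\cL_{\text{PL},p}(r',r)>\cL_{\text{PL},p}(r',r'')+\cL_{\text{PL},p}(r'',r)$. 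Your fallback of direct numerical computation would likely surface one of these fixes, but as written the overlap pattern you commit to (singleton argmaxes with pairwise sharing) is internally contradictory, and the analogous issue recurs for $\cL_{\text{GR},\rho}$.
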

  \begin{proof}
    It is immediate that, for any transition model $p\in\Delta_\SAH^\cS$ and
    distribution $\rho\in\Delta^\cS$, the distances
    $\cL_{\text{PL},p},\cL_{\text{CO},p},\cL_{\text{GR},\rho}$ are non-negative for any
    pair of rewards $r,r'\in\fR$, and also that they are all 0 when $r=r'$. Thus, we
    conclude that they are premetrics.
    
    Now, consider the \textbf{identity of indiscernibles} property defined
    earlier. Given distance $\cL_g$, it does not hold if there exist $r\neq r'$
    s.t. $\cL_g(r,r')=0$. Concerning $\cL_{\text{PL},p}$, we see that any pair
    of rewards $r,r'$ such that $\Pi^*(r;p)=\Pi^*(r';p)$ satisfies
    $\cL_{\text{PL},p}(r,r')=0$. Thus, we can take $r'$ to be, e.g., a multiple
    of $r$ to get $\cL_{\text{PL},p}(r,r')=0$. Concerning $\cL_{\text{CO},p}$,
    simply consider as $p$ a transition model for which there exists at least a
    $(s,a,h)\in\SAH$ s.t., for all $\pi\in\Delta_{\SH}^\cS$,
    $d^{\pi}_h(s,a)=0$. Then, such triple does not contribute to the
    performance of any policy, and therefore any pair of rewards $r,r'$ that
    coincide everywhere except for $s,a,h$ satisfy $\cL_{\text{CO},p}(r,r')=0$.
    Concerning $\cL_{\text{GR},\rho}$, we can take $r,r'$ for which there exists
    a state $s\in\cS$ where $\max_{a\in\cA}r(s,a)=\max_{a\in\cA}r'(s,a)$, but
    the maximum is achieved by different actions
    $\argmax_{a\in\cA}r(s,a)\neq\argmax_{a\in\cA}r'(s,a)$. Clearly, $r\neq r'$,
    but $\cL_{\text{GR},\rho}=0$.
    
    Consider now the \textbf{simmetry} property. For $\cL_{\text{PL},p}$, take two rewards
    $r,r'$ such that $\Pi^*(r;p)=\{\pi^1\},\Pi^*(r';p)=\{\pi^1,\pi^2\}$ for some
    policies $\pi^1,\pi^2$. Then:
    %\am{Non bisogna supporre $J^{\pi^2}(r) < J^{\pi^1}(r)$?}
    %\fl{non è già implicato dal fatto che $\pi^2\notin\Pi^*(r;p)$?}
    %\am{Non bisogna supporre $J^{\pi^2}(r) < J^{\pi^1}(r)$?}
    %\fl{non è già implicato dal fatto che $\pi^2\notin\Pi^*(r;p)$?}
    \begin{align*}
      &\cL_{\text{PL},p}(r,r')\coloneqq J^*(r';p)-\min\limits_{\pi\in\Pi^*(r;p)}J^{\pi}(r';p)
      =J^{\pi^1}(r';p)-J^{\pi^1}(r';p)=0,\\
      &\cL_{\text{PL},p}(r',r)\coloneqq J^*(r;p)-\min\limits_{\pi\in\Pi^*(r';p)}J^{\pi}(r;p)
      =J^{\pi^1}(r;p)-J^{\pi^2}(r;p)\neq 0.
    \end{align*}
    Thus, $\cL_{\text{PL},p}(r,r')\neq \cL_{\text{PL},p}(r',r)$, so simmetry does not
    hold.
    For $\cL_{\text{CO},p}$ the simmetry property holds for any pair of rewards $r,r'\in\fR$:
    \begin{align*}
      \cL_{\text{CO},p}(r,r')&\coloneqq \max_\pi
        \big|J^{\pi}(r';p)-J^{\pi}(r;p)\big|\\
        &=\max_\pi
        \big|-\bigr{J^{\pi}(r;p)-J^{\pi}(r';p)}\big|\\
        &=\max_\pi
        \big|J^{\pi}(r;p)-J^{\pi}(r';p)\big|\\
        &= \cL_{\text{CO},p}(r',r).
    \end{align*}
    Concerning $\cL_{\text{GR},\rho}$, the simmetry property does not hold. To see it,
    consider a problem with a single state $s$ and two actions $a_1,a_2$, and let
    $r,r'$ be two rewards such that:
    \begin{align*}
      r(s,a)=\begin{cases}
        1 & \text{if }a=a_1,\\
        0 & \text{if }a=a_2,
      \end{cases},\qquad 
      r'(s,a)=\begin{cases}
        0 & \text{if }a=a_1,\\
        0.5 & \text{if }a=a_2,
      \end{cases}.
    \end{align*}
    Then, we have that:
    \begin{align*}
      &\cL_{\text{GR},\rho}(r,r')\coloneqq \E_{s\sim\rho}\big[
        \max_{a\in\cA} r'(s,a)-r'(s,
      \pi^{\text{gr}}(s;r))\big]=r'(s,a_2)-r'(s,a_1)=0.5,\\
    &\cL_{\text{GR},\rho}(r',r)\coloneqq \E_{s\sim\rho}\big[
      \max_{a\in\cA} r(s,a)-r(s,
    \pi^{\text{gr}}(s;r'))\big]=r(s,a_1)-r(s,a_2)=1.
    \end{align*}
    
    Finally, let us consider the \textbf{triangle inequality} property. First, we
    consider
    $\cL_{\text{PL},p}$. Let $r,r',r''\in\fR$ be three rewards such that $\Pi^*(r;p)=\Pi^*(r'';p)=\{\pi_1\},
    \Pi^*(r';p)=\{\pi^2\}$, and take $r''=0.5 r$. Then:
    \begin{align*}
      &\cL_{\text{PL},p}(r',r)= J^{\pi^1}(r;p)-J^{\pi^2}(r;p),\\
      &\cL_{\text{PL},p}(r',r'')= J^{\pi^1}(r'';p)-J^{\pi^2}(r'';p)=0.5J^{\pi^1}(r;p)-0.5
      J^{\pi^2}(r;p)=0.5 \cL_{\text{PL},p}(r',r),\\
      &\cL_{\text{PL},p}(r'',r)=J^{\pi^1}(r;p)-J^{\pi^1}(r;p)=0.
    \end{align*}
    Therefore, we have that:
    \begin{align*}
      \cL_{\text{PL},p}(r',r)> \cL_{\text{PL},p}(r',r'')+\cL_{\text{PL},p}(r'',r),
    \end{align*} 
    which proves that triangle inequality does not hold.
    Concerning $\cL_{\text{CO},p}$, triangle inequality holds for any $p$ and
    $r,r',r''\in\fR$:
    \begin{align*}
      \cL_{\text{CO},p}(r',r)&\coloneqq \max_\pi
        \big|J^{\pi}(r;p)-J^{\pi}(r';p)\popblue{\pm J^{\pi}(r'';p)}\big|\\
        &\le \max_\pi
        \big|J^{\pi}(r'';p)-J^{\pi}(r';p)|+\max_\pi
        \big|J^{\pi}(r;p)-J^{\pi}(r'';p)|\\
        &=\cL_{\text{CO},p}(r',r'')+\cL_{\text{CO},p}(r'',r),
    \end{align*}
    where we have applied the triangle inequality of the absolute value and the fact
    that the maximum of a sum is smaller than the sum of maxima. As far as
    $\cL_{\text{GR},\rho}$ is concerned, we note that it lacks the triangle inequality
    property with a counterexample analogous to that for $\cL_{\text{PL},p}$. Consider
    a problem with a single state $s$ (or $\rho$ supported only on it) and two
    actions $a_1,a_2$, and consider the rewards $r,r',r''\in\fR$ such that:
    \begin{align*}
      r(s,a)=\begin{cases}
        1 & \text{if }a=a_1,\\
        0 & \text{if }a=a_2,
      \end{cases},\qquad 
      r'(s,a)=\begin{cases}
        0 & \text{if }a=a_1,\\
        1 & \text{if }a=a_2,
      \end{cases},\qquad 
      r''(s,a)=\begin{cases}
        0.5 & \text{if }a=a_1,\\
        0 & \text{if }a=a_2,
      \end{cases}.
    \end{align*}
    Then, we have that:
    \begin{align*}
      &\cL_{\text{GR},\rho}(r',r)= r(s,a_1)-r(s,a_1)=1,\\
      &\cL_{\text{GR},\rho}(r',r'')= r''(s,a_1)-r''(s,a_1)=0.5=0.5 \cL_{\text{GR},\rho}(r',r),\\
      &\cL_{\text{GR},\rho}(r',r)=r(s,a_1)-r(s,a_1)=0.
    \end{align*}
    Thus:
    \begin{align*}
      \cL_{\text{GR},\rho}(r',r)=1> \cL_{\text{GR},\rho}(r',r'')+\cL_{\text{GR},\rho}(r'',r)=0.5+0.
    \end{align*}
    This concludes the proof.
    
    \end{proof}

\paragraph{Robust approach and Chebyshev center.}

In these scenarios, the robust choice $x_{\cF,g}$ presented in Eq. \eqref{eq:
minimax robust} corresponds to the Chebyshev center of the feasible set
$\cR_\cF$ in the premetric space $\tuple{\fR,\cL_g}$. Since $x_{\cF,g}\in\fR$,
then we denote it by $r_{\cF,g}$. Moreover, observe that the informativeness
$\cI_{\cF,g}$ (Eq. \ref{eq: informativeness}) can be interpreted as the
Chebyshev radius of the feasible set $\cR_\cF$, and that, in the worst-case, the
ReL procedure carried out by most algorithms in literature (see Section
\ref{sec: robustness}, Eq. \ref{eq: approach existing literature}) can be seen
as the diameter of the feasible set, that we denote by $D_{\cF,g}$. Finally,
given an arbitrary reward $r\in\fR=\cX_g$, we can define its worst-case loss as:
\begin{align}\label{eq: def comp}
  \cC_{\cF,g}(r)\coloneqq \max\limits_{r'\in\cR_\cF}\cL_g(r,r').
\end{align}
See Figure \ref{fig: fs center radius diameter} for a simple graphical intuition
of all these quantities.

\begin{figure}[t!]
  \centering
  \begin{tikzpicture}
      %%%%% figures
      \filldraw[black,fill=white,very thick] (-2.5,-1.6) rectangle (2.5,1.6);
      %%% Rf
      \filldraw[vibrantBlue,fill=vibrantBlue!40, very thick, fill opacity=0.5]
      (0,0) ellipse (2 and 1.2);
      %%% lines
      \draw[mygreen, very thick] (0,0.05) -- (2,0.05);
      \draw[vibrantOrange, very thick] (-1.2, 0.4) -- (1.5,-0.8);
      \filldraw[black] (0, 0) circle (2pt);
      \draw[red, very thick] (-2,0) -- (2,0);
      \filldraw[black] (-1.2, 0.4) circle (2pt);
      %%%%% text
      \node[text width=2cm] at (-2.1,1.3) {\large$\fR$};
      \node at (1.4,+1.2) {\textcolor{vibrantBlue}{$\cR_{\cF}$}};
      \node at (-1,-0.4) {\textcolor{red}{$D_{\cF,g}$}};
      \node at (1,0.4) {\textcolor{mygreen}{$\cI_{\cF,g}$}};
      \node at (0.5,-0.7) {\small\textcolor{vibrantOrange}{$\cC_{\cF,g}(r)$}};
      \node at (-0.1,+0.3) {$r_{\cF,g}$};
      \node at (-1,+0.5) {$r$};
  \end{tikzpicture}
      \caption{
        Illustration of the quantities of interest. $r$ is any reward.
      %   \textcolor{vibrantBlue}{$\cR_{\cF}$} is the feasible set, $r_{\cF,g}$
      % is the robust reward choice,
      % \textcolor{mygreen}{$\overline{\cI}_{\cF,g}$} is the informativeness,
      % \textcolor{red}{$D_{\cF,g}$} is the worst-case error, and
      % \textcolor{vibrantOrange}{$\comp_{\cF,g}(r)$} is the (non)compatibility
      % of reward $r$.
      }
  \label{fig: fs center radius diameter}
\end{figure}

In the following, we provide some interesting results on these quantities.

\paragraph{Some results.}

One of the most interesting results is that, depending on $\cF,g$, the Chebyshev
center might lie \emph{outside} of the feasible set $\cR_\cF$. This is
interesting because no ReL work in literature has looked for a reward function
outside $\cR_\cF$ as far as we know. Formally, we show that this can happen for
$\cL_{\text{PL},p}$ and $\cL_\infty$:

\begin{restatable}{prop}{considerationrewardchoicenofs}
    \label{prop: considerations reward choice not in fs}
    If the loss is $\cL_g=\cL_{\text{PL},p}$, then there exists a (convex) set
    $\cR_\cF$ for which $r_{\cF,g}\notin\cR_\cF$.
\end{restatable}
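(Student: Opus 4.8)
The plan is to exploit the key structural feature of $\cL_{\text{PL},p}$: since $\cL_{\text{PL},p}(r,r')=J^*(r';p)-\min_{\pi\in\Pi^*(r;p)}J^{\pi}(r';p)$, the candidate reward $r$ enters \emph{only} through its set of optimal policies $\Pi^*(r;p)$, not through its magnitude. Consequently the Chebyshev center need not resemble any reward in $\cR_\cF$ at all; what matters is which policy it makes optimal. I will build a minimal MDP in which every feasible reward makes one of two ``extreme'' actions optimal, whereas the worst-case-optimal choice is a third, ``compromise'' action that no feasible reward ever makes optimal.

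Concretely, I take a single-state MDP with $\cS=\{s\}$, three actions $\cA=\{a_1,a_2,a_3\}$, and horizon $H=1$, so that a reward is a triple $(r(a_1),r(a_2),r(a_3))\in[0,1]^3$, $J^\pi=\sum_a\pi(a)r(a)$, and $\Pi^*(r;p)$ consists of the policies supported on $\argmax_a r(a)$. As feasible set I take the segment $\cR_\cF=\{(1-t,\,t,\,0.4):t\in[0,1]\}$, which is convex. Along this segment $\max(1-t,t)\ge 1/2>0.4$, so $a_3$ is never optimal for a feasible reward; every $r\in\cR_\cF$ induces a $\Pi^*(r;p)$ supported within $\{a_1,a_2\}$.

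The next step is to compare worst-case losses. For a candidate whose unique optimal action is $a_3$, e.g. $r^c=(0,0,1)\notin\cR_\cF$, we have $\min_{\pi\in\Pi^*(r^c;p)}J^{\pi}(r';p)=r'(a_3)=0.4$ for every $r'\in\cR_\cF$, so $\cL_{\text{PL},p}(r^c,r')=\max(1-t,t,0.4)-0.4$, whose maximum over $t\in[0,1]$ is $0.6$. In contrast, for any reward inside $\cR_\cF$ the optimal-policy set is supported on $\{a_1,a_2\}$; evaluating the loss against the endpoint $t=1$ (resp.\ $t=0$) and using that the pessimistic $\min$ over $\Pi^*$ can place all mass on the action whose value is $0$ there, the worst-case loss equals $1$. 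Hence every reward in $\cR_\cF$ has worst-case loss $1$, while $r^c$ achieves $0.6$.

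It then follows that the Chebyshev radius $\min_{r\in\fR}\max_{r'\in\cR_\cF}\cL_{\text{PL},p}(r,r')\le 0.6<1$, so no reward of $\cR_\cF$ can be a minimizer; every Chebyshev center, and in particular $r_{\cF,g}$, lies outside $\cR_\cF$. The main obstacle, and the point requiring the most care, is verifying that the ``safe'' action $a_3$ genuinely dominates all alternatives in the worst case: one must check not only the two pure actions $a_1,a_2$ but also every tie (mixed optimal set), and confirm that the pessimistic inner minimization over $\Pi^*(r;p)$ destroys any apparent hedging benefit of ties. This is precisely why a third action, optimal for no feasible reward, is needed, and why two actions would not suffice (there the feasible tie-reward $(0.5,0.5)$ already attains the radius and would itself be a legitimate center inside $\cR_\cF$).
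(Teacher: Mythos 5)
Your proof is correct and takes essentially the same route as the paper's: a single-state, three-action, horizon-one MDP in which the (convex) feasible set consists of rewards trading off $a_1$ against $a_2$, while any Chebyshev center must make the third, ``compromise'' action $a_3$ optimal, something no feasible reward does. The differences are cosmetic --- you work directly with a convex segment where the paper starts from a two-point set and then passes to its convex hull, and your choice of $0.4$ for the third coordinate (versus the paper's $0.5$) ensures $a_3$ is never even tied-optimal on $\cR_\cF$, making your handling of the tie case slightly more explicit.
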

\begin{proof}
  Consider a simple MDP without reward with a single state $s$, three actions
  $a_1,a_2,a_3$, and horizon $H=1$. Let $\pi^1,\pi^2,\pi^3$ be, respectively,
  the deterministic policies that play actions $a_1,a_2,a_3$. In this context,
  take $\cR_\cF=\{r,r'\}$, where:
  \begin{align*}
    r(s,a)=\begin{cases}
      1 & \text{if }a=a_1,\\
      0 & \text{if }a=a_2,\\
      0.5 & \text{if }a=a_3,
    \end{cases},\qquad 
    r'(s,a)=\begin{cases}
      0 & \text{if }a=a_1,\\
      1 & \text{if }a=a_2,\\
      0.5 & \text{if }a=a_3,
    \end{cases}.
  \end{align*}
  Since $\Pi^*(r;p)=\{\pi^1\}$ and $\Pi^*(r';p)=\{\pi^2\}$, then it is immediate
  that:
  \begin{align*}
    \cL_{\text{PL},p}(r,r')=\cL_{\text{PL},p}(r',r)=1.
  \end{align*}
  The robust reward choice is any reward $r_{\cF,g}$ s.t.
  $\Pi^*(r_{\cF,g};p)=\{\pi^3\}$. Indeed, in this manner:
  \begin{align*}
    \cL_{\text{PL},p}(r_{\cF,g},r)=\cL_{\text{PL},p}(r_{\cF,g},r')=0.5.
  \end{align*}
  Thus, $r_{\cF,g}\notin\cR_\cF$.

  Note that we can provide a counterexample with a convex feasible set by using
  as $\cR_\cF$ the convex hull of $r,r'$. In this way, it is simple to see that
  no reward in this new feasible set can make $\pi^3$ be the unique optimal
  policy. As such, the Chebyshev center is still external to $\cR_\cF$.

  This concludes the proof.
\end{proof}

\begin{prop}
Let the feasible set $\cR_\cF$ be the 3-dimensional convex hull of rewards
$[1,1,1]$, $[-1,1,1]$, $[1,-1,1]$, $[1,1,-1]$. Then, if we take
$\cL_g=\cL_\infty$, the Chebyshev center is $r_{\cF,g}=[0,0,0]$ which lies
outside $\cR_\cF$.
\end{prop}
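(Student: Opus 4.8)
The plan is to verify the two assertions separately: that $[0,0,0]$ solves the Chebyshev-center problem $\min_{x\in\fR}\max_{y\in\cR_\cF}\cL_\infty(x,y)$ with radius $1$, and that $[0,0,0]\notin\cR_\cF$. Write $v_1=[1,1,1]$, $v_2=[-1,1,1]$, $v_3=[1,-1,1]$, $v_4=[1,1,-1]$ for the four generating rewards. First I would reduce the inner maximum to these four vertices: for every fixed $x$ the map $y\mapsto\cL_\infty(x,y)=\|x-y\|_\infty$ is convex, and the maximum of a convex function over a polytope is attained at an extreme point. Since $v_2-v_1=[-2,0,0]$, $v_3-v_1=[0,-2,0]$, $v_4-v_1=[0,0,-2]$ are linearly independent, the four points are affinely independent and hence all extreme points of $\cR_\cF$, so $\max_{y\in\cR_\cF}\|x-y\|_\infty=\max_{i\in\dsb{4}}\|x-v_i\|_\infty$. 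Evaluating at the candidate center gives $\|[0,0,0]-v_i\|_\infty=1$ for every $i$, so the minimax value is at most $1$.

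The only step requiring an idea is the matching lower bound $\max_i\|x-v_i\|_\infty\ge 1$ for all $x\in\fR$. I would exploit that $v_1$ and $v_2$ differ by $2$ in their first coordinate: for any $x=[x_1,x_2,x_3]$ we have $\|x-v_1\|_\infty\ge|x_1-1|$ and $\|x-v_2\|_\infty\ge|x_1+1|$, and since $|x_1-1|+|x_1+1|\ge 2$ by the triangle inequality, at least one of the two is $\ge 1$. Combined with the upper bound, this shows $[0,0,0]$ attains the minimax value $1$, so it is a Chebyshev center and $\cI_{\cF,g}=1$.

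Finally I would certify $[0,0,0]\notin\cR_\cF$ with a separating hyperplane: $v_2,v_3,v_4$ all satisfy $x_1+x_2+x_3=1$ while $v_1$ satisfies $x_1+x_2+x_3=3$, so every convex combination obeys $x_1+x_2+x_3\ge 1$, whereas the origin has coordinate sum $0<1$. The argument is robust to whether $\fR=\RR^3$ or a bounded box containing the vertices, since the lower bound holds on all of $\RR^3$ and $[0,0,0]$ is feasible in either case. The main (and only minor) obstacle is stating the vertex reduction cleanly; everything else is a direct computation.
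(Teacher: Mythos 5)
Your proof is correct, but it follows a genuinely different route from the paper, for the simple reason that the paper does not actually prove this proposition: its entire proof is the sentence ``See footnote 1 in \cite{dabbene2014probabilistic}.'' Your argument, by contrast, is a self-contained verification, and every step checks out: the reduction of the inner maximum to the four vertices (a continuous convex function attains its maximum over a compact convex set at an extreme point, and affine independence of $v_1,\dots,v_4$ makes them exactly the extreme points of the simplex), the evaluation $\|[0,0,0]-v_i\|_\infty=1$, the lower bound via $|x_1-1|+|x_1+1|\ge 2$ applied to the pair $v_1,v_2$, and the separating hyperplane $x_1+x_2+x_3\ge 1$ certifying non-membership of the origin. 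What your approach buys over the paper's citation: it is checkable without consulting the external reference, it identifies the Chebyshev radius explicitly as $1$, and it addresses an ambiguity the paper glosses over, namely that its own reward class $\fR$ consists of $[0,1]$-valued rewards and thus does not even contain the stated vertices, so some enlarged ambient set such as $[-1,1]^3$ or $\RR^3$ must be intended; your observation that the argument is insensitive to this choice is exactly the right remark. One small strengthening you may wish to add: your coordinate-wise inequalities in fact give \emph{uniqueness} of the center. If $\max_i\|x-v_i\|_\infty\le 1$, then $\|x-v_1\|_\infty\le 1$ forces $x_j\ge 0$ for every $j$, while $\|x-v_{j+1}\|_\infty\le 1$ forces $x_j\le 0$ (the vertex $v_{j+1}$ being the one with $-1$ in coordinate $j$), hence $x=[0,0,0]$. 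Under the paper's definition of the Chebyshev center as \emph{any} point of the argmin, mere membership (which you prove) suffices for the statement as written, but uniqueness rules out the a priori possible scenario that some other Chebyshev center lies inside $\cR_\cF$, which is the scenario the proposition is meant to exclude.
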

\begin{proof}
  See footnote 1 in \cite{dabbene2014probabilistic}.
\end{proof}

To quantify how convenient is our reward choice $r_{\cF,g}$ w.r.t. what is done
in literature (see Eq. \eqref{eq: approach existing literature}), we have to
understand how small is the ratio $\cI_{\cF,g}/D_{\cF,g}$.
However, since it coincides with the ratio between the Chebyshev radius and the
diameter of the feasible set $\cR_\cF$ in premetric space $\tuple{\fR,\cL_g}$,
then the answer \emph{depends} on the specific $\cR_\cF$ and $\cL_g$ at stake.

If $\cL_g$ is a metric, then the error reduction is at most $50\%$:
\begin{restatable}{prop}{erroratmostfiftypercent}\label{prop: error at most 50
percent} If $\cL_g$ satisfies simmetry and triangle inequality, then:
\begin{align*}
  % \cI_{\cF,g}\ge \frac{1}{2}D_{\cF,g}.
 \cI_{\cF,g}\ge D_{\cF,g}/2.
\end{align*}
\end{restatable}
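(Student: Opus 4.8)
The plan is to recognize this as the classical lower bound relating the Chebyshev radius to the diameter of a set in a genuine metric space, and to prove it directly from the two assumed axioms. Recall that, in the special case $\cX_g=\fR$ studied in this appendix, $\cI_{\cF,g}$ is the Chebyshev radius $\min_{x\in\fR}\max_{r\in\cR_\cF}\cL_g(r,x)$ of $\cR_\cF$, while $D_{\cF,g}=\max_{r,r'\in\cR_\cF}\cL_g(r,r')$ is its diameter. I would let $r_{\cF,g}$ denote a Chebyshev center, so that $\cI_{\cF,g}=\max_{r\in\cR_\cF}\cL_g(r,r_{\cF,g})$, and then route any pair of feasible rewards through this center.

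Concretely, I would fix arbitrary $r_1,r_2\in\cR_\cF$. First, apply the triangle inequality with $r_{\cF,g}$ as the intermediate point to obtain $\cL_g(r_1,r_2)\le \cL_g(r_1,r_{\cF,g})+\cL_g(r_{\cF,g},r_2)$. Next, use symmetry to rewrite $\cL_g(r_{\cF,g},r_2)=\cL_g(r_2,r_{\cF,g})$, so that both summands take the form $\cL_g(\cdot\,,r_{\cF,g})$ with a feasible reward in the first slot. Since $r_1,r_2\in\cR_\cF$, each summand is bounded by the radius, i.e.\ $\cL_g(r_i,r_{\cF,g})\le \max_{r\in\cR_\cF}\cL_g(r,r_{\cF,g})=\cI_{\cF,g}$. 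Chaining these bounds yields $\cL_g(r_1,r_2)\le 2\cI_{\cF,g}$, and taking the maximum over $r_1,r_2\in\cR_\cF$ on the left gives $D_{\cF,g}\le 2\cI_{\cF,g}$, which is exactly the claim.

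There is essentially no deep obstacle here; the only point requiring care is whether the minimum defining $\cI_{\cF,g}$ is actually attained, i.e.\ whether a genuine Chebyshev center $r_{\cF,g}$ exists. If one prefers not to assume attainment, I would instead pick, for each $\eta>0$, a near-optimal center $x_\eta\in\fR$ with $\max_{r\in\cR_\cF}\cL_g(r,x_\eta)\le \cI_{\cF,g}+\eta$, run the identical triangle-inequality-and-symmetry argument to get $D_{\cF,g}\le 2(\cI_{\cF,g}+\eta)$, and then let $\eta\to0$. Note that both hypotheses are used exactly once and are indispensable: symmetry is what lets the two legs of the triangle both be read as distances-to-the-center controlled by the radius, and the triangle inequality is what decomposes the diameter through the center. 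This is consistent with the premetric counterexamples of Proposition~\ref{prop: application pre metric}, where these axioms fail and hence the $50\%$ guarantee need not hold.
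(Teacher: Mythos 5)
Your proof is correct and follows essentially the same route as the paper's: decompose the diameter through the Chebyshev center via the triangle inequality, then use symmetry so that both legs are bounded by the radius $\cI_{\cF,g}$. The only difference is cosmetic robustness—you avoid assuming attainment of the diameter/center by maximizing over arbitrary pairs and using $\eta$-near-optimal centers, whereas the paper simply takes the argmax pair and the exact center—but the key inequality $\cL_g(r_1,r_2)\le \cL_g(r_1,r_{\cF,g})+\cL_g(r_{\cF,g},r_2)\le 2\cI_{\cF,g}$ is identical.
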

\begin{proof}
  Let $(r_1,r_2)\in\argmax_{r,r'}\cL_g(r,r')$, i.e., be the points in the diameter
  of the feasible set $\cR_\cF$, thus $D_{\cF,g}=\cL_g(r_1,r_2)$. Because of
  triangle inequality, for any $r\in\fR$ including the Chebyshev center
  $r_{\cF,g}$, it holds that:
  \begin{align*}
    D_{\cF,g}=\cL_g(r_1,r_2)&\le \cL_g(r_1,r) + \cL_g(r,r_2).
  \end{align*}
  If we take $r=r_{\cF,g}$ and apply simmetry, then we see that
  $\cL_g(r_1,r_{\cF,g})=\cL_g(r_{\cF,g},r_1)\le \max_{r\in\cR_\cF}\cL_g(r_{\cF,g},r)=
  \cI_{\cF,g}$, and also that $\cL_g(r_{\cF,g},r_2)\le
  \max_{r\in\cR_\cF}\cL_g(r_{\cF,g},r)= \cI_{\cF,g} $, from which:
  \begin{align*}
    D_{\cF,g}=\cL_g(r_1,r_2)&\le 2\cI_{\cF,g}.
  \end{align*}
\end{proof}

If $\cL_g$ is induced by the $2$-norm, we get an error reduction of at least
$1/\sqrt{2}$:
% then the error reduction is at least
% thanks to the Jung's theorem, we
% realize that the smaller the problem, the larger the advantage:
%
\begin{restatable}{prop}{erroratleastjung}\label{prop: error at least jung}
  %
  % If $\cL_g$ is the $2$-norm, i.e., 
  If $\cL_g=\cL_2$, then:
  \begin{align*}
    % \cI_{\cF,g}\le \sqrt{\frac{SAH}{2(SAH+1)}}D_{\cF,g}.
   \cI_{\cF,g}\le \sqrt{1-1/(SAH+1)}D_{\cF,g}/\sqrt{2}.
  \end{align*}
\end{restatable}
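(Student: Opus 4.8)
The plan is to recognize this statement as a direct consequence of \textbf{Jung's theorem} applied to the reward space. When $\cL_g=\cL_2$, the pair $\tuple{\fR,\cL_2}$ inherits the Euclidean metric of $\RR^{SAH}$ (recall a reward has $SAH$ components, one per triple in $\SAH$), and, as already observed in this appendix, the uninformativeness $\cI_{\cF,g}$ coincides with the Chebyshev radius of $\cR_\cF$, i.e.
\begin{align*}
  \cI_{\cF,g}=\min_{x\in\fR}\max_{r'\in\cR_\cF}\|x-r'\|_2 .
\end{align*}
This is exactly the radius of the smallest ball enclosing $\cR_\cF$, with the caveat that the center is constrained to lie in $\fR$. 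Jung's theorem states that any set of diameter $D$ in $\RR^n$ is contained in a ball of radius at most $D\sqrt{n/(2(n+1))}$. Setting $n=SAH$ and $D=D_{\cF,g}$, the bound $\sqrt{n/(2(n+1))}=\sqrt{1-1/(n+1)}/\sqrt{2}$ gives precisely the claimed inequality after the elementary algebraic simplification of the radical.

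Concretely, I would proceed in three steps. First, I would make explicit the identification of $\cI_{\cF,g}$ with the constrained Chebyshev radius above, using the definition of $\cI_{\cF,g}$ (Eq.~\eqref{eq: informativeness}) together with $\cL_g=\cL_2$. Second, I would introduce the \emph{unconstrained} minimum-enclosing-ball radius $R\coloneqq\min_{x\in\RR^{SAH}}\max_{r'\in\cR_\cF}\|x-r'\|_2$ and invoke Jung's theorem to obtain $R\le D_{\cF,g}\sqrt{SAH/(2(SAH+1))}$. Third, I would simplify $\sqrt{SAH/(2(SAH+1))}=\sqrt{1-1/(SAH+1)}/\sqrt{2}$ and conclude.

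The one subtlety, and the main obstacle, is reconciling the constraint $x\in\fR=[0,1]^{SAH}$ in the definition of the Chebyshev center with the fact that Jung's theorem places no restriction on the center of the enclosing ball; a priori we only get $\cI_{\cF,g}\ge R$, which is the wrong direction. I would resolve this by appealing to the standard fact that the center of the minimum enclosing ball of a bounded set always lies in the convex hull of that set (it is a convex combination of at most $SAH+1$ boundary support points). Since $\cR_\cF\subseteq\fR$ and $\fR$ is convex, we have $\mathrm{conv}(\cR_\cF)\subseteq\fR$, so the optimal unconstrained center already belongs to $\fR$; hence the constraint $x\in\fR$ is inactive and $\cI_{\cF,g}=R$. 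Chaining this equality with the Jung bound then yields the statement. (Strictly, one should also note $\cR_\cF$ is bounded, being a subset of the hypercube $[0,1]^{SAH}$, so that the minimum enclosing ball and its diameter are well defined.)
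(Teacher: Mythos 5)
Your proof is correct and follows the same route as the paper: the paper's own proof is a one-line appeal to Jung's theorem with $n=SAH$, exactly as in your steps two and three. The noteworthy difference is your handling of the constrained center: the paper's definition of the Chebyshev radius restricts the center to $\fR=[0,1]^{SAH}$, whereas Jung's theorem bounds the radius of an enclosing ball with unconstrained center, and the paper simply glosses over this mismatch. Your observation that the minimum-enclosing-ball center lies in $\mathrm{conv}(\cR_\cF)\subseteq\fR$ (by the projection argument, projecting an exterior center onto the convex hull strictly decreases the worst-case distance), so that the constraint is inactive and the two radii coincide, is precisely the missing step needed to make the paper's ``immediate'' application of Jung's theorem rigorous; in this respect your write-up is more complete than the published proof.
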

% \begin{restatable}{prop}{erroratleastjung}\label{prop: error at least jung}
  %     %
  %     Let $D\coloneqq \max_{r,r'\in\cR_\cF}\|r-r'\|_2$ be the diameter of
  %     $\cR_\cF$. Then:
  %     \begin{align*}
  %        \frac{D}{2}\le \cI_{\cF,g}\le \frac{D}{\sqrt{2}}.
  %     \end{align*}
  % \end{restatable}
As an example of why it is not exactly $50\%$, think to the equilateral triangle
or to the simplex.
\begin{proof}
  The result is immediate by an application of the Jung's theorem
  \citep{jung1901ueber,danzer1963helly,scott1991extension}, that states that in
  every Euclidean space with $n$ dimensions, the Chebyshev radius $r$ of a set
  with diameter $D$ satisfies:
  \begin{align*}
    r\le \sqrt{\frac{n}{2(n+1)}}D.
  \end{align*}
\end{proof}

It should be noted that, in absence of the triangle inequality, Proposition
\ref{prop: error at most 50 percent} does not hold, and the radius $\cI_{\cF,g}$
can be zero when the diameter $D_{\cF,g}$ is not, providing an ``infinite''
increase in performance:
\begin{restatable}{prop}{errornotriangleinequality}\label{prop: error no
triangle inequality} There exist $p,\rho$ and feedback $\cF$ such that the
premetrics $\cL_{\text{PL},p},\cL_{\text{GR},\rho}$ satisfy:
  \begin{align*}
    & \cI_{\cF,(\text{PL},p)}=0 \text{ and }D_{\cF,(\text{PL},p)}=H,\\
    & \cI_{\cF,(\text{GR},\rho)}=0 \text{ and }D_{\cF,(\text{GR},\rho)}=1.
  \end{align*}
  \end{restatable}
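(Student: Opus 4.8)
The plan is to exhibit a single tiny instance that simultaneously witnesses both claims, reusing the pessimistic structure of these two premetrics already exploited in the proof of Proposition~\ref{prop: application pre metric}. Concretely, I would work in an MDP without reward having a single state $s$ (so $p$ is trivial and $\rho$ is forced to be the point mass on $s$), horizon $H$, and three actions $a_1,a_2,a_3$, and take the feasible set to be
\begin{align*}
  r_1=(a_1\mapsto 1,\,a_2\mapsto 0,\,a_3\mapsto 1),\qquad
  r_2=(a_1\mapsto 0,\,a_2\mapsto 1,\,a_3\mapsto 1),
\end{align*}
i.e. $\cR_\cF=\{r_1,r_2\}$ (realized by suitable feedback, or replaced by its convex hull as in Proposition~\ref{prop: considerations reward choice not in fs}). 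The guiding idea is that $a_3$ is a maximizer of \emph{both} rewards, whereas $a_1$ is optimal for $r_1$ but worst for $r_2$ (and symmetrically $a_2$ for $r_2$).

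I would then verify the two radii vanish by using $r^\dagger=(a_1\mapsto 0,\,a_2\mapsto 0,\,a_3\mapsto 1)$ as a witness for the outer minimization. For $\cL_{\text{PL},p}$, the unique optimal policy of $r^\dagger$ plays $a_3$ at every step, and since $a_3$ also maximizes $r_1$ and $r_2$ we get $\min_{\pi\in\Pi^*(r^\dagger;p)}J^\pi(r_i;p)=J^*(r_i;p)=H$, whence $\cL_{\text{PL},p}(r^\dagger,r_i)=0$; non-negativity then gives $\cI_{\cF,(\text{PL},p)}=0$. The same $r^\dagger$ works for $\cL_{\text{GR},\rho}$: its greedy action is $a_3$, which attains $\max_a r_i(s,a)=1$, so $\cL_{\text{GR},\rho}(r^\dagger,r_i)=1-r_i(s,a_3)=0$ and $\cI_{\cF,(\text{GR},\rho)}=0$. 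For the diameters I would evaluate the pair $(r_1,r_2)$. For planning, $\Pi^*(r_1;p)$ contains the policy that always plays $a_1$, whose return under $r_2$ is $0$, and since the loss minimizes over $\Pi^*(r_1;p)$ this forces $\cL_{\text{PL},p}(r_1,r_2)=H-0=H$; as every return lies in $[0,H]$ this is maximal, so $D_{\cF,(\text{PL},p)}=H$. For the greedy loss, taking $a_1$ as the greedy action of $r_1$ gives $\cL_{\text{GR},\rho}(r_1,r_2)=\max_a r_2(s,a)-r_2(s,a_1)=1-0=1$, which is maximal since rewards lie in $[0,1]$, so $D_{\cF,(\text{GR},\rho)}=1$.

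The step I expect to need the most care is the greedy case, because the loss depends on the selection hidden in $\pi^{\text{gr}}(\cdot;r)\in\argmax_a r(\cdot,a)$. There is a genuine tension here: making the radius vanish requires all feasible rewards to share a common maximizer (namely $a_3$), which forces the maximizer of $r_1$ to be non-unique; consequently the diameter can reach $1$ only if the selection for $r_1$ returns the ``bad'' maximizer $a_1$ rather than $a_3$. I would resolve this by observing that $\pi^{\text{gr}}$ is by definition an \emph{arbitrary} element of the $\argmax$, so one may fix a single consistent tie-breaking rule (e.g. preferring the lowest-indexed action under the order $a_1,a_2,a_3$) under which $\pi^{\text{gr}}(s;r_1)=a_1$, $\pi^{\text{gr}}(s;r_2)=a_2$, and $\pi^{\text{gr}}(s;r^\dagger)=a_3$; no analogous issue arises for $\cL_{\text{PL},p}$, whose definition already takes a minimum over the whole optimal set $\Pi^*$. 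Finally, I would check that replacing $\{r_1,r_2\}$ by its convex hull changes nothing: every strict combination $\lambda r_1+(1-\lambda)r_2$ has $a_3$ as its \emph{unique} maximizer, so $r^\dagger$ still incurs loss $0$ against all of $\cR_\cF$, while the diameter is still attained at $(r_1,r_2)$, giving a convex feasible set for which the conclusions persist.
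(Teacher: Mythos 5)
Your proof is correct, and it rests on the same mechanism as the paper's own proof: the Chebyshev radius collapses to zero because one can deploy a center ($r^\dagger$) whose optimal set (resp.\ greedy action) is a singleton that is simultaneously optimal for every feasible reward, while the diameter is maximal because a feasible reward's optimal set also contains a policy (resp.\ action) that is worst-case for another feasible reward. The differences are still worth recording. The paper instantiates this with the infinite, IRL-natural feasible set $\cR_\cF=\{r:\pi_1\in\Pi^*(r;p)\}$ (an optimal-expert feedback), spells out only the $\cL_{\text{PL},p}$ claim, and dismisses the greedy case with ``an analogous construction''; you instead use a single one-state, three-action instance with a two-point feasible set (and you verify the convex-hull variant), which witnesses both claims at once. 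More importantly, your treatment of the tie-breaking in $\pi^{\text{gr}}$ fills a genuine gap that the paper leaves open: as you observe, making the greedy radius vanish forces every feasible reward to share a maximizer with the center, so $r_1$ necessarily has tied maximizers, and $D_{\cF,(\text{GR},\rho)}=1$ holds only under a selection rule that resolves the tie toward the ``bad'' maximizer $a_1$. This dependence on the selection is unavoidable (the paper's implicit analogous construction has exactly the same issue), so fixing a consistent tie-breaking rule, as you do, is not a convenience but a necessary ingredient of a complete proof of the $\cL_{\text{GR},\rho}$ half of the statement.
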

\begin{proof}
  Let us begin with $\cL_{\text{PL},p}$. Consider a problem with horizon $H$ where
  $\cR_\cF=\{r\in\fR\,|\,\pi_1\in\Pi^*(r;p)\}$ the feasible set contains all the
  rewards that make at least $\pi_1$ as optimal policy. Consider a transition
  model $p$ for which there exists a reward $\overline{r}\in\cR_\cF$ s.t.
  $J^*(\overline{r};p)=J^{\pi^1}(\overline{r};p)=H$, and for some other policy
  $\pi^2$ it holds that: $J^{\pi^2}(\overline{r};p)=0$ (for instance, if we
  construct $p$ to be deterministic, then, this is possible). Then, since
  $\cR_\cF$ contains also the reward $r'$ that makes all the policies optimal
  (in particular $\pi^2$), we have that:
  \begin{align*}
    D_{\cF,(\text{PL},p)}\ge \cL_{\text{PL},p}(r',\overline{r})=J^*(\overline{r};p)-J^{\pi^2}(\overline{r};p)=H.
  \end{align*}
  Instead, if we consider any reward $r''$ s.t. $\Pi^*(r'';p)=\{\pi^1\}$ (as our
  robust reward choice), then it is clear that:
  \begin{align*}
    \cI_{\cF,(\text{PL},p)}\le \max\limits_{r\in\cR_\cF}\cL_{\text{PL},p}(r'',r)=J^*(r;p)-J^{\pi^1}(r;p)=0.
  \end{align*}
  The other claim of the proposition can be proved with an analagous construction.
\end{proof}

\section{Some Insights on Model Selection and Active Learning}
\label{apx: misspecification and active learning}

In this appendix, we provide some considerations on model selection and active
learning that make use of the framework and robust approach introduced in the
main paper. Here, for simplicity of presentation, we introduce a new symbol
$\bA_f$: we will refer to a feedback $f$ as a pair $(Q_f,\bA_f)$, where $\bA_f$
denotes the assertion/statement that describes the relation between $Q_f$ and
$r^\star$, and, so, gives birth to the feasible set $\cR_f$. It is convenient to
introduce $\bA_f$ because it permits to reason on different feedback $f,f'$ of
the same type, i.e., sharing the same quantity $Q_f=Q_{f'}$, in a simple and
intuitive way. We will call $\bA_f$ an ``assertion''.

\subsection{Discussion on Model Selection}

For simplicity of presentation, we consider a single feedback $\cF=\{f\}$, with
$f=\tuple{Q_f,\bA}$, and we assume that infinite data are available.
 
\textbf{Modelling feedback.}~~%
In practical applications, we observe the data $Q_f$, and we have to select the
specific assertion $\bA_f$ based on our knowledge on how $Q_f$ has been
generated.
%  Recall that $\bA_f$ aims to describe the relationship between $Q_f$ and the
%  target reward $r^\star$. Since we are studying a \emph{model}, i.e., a
%  simplifying representation of reality, 
%
\begin{example}[Feedback of driving safely]\label{ex: driving safely}
  Let the \emph{unknown} target reward $r^\star$ represent the task of driving
  safely. Let $Q_f=\{\omega_i\}_i$ be a dataset of demonstrations collected by
  an agent $\pi^E$ that ``aims to demonstrate how to drive safely in a certain
  environment''.
  % Given this real-world situation,
  What assertion do we adopt?
 \end{example}
 We can decide to model the problem in different ways, i.e., there are always
 multiple assertions $\bA_f$ that can be associated to the given data $Q_f$ to
 \emph{try} to capture the ``true'' relationship between $Q_f$ and the
 underlying unknown $r^\star$.
  \begin{continueexample}{ex: driving safely}
  We can model $\pi^E$ as an optimal policy for the ``driving safely'' task by
  using assertion $\bA^{\text{OPT}}$, corresponding to the ``optimal expert''
  entry in Table \ref{table: feedback}. If we think that the demonstrated
  $\pi^E$ sometimes takes suboptimal actions, then we might prefer using
  $\bA^{\text{MCE}}$ (``$\beta$-MCE expert'' in Table \ref{table: feedback}).
  Alternatively, we can simply assume that $\pi^E$ is at least
  $\epsilon$-optimal for some $\epsilon>0$, i.e., $J^{\pi^E}(r^\star;p)\ge
  J^*(r^\star;p)-\epsilon$. We call this assertion $\bA^{\text{SUB}}$
  (``$t$-suboptimal expert'' in Table \ref{table: feedback}). 
  \end{continueexample}
 We incur in \emph{misspecification}
%  if assertion \ref{ass: rstar in RF} does not hold, i.e.,
%  $r^\star\notin\cR_\cF$. Simply put, it occurs
 \emph{if $\bA_f$ does not correctly describe the relationship between $r^\star$
 and $Q_f$} (see \cite{skalse2024quantifyingsensitivityinversereinforcement} for
 an analysis of misspecification in IRL).
 \begin{continueexample}{ex: driving safely}
    If we model data $Q_f$ using assertion $\bA^{\text{MCE}}$, but, actually,
    $\pi^E$ is optimal (i.e., it follows $\bA^{\text{OPT}}$), then our feedback
    is misspecified.
  \end{continueexample}

\textbf{Choosing the correct modelling assertion.}~~%
The crucial question is: \emph{what is the} best \emph{modelling assertion for
the given data $Q_f$?} Intuitively, \emph{any} assertion that permits to carry
out the downstream ReL application $g$ \emph{effectively} is satisfactory. In
other words, we can tolerate some misspecification error as long as the final
outcome is acceptable.\\
% assertion imposes enough ``structure'' to the problem.
%
Thanks to our framework, we can make these considerations more quantitative. As
explained in Section \ref{sec: robustness}, the uninformativeness
$\cI_{\{f\},g}$ represents the \emph{minimum} error that can be achieved in the
worst-case for doing $g$ using $f=(Q_f,\bA_f)$. Under the model of feedback $f$
(i.e., the assertion $\bA_f$) considered, the worst-case error cannot be smaller
than $\cI_{\{f\},g}$. Therefore, this is unsatisfactory if we aim to carry out
$g$ with an error $\Delta < \cI_{\{f\},g}$.\\
To solve this issue, we have to reduce the size of the feasible set $\cR_f$ so
that its Chebyshev radius $\cI_{\{f\},g}$ reduces too. There are two ways for
doing this. The best one $(i)$ consists in collecting additional feedback $f'$
to obtain $\cR_f\cap\cR_{f'}\subseteq\cR_f$ (e.g., through active learning, see
Appendix \ref{apx: active learning}). However, additional feedback might not
be available in practice. The other way $(ii)$ consists in imposing ``more
structure'' to the problem by changing the assertion $\bA_f$ to $\bA_{f'}$ to
restrict the feasible set $\cR_{\{Q_f,\bA_f\}}$ to $\cR_{\{Q_f,\bA_{f'}\}}$.
% , then, clearly, its Chebyshev radius $\cI_{\{Q_f,\bA_{f'}\},g}$ reduces too.
%
\begin{example}
  If we model $Q_f$ using $\bA^{\text{OPT}}$, we obtain a feasible set that is
  \emph{strictly} contained into the feasible set obtained using assertion
  $\bA^{\text{SUB}}$. Thus, whatever the application $g$ at stake,
  $\cI_{\{Q_f,\bA^{\text{OPT}}\},g}\le \cI_{\{Q_f,\bA^{\text{SUB}}\},g}$.
\end{example}
%
% in the limit, assertion 1 r in Rf
%
By adopting a more restrictive (potentially less realistic) model $\bA_{f'}$, we
reduce the ``measurable'' error $\cI_{\{Q_f,\bA_{f'}\},g}$ at the price of a
larger unknown misspecification error. This is fine as long as $\bA_{f'}$ is
perceived as sufficiently realistic.
\emph{If there is no realistic assertion with a small enough value of
informativeness}, then we conclude that the application $g$ cannot be carried
out effectively with the only data $Q_f$. In other words, \emph{the ReL problem
cannot be solved with the desired accuracy}.

\subsection{Active Learning}
\label{apx: active learning}

In the Active Learning setting \citep{lopes2009activelearning}, in addition to a
given set of feedback $\cF$, we can choose to receive a new feedback $f'$ from a
set $\cF'=\{f_i'\}_i$. Crucially, for any $f'\in\cF'$,
$f'=\tuple{Q_{f'},\bA_{f'}}$, we consider the assumption $\bA_{f'}$ to be known,
while the actual data $Q_{f'}$ is revealed \emph{after} our choice.
\begin{example}
  We might choose between feedback $f_1=\tuple{Q_1,\bA^{\text{OPT}}}$,
  consisting of demonstrations in environment $\cM_1$ (i.e., $Q_1$) from an
  optimal expert (i.e., $\bA^{\text{OPT}}$, see Example \ref{ex: driving
  safely}), or $f_2=\tuple{Q_2,\bA^{\text{MCE}}}$, made of demonstrations in
  environment $\cM_2$ (i.e., $Q_2$) from a maximal causal entropy expert (i.e.,
  $\bA^{\text{MCE}}$). After our choice, the expert demonstrates only one policy
  (i.e., either $Q_1$ or $Q_2$).
\end{example}
What is the ``best'' choice of feedback $f$ to receive without knowing its data
$Q_f$? Thanks to our framework, and specifically to the notion of
uninformativeness defined in Section \ref{sec: robustness}, the immediate choice
is the feedback that, whatever its true data, is the most ``informative'':
%
% Intuitively, to be robust with the missing knowledge of $Q$, we
% aim to select the \emph{most informative} feedback in the \emph{worst case},
% i.e., whatever data we will receive.
%
\begin{defi}[Information gain]\label{def: information gain}
  We define the \emph{information
gain} $\text{IG}_{\cF,g}(f)$ of a feedback $f$ w.r.t. $g$ given $\cF$ as:
\begin{align*}
  % \label{eq: information gain}
  \text{IG}_{\cF,g}(f)\coloneqq \cI_{\cF,g}-\cI_{\cF\cup\{f\},g}.
\end{align*}
\end{defi}
In words, $\text{IG}_{\cF,g}(f)$ represents the reduction of
uninformativeness (worst-case error) of the ReL problem. Observe that
$\text{IG}_{\cF,g}(f)\ge 0$, since $\cI_{\cF\cup\{f\},g}\le \cI_{\cF,g}$,
i.e., the more feedback the less error.

% Thanks to Definition \ref{def: information gain},
Formally, the feedback that reduces the most the uninformativeness is the
feedback $f_j'$ from set $\cF'=\{f_i'\}_i,f_i'=\tuple{Q_i,\bA_i}$, that
maximizes the \emph{worst-case} information gain w.r.t. the data $Q_i$:
\begin{align*}
 j\in\argmax\limits_i\min\limits_{Q_i}
  \text{IG}_{\cF,g}(\{Q_i,\bA_i\}).
\end{align*}
Simply put, we select the feedback that, whatever the true data we will
receive, we know that it will bring the larger reduction of informativeness,
i.e., of error, for the ReL problem.

\section{Additional Results on Section \ref{sec: use case}}\label{apx: proofs
rob}

In this appendix, we provide additional results for Section \ref{sec: use case}.
We begin by reporting the subroutine \texttt{PDSM-MAX} in Appendix \ref{apx:
pdsm max}, which is not present in the main paper. Then, we provide explicit
computation of the subgradients of the Lagrangian function $\widehat{L}$ for the
estimated problem (Appendix \ref{apx: subgradients lagrangian}). We prove
Theorem \ref{thr: guarantees caty} in Appendix \ref{apx: proof thr rob}. we
provide a miscellaneous of other results in Appendix \ref{apx: other proofs and
results}, and we conclude by showing that \rob can be easily extended to other
ReL problems (Appendix \ref{apx: other kinds feedback}).

\subsection{PDSM-MAX}\label{apx: pdsm max}

The pseudocode of the \texttt{PDSM-MAX} subroutine is reported in Algorithm
\ref{alg: pdsm max}. Note that it is analogous to that of \texttt{PDSM-MIN}
(Algorithm \ref{alg: pdsm min}), with the difference that set $\fD_-$ is
mirrored w.r.t. $\fD_+$, and that the signs of the subgradients update are
reversed.

We remark that operators $\Pi_{\fR},\Pi_{\fD_-},\Pi_{\fD_+}$ can be implemented
in $\cO(d)$ time for a vector in $\RR^d$. Indeed, as explained in
\cite{boyd2004convex}, given a vector $x\in\RR^d$, we can implement the
projection onto set $\cY\coloneqq[k_1,k_2]^d$ (with arbitrary $k_1\le k_2$) as:
\begin{align*}
  [\Pi_{\cY}(x)]_j=\begin{cases}
    k_1&\text{if }x_j\le k_1\\
    k_2&\text{if }x_j\ge k_2\\
    x_j&\text{otherwise }
  \end{cases},
\end{align*}
where $[\cdot]_j$ denotes the $j$-th component.

Finally, we remark also that the explicit computation of the subgradients is
explained in Appendix \ref{apx: subgradients lagrangian}.

\let\oldnl\nl

\RestyleAlgo{ruled}
\SetInd{0.5em}{0.5em}
\LinesNumbered
 \begin{algorithm}[t!]
    \caption{\texttt{PDSM-MAX}}\label{alg: pdsm max}
    \SetKwInOut{Input}{Input}
    \small
    \Input{{\thinmuskip=1mu \medmuskip=1mu \thickmuskip=1mu objective
    $\widehat{L}$, iterations $K$}}

    $\lambda_0 \gets 0$

    $r_0\gets 0$

    $\fD_-\gets \{\lambda\le 0:\, \|\lambda\|_2\le s \}$

    \For{$k=0,1,\dotsc,K$}{
        $r_{k+1}\gets \Pi_{\fR}\bigr{r_k+\alpha \partial_r
        \widehat{L}(r_k,\lambda_k)}$
        
        $\lambda_{k+1}\gets \Pi_{\fD_-}\bigr{\lambda_k-\alpha \partial_\lambda
    \widehat{L}(r_k,\lambda_k)}$ }

    $\widehat{r}_K \gets \frac{1}{K} \sum_{k=0}^K r_k$

  \textbf{Return} $\dotp{\widehat{d}^{\pi^1}-\widehat{d}^{\pi^2},\widehat{r}_K}$

 \end{algorithm}

\subsection{Subgradients of the Lagrangian}\label{apx: subgradients lagrangian}

Both subroutines \texttt{PDSM-MIN} and \texttt{PDSM-MAX} require the computation
of the subgradients $\partial_r \widehat{L}$, $\partial_\lambda \widehat{L}$ of
the estimated Lagrangian $\widehat{L}$ (Eq. \eqref{eq: estimated lagrangian})
w.r.t. $r,\lambda$. We report Eq. \eqref{eq: estimated lagrangian} below for
arbitrary $r,\lambda$:
\begin{align*}
  \widehat{L}(r,\lambda)&
 =\dotp{\widehat{d}^{\pi^1}-\widehat{d}^{\pi^2},r}
 +\sum\nolimits_i\lambda_{\text{D}}^i
 \bigr{\max\nolimits_\pi J^\pi(r;\widehat{p}_{\text{D},i})-\dotp{\widehat{d}^{\pi_{\text{D},i}},r}
 -t_i}\\
 & \qquad+\sum\nolimits_i\lambda_{\text{TC}}^i\dotp{d^{\omega^1_{\text{TC,i}}}-d^{\omega^2_{\text{TC,i}}}, r}
 +\sum\nolimits_i\lambda_{\text{PC}}^i \dotp{\widehat{d}^{\pi^1_{\text{PC},i}}
 -\widehat{d}^{\pi^2_{\text{PC},i}}, r}.
\end{align*}
Then, the following quantities are subgradients of $\widehat{L}$ evaluated at
any $r\in\fR$ and $\lambda\coloneqq [\lambda_{\text{TC}},
\lambda_{\text{C}}, \lambda_{\text{D}}]$:
\begin{align}\label{eq: subgradients estimated problem}
  &\partial_r \widehat{L}(r,\lambda)=(\widehat{d}^{\pi^1}-\widehat{d}^{\pi^2})+
  \sum\nolimits_i\lambda_{\text{D}}^i
 \bigr{d^{\widehat{\pi}_{\text{D},i}^{r,*}}-\widehat{d}^{\pi_{\text{D},i}}}\\
 &\qquad\qquad+
 \sum\nolimits_i\lambda_{\text{TC}}^i\bigr{d^{\omega^1_{\text{TC,i}}}-d^{\omega^2_{\text{TC,i}}}}
 +\sum\nolimits_i\lambda_{\text{PC}}^i \bigr{\widehat{d}^{\pi^1_{\text{PC},i}}
 -\widehat{d}^{\pi^2_{\text{PC},i}}},\nonumber
 \\
 &\partial_{\lambda^i_{\text{D}}}\widehat{L}(r',\lambda')=\max\nolimits_\pi J^\pi(r;\widehat{p}_{\text{D},i})
 -\dotp{\widehat{d}^{\pi_{\text{D},i}},r}
 -t_i,\qquad\forall i\in\dsb{m_{\text{D}}}\nonumber\\
&\partial_{\lambda^i_{\text{TC}}}\widehat{L}(r',\lambda')=\dotp{d^{\omega^1_{\text{TC,i}}}-d^{\omega^2_{\text{TC,i}}}, r},
\qquad\forall i\in\dsb{m_{\text{TC}}}\nonumber\\
&\partial_{\lambda^i_{\text{PC}}}\widehat{L}(r',\lambda')=\dotp{\widehat{d}^{\pi^1_{\text{PC},i}}
-\widehat{d}^{\pi^2_{\text{PC},i}}, r}, \qquad\forall i\in\dsb{m_{\text{PC}}}\nonumber
\end{align}
where $\widehat{\pi}^{r,*}_{\text{D},i}\in\argmax_\pi J^\pi(r;\widehat{p}_{\text{D},i})$
represents an arbitrary optimal policy in the \emph{estimated} MDP
$\tuple{\cS,\cA,H,s_{0,\text{D},i},\widehat{p}_{\text{D},i},r}$, for all
$i\in\dsb{m_{\text{D}}}$, and $d^{\widehat{\pi}_{\text{D},i}^{r,*}}$ represents
the visit distribution of $\widehat{\pi}^{r,*}_{\text{D},i}$ in the corresponding
\emph{estimated} MDP without reward
$\tuple{\cS,\cA,H,s_{0,\text{D},i},\widehat{p}_{\text{D},i}}$.

To see that this is actually the subgradient of $\widehat{L}$, simply note that
$\widehat{L}$ is linear in $\lambda$, thus the subgradients simply correspond to
the gradients.
Concerning $\partial_r \widehat{L}$, observe that $\widehat{L}$ is not
differentiable in $r$ because it contains the pointwise maximum of
differentiable functions $\max\nolimits_\pi J^\pi(r;\widehat{p}_{\text{D},i})$.
Thus, the subgradient $\partial_r \widehat{L}(r,\lambda)$ can be obtained as any
convex combination of the gradients $d^{\widehat{\pi}_{\text{D},i}^{r,*}}$ of
the functions that attain the maximum \citep{boyd2022subgradients}.

We remark that the computation of $\max\nolimits_\pi
J^\pi(r;\widehat{p}_{\text{D},i})$ and
$\widehat{\pi}^{r,*}_{\text{D},i}\in\argmax_\pi
J^\pi(r;\widehat{p}_{\text{D},i})$ can be done exactly using the \emph{backward
induction} algorithm \cite{puterman1994markov}. Then, also
$d^{\widehat{\pi}_{\text{D},i}^{r,*}}$ can be obtained exactly given
$\widehat{\pi}^{r,*}_{\text{D},i}$ and
$\tuple{\cS,\cA,H,s_{0,\text{D},i},\widehat{p}_{\text{D},i}}$ by simply starting
from:
\begin{align*}
  d^{\widehat{\pi}_{\text{D},i}^{r,*}}_{1}(s,a)=\widehat{\pi}^{r,*}_{\text{D},i,1}(a|s)\indic{s=s_{0,\text{D},i}},
\end{align*}
and then constructing the visit distribution for $h=2,\dotsc,H$ recursively
using the relation \cite{puterman1994markov}:
\begin{align*}
  d^{\widehat{\pi}_{\text{D},i}^{r,*}}_{h+1}(s,a)=\widehat{\pi}^{r,*}_{\text{D},i,h+1}(a|s)\sum\limits_{s',a'}
  d^{\widehat{\pi}_{\text{D},i}^{r,*}}_{h}(s',a')\widehat{p}_{\text{D},i,h}(s|s',a').
\end{align*}

% First, note that Eq. \eqref{eq: estimated opt prob rob} can be written
% explicitly as:
% %
% \begin{align}\label{eq: explicit lagrangian estimated} \widehat{x}_{\cF,g}\in
%       \argmin\limits_{x'\in\RR}\max\limits_{r\in\fR}&\; | x' -
%       \dotp{\widehat{d}^{\pi^1}-\widehat{d}^{\pi^2},r}|\;&&\\
%   \text{s.t.: }& \max_\pi
%   J^\pi(r;\widehat{p}_{\text{D},i})-\dotp{\widehat{d}^{\pi_{\text{D},i}},r}\le
%   t_i&&\forall i\in\dsb{m_{\text{D},i}},\nonumber\\
%   & \dotp{d^{\omega^1_{\text{TC,i}}}-d^{\omega^2_{\text{TC,i}}}, r}\le 0
%   &&\forall i\in\dsb{m_{\text{TC},i}},\nonumber\\
%   & \dotp{\widehat{d}^{\pi^1_{\text{PC},i}}-\widehat{d}^{\pi^2_{\text{PC},i}},
%   r}\le 0 &&\forall i\in\dsb{m_{\text{PC},i}}.\nonumber \end{align}

\subsection{Proof of Theorem \ref{thr: guarantees caty}}
\label{apx: proof thr rob}

\thrcaty*
\begin{proof}
  Define $M,m$ as:
\begin{align}\label{eq: def M m}
    &M\coloneqq \max\limits_{r\in\cR_\cF} \dotp{d^{\pi^1}-d^{\pi^2},r},
    \qquad m\coloneqq \min\limits_{r\in\cR_\cF} \dotp{d^{\pi^1}-d^{\pi^2},r}.
\end{align}
  Then, we can write:
  \begin{align*}
    % \cL_g(r^\star,\widehat{x}_K)-\min\limits_{x\in\cX_g}\max\limits_{r\in\cR_\cF}\cL_g(r, x)&\markref{(1)}{\le}
    % \max\limits_{r\in\cR_\cF}\cL_g(r,\widehat{x}_K)-\min\limits_{x\in\cX_g}\max\limits_{r\in\cR_\cF}\cL_g(r, x)\\
    \cL_g(r^\star,\widehat{x}_K)-\cI_{\cF,g}&\markref{(1)}{\le}
    \max\limits_{r\in\cR_\cF}\cL_g(r,\widehat{x}_K)-\cI_{\cF,g}\\
    &\markref{(2)}{=}\max\Bigc{\widehat{x}_K-m,M-\widehat{x}_K}- \frac{M-m}{2}\\
    &=\max\Bigc{\widehat{x}_K-\frac{M+m}{2},\frac{M+m}{2}-\widehat{x}_K}\\
    &\markref{(3)}{=}\max\Bigc{\frac{\widehat{M}_K+\widehat{m}_K}{2}-\frac{M+m}{2},
    \frac{M+m}{2}-\frac{\widehat{M}_K+\widehat{m}_K}{2}}\\
    &\markref{(4)}{\le} \frac{1}{2}\Bigr{\big|\widehat{M}_K-M\big|+\big|\widehat{m}_K-m\big|}\\
    &\markref{(5)}{\le} \frac{1}{2}\Bigr{\big|\widehat{M}_K-\widehat{M}\big|
    +\big|\widehat{M}-M\big|+
    \big|\widehat{m}_K-\widehat{m}\big|+
    \big|\widehat{m}-m\big|},
  \end{align*}
  where at (1) we used that $r^\star\in\cR_\cF$ by hypothesis, at (2) we apply
  Lemma \ref{lemma: change objective} twice using the definitions of
  $\cI_{\cF,g}$ and $\cL_g$, at (3) we insert the definitions of
  $\widehat{M}_K$, $\widehat{m}_K$ as computed by \rob, at (4) we apply
  triangle's inequality, and at (5) we add and subtract
  $\widehat{M},\widehat{m}$ and apply again triangle's inequality.

  Similarly, note that:
  \begin{align*}
    \Big|\cI_{\cF,g}-\widehat{\cI}_{\cF,g}\Big|&=
    \Big|\frac{M-m}{2}- \frac{\widehat{M}_K-\widehat{m}_K}{2}\Big|\\
    &\le\frac{1}{2}\Bigr{\big|\widehat{M}_K-\widehat{M}\big|
    +\big|\widehat{M}-M\big|+
    \big|\widehat{m}_K-\widehat{m}\big|+
    \big|\widehat{m}-m\big|}.
  \end{align*}

  Observe that $\big|\widehat{M}_K-\widehat{M}\big| +
  \big|\widehat{m}_K-\widehat{m}\big|$ represents the iteration error, while
  $\big|\widehat{M}-M\big| + \big|\widehat{m}-m\big|$ the estimation error.
  Let $\epsilon_{\text{EST}}\in(0,1)$, and define the good event $\cE$ as the
  intersection of the following events:
  \begin{align}\label{eq: good events}
    \begin{split}
    &\cE_{g}\coloneqq \Big\{ 
      \sup\limits_{r\in\fR} \Big|\dotp{\widehat{d}^{\pi^i},r}-
      J^{\pi^i}(r;p)\Big|\le\epsilon_{\text{EST}}/2\quad\forall i\in\{1,2\}
    \Big\},\\
    &\cE_{\text{D}}\coloneqq\Big\{ 
      \sup\limits_{r\in\fR} \Big|\dotp{\widehat{d}^{\pi_{\text{D},i}},r}-
      J^{\pi_{\text{D},i}}(r;p_{\text{D},i})\Big|\le\epsilon_{\text{EST}}/2\quad\forall i\in\dsb{m_{\text{D}}}
    \Big\},\\ 
    &\cE_{\text{D},*}\coloneqq\Big\{ 
      \sup\limits_{r\in\fR} \Big|\max\nolimits_\pi J^\pi(r;\widehat{p}_{\text{D},i})-
      J^{*}(r;p_{\text{D},i})\Big|\le\epsilon_{\text{EST}}/2\quad\forall i\in\dsb{m_{\text{D}}}
    \Big\},\\
    &\cE_{\text{PC},1}\coloneqq\Big\{ 
      \sup\limits_{r\in\fR} \Big|\dotp{\widehat{d}^{\pi^1_{\text{PC},i}}, r}-
      J^{\pi^1_{\text{PC},i}}(r;p_{\text{PC},i})\Big|\le\epsilon_{\text{EST}}/2\quad\forall i\in\dsb{m_{\text{PC}}}
    \Big\},\\ 
    &\cE_{\text{PC},2}\coloneqq\Big\{ 
      \sup\limits_{r\in\fR} \Big|\dotp{\widehat{d}^{\pi^2_{\text{PC},i}}, r}-
      J^{\pi^2_{\text{PC},i}}(r;p_{\text{PC},i})\Big|\le\epsilon_{\text{EST}}/2\quad\forall i\in\dsb{m_{\text{PC}}}
    \Big\}.
  \end{split}
  \end{align} 

  Then, conditioning on $\cE$, Lemma \ref{lemma: estimation error} allows to
  bound the estimation error, while Lemma \ref{lemma: approximation error}
  permits to bound the iteration error.
  
  The result follows by applying Lemma \ref{lemma: concentration} with
  $\epsilon_{\text{EST}}< \frac{\xi\epsilon}{10H}$.
\end{proof}

\subsubsection{Concentration}

\begin{lemma}[Concentration]\label{lemma: concentration} Let
  $\epsilon_{\text{EST}}\in(0,1),\delta\in(0,1)$. Then, the good event
  $\cE\coloneqq \cE_{g}\cap\cE_{\text{D}}\cap\cE_{\text{D},*}\cap
  \cE_{\text{PC},1}\cap\cE_{\text{PC},2}$,
  where these events are defined in Eq. \eqref{eq: good events}, holds with
  probability at least $1-\delta$, with at most:
  \begin{align}
    \begin{split}
    \label{eq: sample complexity lemma concentration}
    &n^1,n^2,n_{\text{D},i},n^1_{\text{PC},i}, n^2_{\text{PC},i}\le \widetilde{\cO}\Big(
      \frac{SAH^3}{\epsilon^2_{\text{EST}}} \log\frac{m_{\text{PC}}+m_{\text{D}}+2}{\delta}  
    \Big),\\
    &N_{\text{D},i}\le \widetilde{\cO}\Big(
      \frac{SAH^3}{\epsilon^2_{\text{EST}}}\Big(S+\log\frac{m_{\text{PC}}+m_{\text{D}}+2}{\delta}\Big)
      \Big).
    \end{split}
  \end{align}
\end{lemma}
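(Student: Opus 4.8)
The plan is to bound each of the five event families separately and then merge them with a single union bound. The four families $\cE_g,\cE_{\text{D}},\cE_{\text{PC},1},\cE_{\text{PC},2}$ share the same structure: each asks that an empirical visitation distribution $\widehat{d}^\pi$ reproduce the true expected return $J^\pi(r;p)=\dotp{d^\pi,r}$ uniformly over $r\in\fR$, and each $\widehat{d}^\pi$ is estimated directly from trajectory data (so the underlying transition models never need to be estimated for these events). The fifth family, $\cE_{\text{D},*}$, is structurally different, because it controls the \emph{optimal} return $\max_\pi J^\pi(r;\widehat{p}_{\text{D},i})$ under the estimated model, and I would handle it through the reward-free guarantees of RF-Express \cite{menard2021fast}.

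For the four visitation families, the key observation is that $\dotp{\widehat{d}^\pi-d^\pi,r}$ is linear in $r$, so its supremum over the cube $\fR=[0,1]^{SAH}$ is attained at a vertex; equivalently, $\sup_{r\in\fR}|\dotp{\widehat{d}^\pi-d^\pi,r}|=\tfrac12\|\widehat{d}^\pi-d^\pi\|_1$, using that $\widehat{d}^\pi$ and $d^\pi$ have coordinates summing to one at each step $h$. This turns the uniform statement into a finite union bound over the $2^{SAH}$ vertices. For a fixed vertex $r$, the empirical return $\dotp{\widehat{d}^\pi,r}=\tfrac1n\sum_j G(\omega^j;r)$ is an average of $n$ i.i.d.\ returns, each in $[0,H]$ and with mean $J^\pi(r;p)$, so Hoeffding's inequality gives sub-Gaussian concentration at scale $H/\sqrt{n}$. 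A union bound over the $2^{SAH}$ vertices adds the entropy term $\log 2^{SAH}=SAH\log 2$, and solving $H\sqrt{(SAH\log 2+\log(1/\delta'))/n}\le\epsilon_{\text{EST}}/2$ for $n$ yields $n=\widetilde{\cO}(SAH^3/\epsilon_{\text{EST}}^2)$ --- the $H^2$ coming from the range of the return and the $SAH$ from the vertex count. The same argument applies verbatim to $n^1,n^2,n_{\text{D},i},n^1_{\text{PC},i},n^2_{\text{PC},i}$.

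For $\cE_{\text{D},*}$ this reduction fails: $r\mapsto\max_\pi J^\pi(r;\widehat{p}_{\text{D},i})$ is \emph{not} linear in $r$, since the inner maximization couples the coordinates, so neither the vertex reduction nor a fixed-reward Hoeffding bound applies. This is exactly the setting that reward-free exploration addresses. RF-Express \cite{menard2021fast} collects $N_{\text{D},i}$ episodes in $\cM_{\text{D},i}$ and returns an estimate $\widehat{p}_{\text{D},i}$ whose optimal value is accurate uniformly over all rewards in $[0,1]$, i.e.\ $\sup_{r\in\fR}|\max_\pi J^\pi(r;\widehat{p}_{\text{D},i})-J^*(r;p_{\text{D},i})|\le\epsilon_{\text{EST}}/2$ with high probability. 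Invoking its sample-complexity guarantee with accuracy $\epsilon_{\text{EST}}/2$ gives $N_{\text{D},i}=\widetilde{\cO}(\tfrac{SAH^3}{\epsilon_{\text{EST}}^2}(S+\log(1/\delta')))$, whose distinctive $S+\log(1/\delta')$ term is inherited from the Bernstein-type analysis of \cite{menard2021fast} and reproduces the stated bound.

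Finally, I would assemble the pieces. There are $2$ events in $\cE_g$, $m_{\text{D}}$ in each of $\cE_{\text{D}}$ and $\cE_{\text{D},*}$, and $m_{\text{PC}}$ in each of $\cE_{\text{PC},1}$ and $\cE_{\text{PC},2}$, for $2+2m_{\text{D}}+2m_{\text{PC}}$ events in total; allocating failure probability $\delta/(2+2m_{\text{D}}+2m_{\text{PC}})$ to each replaces every $\log(1/\delta')$ above by $\log\tfrac{m_{\text{PC}}+m_{\text{D}}+2}{\delta}$, up to constants absorbed by $\widetilde{\cO}$, which is precisely Eq.\ \eqref{eq: sample complexity lemma concentration}. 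I expect the main obstacle to be the uniformity over the infinite reward set $\fR$: for the four linear families it dissolves cleanly through the cube-vertex reduction, but for $\cE_{\text{D},*}$ the nonlinearity forces reliance on the full reward-free exploration guarantee of \cite{menard2021fast}, which is the technically heaviest ingredient and the origin of the extra factor $S$ in $N_{\text{D},i}$.
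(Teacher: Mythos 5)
Your proposal is correct and delivers the stated bounds, but it is substantially more self-contained than the paper's own proof, which is almost entirely citation-based. For the four visitation-distribution families $\cE_{g},\cE_{\text{D}},\cE_{\text{PC},1},\cE_{\text{PC},2}$, the paper simply invokes Theorem 5.1 of \cite{lazzati2024compatibility} (see also \cite{Shani2021OnlineAL}), whose underlying argument runs through per-step $\ell_1$ concentration of the empirical visitation distributions; you instead prove this part from scratch by reducing the supremum over the cube $\fR=[0,1]^{SAH}$ to its $2^{SAH}$ vertices (valid, since the error functional $r\mapsto\dotp{\widehat{d}^\pi-d^\pi,r}$ is linear, and your identity $\sup_{r\in\fR}|\dotp{\widehat{d}^\pi-d^\pi,r}|=\tfrac12\|\widehat{d}^\pi-d^\pi\|_1$ indeed follows from the per-step mass constraint), applying Hoeffding to the i.i.d.\ returns $G(\omega^j;r)\in[0,H]$ at each vertex, and paying the metric-entropy price $SAH\log 2$, which is precisely where the factor $SAH^3=SAH\cdot H^2$ comes from; this even produces the additive form $(SAH^3+H^2\log(1/\delta'))/\epsilon_{\text{EST}}^2$, slightly tighter than the product form stated in the lemma. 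For $\cE_{\text{D},*}$ your treatment coincides with the paper's: both invoke the uniform-over-rewards guarantee of RF-Express \cite{menard2021fast} with its characteristic $S+\log(1/\delta)$ term, and your explanation of why the vertex reduction fails there (the optimal value $\max_\pi J^\pi(r;\widehat{p}_{\text{D},i})$ is convex but not linear in $r$) is the right reason. The final assembly also matches: a union bound over the $2+2m_{\text{D}}+2m_{\text{PC}}$ elementary events, with the constant-factor discrepancy against $\log\frac{m_{\text{PC}}+m_{\text{D}}+2}{\delta}$ absorbed by $\widetilde{\cO}$. In short, the skeleton (event decomposition, RF-Express for $\cE_{\text{D},*}$, union bound) is the same, but your elementary vertex-plus-Hoeffding argument substitutes for the paper's external citation on the visitation events --- a trade of brevity for self-containedness with no loss in the rate.
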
 
\begin{proof}
  \rob estimates the visit distribution of the considered policies through their
  empirical estimates, defined in Eq. \eqref{eq: empirical estimates visit
  distributions}. Thus, events $\cE_{g}, \cE_{\text{D}},
  \cE_{\text{PC},1}\cap\cE_{\text{PC},2}$ can be shown to hold using the same
  proof as in Theorem 5.1 of \cite{lazzati2024compatibility} (see also
  \cite{Shani2021OnlineAL}), and a union bound.
  
  Regarding event $\cE_{\text{D},*}$, we can use the guarantees in
  \cite{menard2021fast} (see also \cite{lazzati2024compatibility}).
  
  The result follows by an application of the union bound.

  We remark that, of course, in case $m_{\text{PC}}=m_{\text{D}}=0$, then the
  good event holds with a number of samples:
  \begin{align*}
    &n_{\text{D},i},n^1_{\text{PC},i}, n^2_{\text{PC},i},N_{\text{D},i}=0,\\
    &n^1,n^2\le \widetilde{\cO}\Big(
      \frac{SAH^3}{\epsilon^2_{\text{EST}}} \log\frac{1}{\delta}  
    \Big).
  \end{align*}
  Nevertheless, for simplicity of presentation, we will mention only the bound
  in Eq. \eqref{eq: sample complexity lemma concentration}. 
\end{proof}

\subsubsection{Bounding the \emph{Estimation Error}}

\begin{restatable}[Estimation error]{lemma}{lemmaestimationerror}\label{lemma:
estimation error}
  Let $\epsilon\in(0,2H],\delta\in(0,1)$.
  Make Assumption \ref{ass: slater condition}, and assume that event $\cE$ holds
  with $\epsilon_{\text{EST}}< \frac{\xi\epsilon}{10H}$.
  Then, we have that:
  \begin{align*}
    |M-\widehat{M}|+|m-\widehat{m}|\le\epsilon.
  \end{align*} 
%
  % \begin{align*}
  %   \Big|M-\widehat{M}\Big|\le\epsilon\quad\wedge\quad
  %   \Big|m-\widehat{m}\Big|\le\epsilon,
  % \end{align*}
  %   with a number of samples: \begin{align} \begin{split} \label{eq: sample
  %   complexity} &\scalebox{0.9}{$  \displaystyle n^1_{\text{C},i},
  %   n^2_{\text{C},i},n_{\text{D},i}\le \widetilde{\cO}\Big(
  %   \frac{SAH^5}{\epsilon^2\xi^2} \log\frac{m_{\text{C}}+m_{\text{D}}}{\delta}  
  %   \Big),$}\\
  %   &\scalebox{0.9}{$  \displaystyle N_{\text{D},i}\le \widetilde{\cO}\Big(
  %     \frac{SAH^5}{\epsilon^2\xi^2}\Big(S+\log\frac{m_{\text{C}}+m_{\text{D}}}{\delta}\Big)
  %     \Big).$} \end{split} \end{align}
%
\end{restatable}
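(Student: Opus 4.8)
The plan is to reformulate each of the four quantities $M,m,\widehat{M},\widehat{m}$ as the value of a convex saddle-point problem via Lagrangian duality, and then to bound the gap between the true and estimated values by controlling the perturbation of the Lagrangian uniformly over a \emph{bounded} dual domain. Concretely, the problems in Eq. \eqref{eq: est M m} have a linear objective and convex constraints, and Assumption \ref{ass: slater condition} supplies a strictly feasible $\overline{r}$, so strong duality holds and I can write $m=\max_{\lambda\ge0}\min_{r\in\fR}L(r,\lambda)$ and $\widehat{m}=\max_{\lambda\ge0}\min_{r\in\fR}\widehat{L}(r,\lambda)$ (and analogously $M,\widehat{M}$ with $\lambda\le0$), where $L,\widehat{L}$ are the Lagrangians built from the true and estimated quantities and $\lambda_m^\star,\widehat{\lambda}_m^\star$ denote the associated optimal dual solutions.

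The key estimate is a pointwise bound on the Lagrangian gap. Under the good event $\cE$, every return entering $L$ is matched by its estimate in $\widehat{L}$ up to $\epsilon_{\text{EST}}/2$ (the trajectory-comparison terms carry no error, since $d^{\omega}$ is known exactly), so for every $r\in\fR$ and every $\lambda=(\lambda_{\text{D}},\lambda_{\text{TC}},\lambda_{\text{PC}})$,
\begin{align*}
  |L(r,\lambda)-\widehat{L}(r,\lambda)|\le \epsilon_{\text{EST}}\bigr{1+\|\lambda_{\text{D}}\|_1+\|\lambda_{\text{PC}}\|_1}.
\end{align*}
A standard optimality argument then gives $m-\widehat{m}\le\max_{r}|L(r,\lambda_m^\star)-\widehat{L}(r,\lambda_m^\star)|$ and, symmetrically, $\widehat{m}-m\le\max_{r}|L(r,\widehat{\lambda}_m^\star)-\widehat{L}(r,\widehat{\lambda}_m^\star)|$, with the analogous pair of inequalities for $M$. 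Hence the whole problem reduces to bounding the $\ell_1$-norms of the four optimal multipliers $\lambda_m^\star,\widehat{\lambda}_m^\star,\lambda_M^\star,\widehat{\lambda}_M^\star$.

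For the norm bound I would invoke Lemma 3 of \cite{nedic2009subgradient}: writing the objective as $f(r)\coloneqq\dotp{d^{\pi^1}-d^{\pi^2},r}$, which ranges in $[-H,H]$, any optimal multiplier of the true min-problem obeys $\|\lambda_m^\star\|_1\le (f(\overline{r})-m)/\xi\le 2H/\xi$, and identically for the max-problem. To handle the estimated problems I first verify that $\overline{r}$ stays strictly feasible for $\widehat{\cR}_\cF$: under $\cE$ each constraint value at $\overline{r}$ shifts by at most $\epsilon_{\text{EST}}$, and the hypothesis $\epsilon_{\text{EST}}<\xi\epsilon/(10H)$ together with $\epsilon\le2H$ forces $\epsilon_{\text{EST}}<\xi/5$, so the estimated Slater gap is at least $4\xi/5>0$; the same lemma then yields $\|\widehat{\lambda}_m^\star\|_1\le (2H+2\epsilon_{\text{EST}})/(4\xi/5)\le 3H/\xi$. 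Inserting these $O(H/\xi)$ bounds into the pointwise estimate gives $|m-\widehat{m}|,|M-\widehat{M}|\le\epsilon_{\text{EST}}\bigr{1+3H/\xi}$, and summing while using $\xi\le H$ and $\epsilon_{\text{EST}}<\xi\epsilon/(10H)$ drives the total below $\epsilon$.

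The main obstacle is precisely the passage from an unbounded to a bounded dual domain: the crude bound $|m-\widehat{m}|\le\max_{\lambda\ge0}\max_{r}|L-\widehat{L}|$ is vacuous because the right-hand side diverges with $\|\lambda\|$. Everything hinges on showing that the optimal multipliers of \emph{both} the true and the perturbed problems lie in a set of radius $O(H/\xi)$, which is why certifying that the estimated problem still satisfies Slater's condition with a gap bounded away from $0$ is the delicate step; the constant $10$ in the hypothesis on $\epsilon_{\text{EST}}$ is exactly what is needed to absorb the $1+3H/\xi$ factor together with $\xi\le H$.
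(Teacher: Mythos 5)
Your proposal is correct and follows essentially the same route as the paper's proof: both pass to Lagrangian duals under Assumption \ref{ass: slater condition}, bound the optimal multipliers by $\cO(H/\xi)$ via Lemma 3 of \cite{nedic2009subgradient}, certify that $\overline{r}$ remains a Slater point for the estimated constraints under $\cE$ (the paper uses gap $\xi/2$ with $\epsilon_{\text{EST}}<\xi/2$, you use $4\xi/5$), and control the pointwise Lagrangian perturbation $|L(r,\lambda)-\widehat{L}(r,\lambda)|\le\epsilon_{\text{EST}}(1+\|\lambda\|_1)$. The only cosmetic difference is that you evaluate the perturbation at the four optimal multipliers, whereas the paper takes a maximum over a common bounded dual ball $\fD_\pm$ and invokes Lipschitzness of the min/max operators; the constants work out in both cases.
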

\begin{proof}
  We proceed for $M,\widehat{M}$ (the proof for $m,\widehat{m}$ is completely
  analogous):
  \begin{align*}
    \Big|M-\widehat{M}\Big|
    % =\Big|\min\limits_{\lambda\in\fD_-}\max\limits_{r\in\fR}\cL(r,\lambda)-
    % \min\limits_{\lambda\in\fD_-}\max\limits_{r\in\fR}\widehat{\cL}(r,\lambda)\Big|\\
    &\markref{(1)}{=}\Big|\min\limits_{\lambda\in\fD_-}\max\limits_{r\in\fR}L(r,\lambda)-
    \min\limits_{\lambda\in\fD_-}\max\limits_{r\in\fR}\widehat{L}(r,\lambda)\Big|\\
    &\markref{(2)}{\le}\popblue{\max\limits_{\lambda\in\fD_-}}\Big|\max\limits_{r\in\fR}L(r,\lambda)-
    \max\limits_{r\in\fR}\widehat{L}(r,\lambda)\Big|\\
    &\markref{(3)}{\le}\max\limits_{\lambda\in\fD_-}\popblue{\max\limits_{r\in\fR}}\Big|L(r,\lambda)-
    \widehat{L}(r,\lambda)\Big|\\
    &=\max\limits_{\lambda\in\fD_-}\max\limits_{r\in\fR}\Big|
    \dotp{\widehat{d}^{\pi^1}-\widehat{d}^{\pi^2},r}-
    \dotp{d^{\pi^1}-d^{\pi^2},r}\\
    &\qquad +
    \sum\nolimits_i\lambda_{\text{D}}^i
 \Bigr{\bigr{\max\nolimits_\pi J^\pi(r;\widehat{p}_{\text{D},i})-\dotp{\widehat{d}^{\pi_{\text{D},i}},r}}
 -\max_\pi J^\pi(r;p_{\text{D},i})-\dotp{d^{\pi_{\text{D},i}},r}}\\
 &\qquad
 +\sum\nolimits_i\lambda_{\text{PC}}^i \Bigr{\dotp{\widehat{d}^{\pi^1_{\text{PC},i}}
 -\widehat{d}^{\pi^2_{\text{PC},i}}, r}-
 \dotp{d^{\pi^1_{\text{PC},i}}-d^{\pi^2_{\text{PC},i}}, r}}\\
  &\markref{(4)}{\le}\popblue{\epsilon_{\text{EST}}}\Bigr{1+\max\limits_{\lambda\in\fD_-}
  \|{\lambda}\|_1}\\
  &\markref{(5)}{\le}\epsilon_{\text{EST}}\popblue{\frac{5H}{\xi}},
  \end{align*}
  where at (1) we have applied Lemma \ref{lemma: strong duality true problem}
  and Lemma \ref{lemma: strong duality estimated problem}, at (2) and at (3) the
  Lipschitzianity of the maximum operator. At (4) we use triangle's inequality,
  that event $\cE$ holds and we upper bound with the 1-norm of $\lambda$, and at
  (5) we use the size of set $\fD_-$ provided by Lemma \ref{lemma: strong
  duality estimated problem}.

  By choosing:
  \begin{align*}
    \epsilon_{\text{EST}}< \frac{\xi\epsilon}{10H},
  \end{align*}
  which is smaller than $\xi/2$ for $\epsilon\in(0,2H]$, as required by Lemma
  \ref{lemma: strong duality estimated problem}, we get the result.
\end{proof}

\begin{lemma}\label{lemma: strong duality true problem}
  Let $L$ be the Lagrangian of the problem:
  \begin{align}\label{eq: true lagrangian}
    L(r,\lambda)&
   =\dotp{d^{\pi^1}-d^{\pi^2},r}
   +\sum\nolimits_i\lambda_{\text{D}}^i
   \bigr{\max\nolimits_\pi J^\pi(r;p_{\text{D},i})-\dotp{d^{\pi_{\text{D},i}},r}
   -t_i}\\
   & \qquad+\sum\nolimits_i\lambda_{\text{TC}}^i\dotp{d^{\omega^1_{\text{TC,i}}}-d^{\omega^2_{\text{TC,i}}}, r}
   +\sum\nolimits_i\lambda_{\text{PC}}^i \dotp{d^{\pi^1_{\text{PC},i}}
   -d^{\pi^2_{\text{PC},i}}, r}.
   \nonumber
 \end{align}
  Then, under Assumption \ref{ass: slater condition}, it holds that:
  \begin{align*}
    &M=\min\limits_{\lambda\in\fD_-}\max\limits_{r\in\fR}L(r,\lambda),\\
    &m=\max\limits_{\lambda\in\fD_+}\min\limits_{r\in\fR}L(r,\lambda),
  \end{align*}
  where $\fD_+,\fD_-$ correspond to $s=4H/\xi$ in Algorithms \ref{alg: pdsm min}
  and \ref{alg: pdsm max}, namely, $\fD_+=\{\lambda\ge0\,|\,
  \|\lambda\|_1\le 4H/\xi\}$ and $\fD_-=\{\lambda\le0\,|\,
  \|\lambda\|_1\le 4H/\xi\}$.
  %  where $|\cF|\coloneqq m_{\text{D}}+m_{\text{TC}}+m_{\text{PC}}$.
\end{lemma}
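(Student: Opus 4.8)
The plan is to read both quantities as optimal values of convex programs and then invoke Lagrangian strong duality together with a Slater-based bound on the optimal multipliers. Concretely, $m$ is the minimum and $M$ the maximum of the \emph{linear} objective $f(r)\coloneqq\dotp{d^{\pi^1}-d^{\pi^2},r}$ over $\cR_\cF$, which is cut out of the compact convex box $\fR$ by the demonstration constraints $g_{\text{D},i}(r)\coloneqq\max_\pi J^\pi(r;p_{\text{D},i})-\dotp{d^{\pi_{\text{D},i}},r}-t_i\le 0$ together with the linear TC and PC constraints. Each $g_{\text{D},i}$ is convex, being a pointwise maximum of functions linear in $r$ (since $J^\pi(r;p)=\dotp{d^\pi,r}$) minus a linear term, and the remaining constraints are linear; hence $\cR_\cF$ is convex and every hypothesis needed for strong duality is in place.

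First I would treat $m=\min_{r\in\cR_\cF}f(r)$. As this is a convex program admitting a strictly feasible point $\overline{r}$ (Assumption \ref{ass: slater condition}), Slater's condition gives strong duality, so $m=\max_{\lambda\ge 0}\min_{r\in\fR}L(r,\lambda)$ for the Lagrangian of Eq.~\eqref{eq: true lagrangian}, where the inner minimization over the compact set $\fR$ is attained. For $M=\max_{r\in\cR_\cF}f(r)$ I would rewrite it as $-\min_{r\in\cR_\cF}\bigr{-f(r)}$; applying the same argument and tracking the sign flips turns the nonnegative multipliers into nonpositive ones, yielding $M=\min_{\lambda\le 0}\max_{r\in\fR}L(r,\lambda)$ with the \emph{same} $L$.

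The key remaining step is to confine the dual optima to the $\ell_1$-ball of radius $4H/\xi$, so that restricting $\lambda$ to $\fD_+$ (resp.\ $\fD_-$) does not alter the saddle value. Here I would use the standard Slater bound (Lemma~3 of \cite{nedic2009subgradient}). For the min problem, evaluating the dual at any optimal $\lambda^\star\ge 0$ at the Slater point gives $m=\min_r L(r,\lambda^\star)\le L(\overline{r},\lambda^\star)\le f(\overline{r})-\xi\|\lambda^\star\|_1$, hence $\|\lambda^\star\|_1\le\bigr{f(\overline{r})-m}/\xi$. Since $r\in\fR$ forces $J^\pi(r;p)\in[0,H]$ for every $\pi$, we have $f(r)\in[-H,H]$, so $f(\overline{r})-m\le 2H$ and $\|\lambda^\star\|_1\le 2H/\xi\le 4H/\xi$. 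An identical computation applied to the equivalent minimization $-M=\min_{r\in\cR_\cF}\bigr{-f(r)}$ bounds the multipliers of the $M$ problem by $\bigr{M-f(\overline{r})}/\xi\le 2H/\xi\le 4H/\xi$. Thus both dual optima lie in $\fD_+$, respectively $\fD_-$, and the restricted values coincide with $m$ and $M$.

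The main obstacle I anticipate is the bookkeeping of sign conventions when passing between the maximization and minimization forms while keeping the single shared Lagrangian $L$ consistent, and guaranteeing that the dual optimum is attained so that the multiplier bound can be applied; both issues are resolved by the compactness of $\fR$ and the strict feasibility granted by Assumption~\ref{ass: slater condition}, which together ensure a bounded, nonempty set of dual optima.
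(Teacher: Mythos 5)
Your proposal is correct and follows essentially the same route as the paper: convexity of $\cR_\cF$ plus Slater's condition (Assumption \ref{ass: slater condition}) gives strong duality for both the min and (sign-flipped) max problems, and the multiplier bound you derive by evaluating the dual at the Slater point is exactly the argument of Lemma 3 of \cite{nedic2009subgradient}, which the paper invokes as a black box to obtain the same $2H/\xi \le 4H/\xi$ confinement to $\fD_+$ and $\fD_-$. The only (cosmetic) difference is that your sign bookkeeping for the $M$ problem, giving $\bigl(M-f(\overline{r})\bigr)/\xi$, is actually cleaner than the numerator written in the paper, which reaches the same $2H/\xi$ bound regardless.
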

\begin{proof}
  Under Assumption \ref{ass: slater condition}, we have that Slater's constraint
  qualification holds. Thanks also to the linearity of the objective function
  and to the convexity of $\cR_\cF$, we have that strong duality holds
  \cite{boyd2004convex}, thus:
  \begin{align*}
    &M=\min\limits_{\lambda\le0}\max\limits_{r\in\fR}L(r,\lambda),\\
    &m=\max\limits_{\lambda\ge0}\min\limits_{r\in\fR}L(r,\lambda).
  \end{align*}
  To prove the boundedness of the Lagrange multipliers corresponding to the
  saddle point, let $(r^*,\lambda^*)$ be any saddle point for problem $M$ (the
  proof for $m$ is analogous). Under Assumption \ref{ass: slater condition}, we
  can apply Lemma 3 of \cite{nedic2009subgradient} to obtain that, for the
  reward $\overline{r}$ in Assumption \ref{ass: slater condition}:
  \begin{align*}
    \|\lambda^*\|_1&\le \frac{
    \dotp{d^{\pi^1}-d^{\pi^2},\overline{r}} -M}{\xi}\\
    &\le \frac{
      \dotp{d^{\pi^1}-d^{\pi^2},\overline{r}}+H}{\xi}\\
    &\le \frac{
      2H}{\xi}.
  \end{align*}
  Thus, the values of the Lagrange multipliers in saddle points can be found in
  bounded sets $\fD_+,\fD_-$.
  Note that we upper bound again with $4H/\xi$ to comply with the bound for the
  estimated problem in Lemma \ref{lemma: strong duality estimated problem}.
\end{proof}

\begin{lemma}\label{lemma: strong duality estimated problem}
  Make Assumption \ref{ass: slater condition}. Under the good event $\cE$ with
  $\epsilon_{\text{EST}}\in(0,\xi/2)$, we have:
  \begin{align*}
    &\widehat{M}=\min\limits_{\lambda\in\fD_-}\max\limits_{r\in\fR}\widehat{L}(r,\lambda),\\
    &\widehat{m}=\max\limits_{\lambda\in\fD_+}\min\limits_{r\in\fR}\widehat{L}(r,\lambda),
  \end{align*}
  where $\fD_-=\{\lambda\le 0\,|\, \|\lambda\|_1\le 4H/\xi\}$ and
  $\fD_+=\{\lambda\ge 0\,|\, \|\lambda\|_1\le 4H/\xi\}$.
\end{lemma}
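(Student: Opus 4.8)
The plan is to follow verbatim the argument of Lemma~\ref{lemma: strong duality true problem}, since the estimated problem has exactly the same structure (linear objective, convex constraints, convex domain $\fR$) as the true one. The single new ingredient is to verify that Slater's condition, assumed for the \emph{true} problem in Assumption~\ref{ass: slater condition}, carries over to the \emph{estimated} problem with a degraded but still strictly positive margin, whenever the good event $\cE$ holds with $\epsilon_{\text{EST}}\in(0,\xi/2)$. Once this is in place, strong duality gives the $\min_{\lambda\le0}\max_r$ and $\max_{\lambda\ge0}\min_r$ representations, and Lemma~3 of~\cite{nedic2009subgradient} confines the optimal multipliers to the $\ell_1$-balls $\fD_-,\fD_+$ of radius $4H/\xi$.

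The first and main step is transferring Slater feasibility. Here I would show that, on $\cE$, each estimated constraint function deviates from its true counterpart by at most $\epsilon_{\text{EST}}$, uniformly over $r\in\fR$. For the demonstration constraints I would split the deviation into the optimal-return part, controlled by $\cE_{\text{D},*}$ (giving $|\max_\pi J^\pi(r;\widehat p_{\text{D},i})-J^*(r;p_{\text{D},i})|\le\epsilon_{\text{EST}}/2$), and the expert-return part, controlled by $\cE_{\text{D}}$ (giving $|\dotp{\widehat d^{\pi_{\text{D},i}},r}-J^{\pi_{\text{D},i}}(r;p_{\text{D},i})|\le\epsilon_{\text{EST}}/2$), so that the triangle inequality yields a total deviation $\le\epsilon_{\text{EST}}$; for the policy-comparison constraints the same bound follows from $\cE_{\text{PC},1},\cE_{\text{PC},2}$; and the trajectory-comparison constraints involve only the deterministic distributions $d^{\omega^j_{\text{TC},i}}$, hence carry \emph{no} estimation error. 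Evaluating these bounds at the true Slater point $\overline r$ of Assumption~\ref{ass: slater condition}, whose true constraint values are all $\le-\xi$, shows that $\overline r$ strictly satisfies every estimated constraint with margin at least $\xi-\epsilon_{\text{EST}}>\xi/2>0$. Thus the estimated problem satisfies Slater's condition with margin exceeding $\xi/2$.

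With Slater in hand, strong duality~\cite{boyd2004convex} applies exactly as in Lemma~\ref{lemma: strong duality true problem} (using linearity of $\dotp{\widehat d^{\pi^1}-\widehat d^{\pi^2},r}$ in $r$, convexity of each estimated constraint, and convexity of $\fR$), giving $\widehat M=\min_{\lambda\le0}\max_{r\in\fR}\widehat L(r,\lambda)$ and $\widehat m=\max_{\lambda\ge0}\min_{r\in\fR}\widehat L(r,\lambda)$ with the dual variable a priori unconstrained in norm. To restrict $\lambda$ to $\fD_\pm$, I would apply Lemma~3 of~\cite{nedic2009subgradient} to a saddle point: its bound is the ratio of the primal gap $\dotp{\widehat d^{\pi^1}-\widehat d^{\pi^2},\overline r}-\widehat M$ (resp. the analogue for $\widehat m$) to the Slater margin $\gamma$. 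Both the objective value at $\overline r$ and the optimal values $\widehat M,\widehat m$ lie in $[-H,H]$, because each $\widehat d^{\pi^j}_h$ is a distribution and rewards lie in $[0,1]$, so $\dotp{\widehat d^{\pi^1}-\widehat d^{\pi^2},r}\in[-H,H]$ for every $r\in\fR$; hence the numerator is at most $2H$. Combining with $1/\gamma<2/\xi$ from Step~1 gives $\|\lambda^*\|_1\le 2H\cdot(2/\xi)=4H/\xi$, so the optimal multipliers lie in $\fD_-$ and $\fD_+$ respectively and the dual minimization/maximization can be restricted to these sets without changing the value, which is the claim.

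The hard part will be the margin bookkeeping in Step~1: correctly attributing each piece of every constraint to the right component of $\cE$ (in particular decomposing the demonstration constraint into its $\cE_{\text{D},*}$ and $\cE_{\text{D}}$ halves, and noting that the trajectory-comparison term contributes nothing), and then confirming that the surviving margin $\xi-\epsilon_{\text{EST}}$ stays above $\xi/2$. This is precisely what halves the effective margin and thereby doubles the multiplier radius from the true problem's $2H/\xi$ to the $4H/\xi$ appearing in $\fD_\pm$; everything downstream is then a routine transcription of Lemma~\ref{lemma: strong duality true problem}.
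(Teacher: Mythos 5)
Your proposal is correct and follows essentially the same route as the paper: transfer Slater's condition to the estimated problem via the good event $\cE$ (losing a factor of at most $\epsilon_{\text{EST}}<\xi/2$, hence margin at least $\xi/2$, which is exactly the paper's Lemma~\ref{lemma: slater condition estimated problem}), then invoke strong duality and Lemma~3 of \cite{nedic2009subgradient} to confine the optimal multipliers to $\ell_1$-balls of radius $2H/(\xi/2)=4H/\xi$, mirroring Lemma~\ref{lemma: strong duality true problem}. You have in fact spelled out the details (the $\cE_{\text{D},*}$/$\cE_{\text{D}}$ split of the demonstration constraints, the absence of estimation error in the trajectory-comparison constraints, and the origin of the doubled radius $4H/\xi$) that the paper's one-line proof leaves implicit.
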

\begin{proof}
  The proof is analogous to that of Lemma \ref{lemma: strong duality true
  problem} as long as we use Lemma \ref{lemma: slater condition estimated
  problem}.
  % , where we replace $\xi$ with $\xi-\epsilon_{\text{EST}}\ge \xi/2$.
\end{proof}

\begin{lemma}[Slater's Condition]\label{lemma: slater condition estimated problem}%
  Make Assumption \ref{ass: slater condition}. Under the good event $\cE$ with
  $\epsilon_{\text{EST}}\in(0,\xi/2)$, for the reward $\overline{r}$ in
  Assumption \ref{ass: slater condition}, we have:
  \begin{align*}
    \begin{cases}
      \max_\pi J^\pi(\overline{r};\widehat{p}_{\text{D},i})-
      \dotp{\widehat{d}^{\pi_{\text{D},i}},\overline{r}}
  -t_i\le-\frac{\xi}{2} & \forall i\in\dsb{m_{\text{D}}}\\
  \dotp{d^{\omega^1_{\text{TC,i}}}-d^{\omega^2_{\text{TC,i}}}, \overline{r}}\le-\frac{\xi}{2} & \forall i\in\dsb{m_{\text{PC}}}\\
  \dotp{\widehat{d}^{\pi^1_{\text{PC},i}}
  -\widehat{d}^{\pi^2_{\text{PC},i}}, \overline{r}}\le -\frac{\xi}{2} & \forall i\in\dsb{m_{\text{TC}}}
    \end{cases}.
  \end{align*}
\end{lemma}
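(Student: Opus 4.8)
The plan is to verify the three families of constraints (demonstrations, trajectory comparisons, and policy comparisons) one family at a time, in each case comparing the \emph{estimated} left-hand side evaluated at the fixed reward $\overline{r}$ against its \emph{true} counterpart, which by Assumption~\ref{ass: slater condition} already enjoys a margin of $-\xi$. The point is that the good event $\cE$ forces every relevant estimation error (of a visitation-based return or of an optimal value) to be at most $\epsilon_{\text{EST}}/2$; two such errors can enter a single constraint, so the estimated left-hand side deviates from the true one by at most $\epsilon_{\text{EST}}$, and since $\epsilon_{\text{EST}}<\xi/2$ each estimated constraint inherits a margin of at least $-\xi/2$. Since $\overline{r}$ is a single fixed reward, I only need the events of $\cE$ instantiated at $\overline{r}$, not their full uniform (over $\fR$) strength.

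First I would handle the demonstration constraints. Using $\max_\pi J^\pi(\overline{r};p_{\text{D},i})=J^*(\overline{r};p_{\text{D},i})$ and $J^{\pi_{\text{D},i}}(\overline{r};p_{\text{D},i})=\dotp{d^{\pi_{\text{D},i}},\overline{r}}$, I write the estimated quantity minus the true one as
\[
\Bigr{\max\nolimits_\pi J^\pi(\overline{r};\widehat{p}_{\text{D},i})-J^*(\overline{r};p_{\text{D},i})}
-\Bigr{\dotp{\widehat{d}^{\pi_{\text{D},i}},\overline{r}}-J^{\pi_{\text{D},i}}(\overline{r};p_{\text{D},i})}.
\]
The first bracket is controlled by $\cE_{\text{D},*}$ and the second by $\cE_{\text{D}}$, each by $\epsilon_{\text{EST}}/2$, so the total deviation is at most $\epsilon_{\text{EST}}$; adding this to the true bound $-\xi$ and invoking $\epsilon_{\text{EST}}<\xi/2$ yields the desired $-\xi/2$.

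Second, the trajectory-comparison constraints are essentially free: the trajectory visitation distributions $d^{\omega^1_{\text{TC},i}},d^{\omega^2_{\text{TC},i}}$ are deterministic indicators that do not depend on any transition model, hence are known \emph{exactly} and are never estimated by \rob. Consequently the estimated constraint coincides with the true one, which already satisfies $-\xi\le-\xi/2$ by Assumption~\ref{ass: slater condition}. Third, for the policy-comparison constraints I would split the deviation $\dotp{\widehat{d}^{\pi^1_{\text{PC},i}}-\widehat{d}^{\pi^2_{\text{PC},i}},\overline{r}}-\dotp{d^{\pi^1_{\text{PC},i}}-d^{\pi^2_{\text{PC},i}},\overline{r}}$ into the two single-policy estimation errors, bounded by $\cE_{\text{PC},1}$ and $\cE_{\text{PC},2}$ respectively (again using $J^{\pi}(\overline{r};p_{\text{PC},i})=\dotp{d^{\pi},\overline{r}}$), each at most $\epsilon_{\text{EST}}/2$; combining with the true margin $-\xi$ as before closes the case.

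There is no real obstacle here; the argument is triangle-inequality bookkeeping once the events are matched to the right quantities. The only point requiring a little care is the demonstration case, where one must recognize that $\cE_{\text{D},*}$ is stated precisely in terms of the estimated optimal value $\max_\pi J^\pi(\cdot;\widehat{p}_{\text{D},i})$ against the true optimal value $J^*(\cdot;p_{\text{D},i})$ — exactly the nonlinear term appearing in the constraint — so the $\max_\pi$ needs no separate treatment and the decomposition above applies verbatim.
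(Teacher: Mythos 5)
Your proof is correct and follows essentially the same route as the paper's: compare each estimated constraint at $\overline{r}$ to its true counterpart, bound the deviation by $\epsilon_{\text{EST}}$ using the good event (two errors of $\epsilon_{\text{EST}}/2$ per constraint), and combine the true margin $-\xi$ with $\epsilon_{\text{EST}}<\xi/2$. The paper only writes out the policy-comparison case and declares the rest analogous, so your explicit treatment of the demonstration constraints (via $\cE_{\text{D}}$ and $\cE_{\text{D},*}$) and the observation that the trajectory-comparison constraints are exact and hence trivially inherit the margin are simply a more complete rendering of the same argument.
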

\begin{proof}
  The result follows directly by using the conditions in the good event $\cE$.
  For instance, concerning the policy comparison feedback, we have:
  \begin{align*}
    \dotp{\widehat{d}^{\pi^1_{\text{PC},i}}
  -\widehat{d}^{\pi^2_{\text{PC},i}}, \overline{r}}&=
  \dotp{\widehat{d}^{\pi^1_{\text{PC},i}}
  -\widehat{d}^{\pi^2_{\text{PC},i}}, \overline{r}} \popblue{\pm
  \bigr{\dotp{d^{\pi^1_{\text{PC},i}}
  -d^{\pi^2_{\text{PC},i}}, \overline{r}}}}\\
      &\le \popblue{\epsilon_{\text{EST}}}+ \bigr{\dotp{d^{\pi^1_{\text{PC},i}}
      -d^{\pi^2_{\text{PC},i}}, \overline{r}}}\\
      &\le \epsilon_{\text{EST}}-\xi.
  \end{align*}
  By using that $\epsilon_{\text{EST}}\in(0,\xi/2)$ we get the result.
\end{proof}

\subsubsection{Bounding the \emph{Iteration Error}}

\begin{restatable}[Iteration error]{lemma}{lemmaapproximationerror} 
  \label{lemma: approximation error}% 
  Let $\epsilon\in(0,2H]$, make Assumption \ref{ass: slater condition} and
  assume that event $\cE$ with $\epsilon_{\text{EST}}\in(0,\xi/2)$ holds. Then,
  if we choose:
  \begin{align*}
    &s=4H/\xi+\sqrt{(4H/\xi)^2+SAH/4},\\
    &\alpha=\epsilon/(16H (1+s\sqrt{m_{\text{D}}+m_{\text{TC}}+m_{\text{PC}}})^2),
  \end{align*}
   we have that:
  \begin{align*}
    |\widehat{M}-\widehat{M}_K|+
    |\widehat{m}-\widehat{m}_K|\le\epsilon,
  \end{align*}
  with a number of iterations:
  \begin{align*}
    K\le\cO\Bigr{
      \frac{H^{5/2}}{\xi\epsilon^2}\Bigr{\sqrt{SA}+\frac{\sqrt{H}}{\xi}}
      \Bigr{1+\sqrt{H(m_{\text{D}}+m_{\text{TC}}+m_{\text{PC}})(H/\xi^2+SA)}}^2
    }.
  \end{align*}
\end{restatable}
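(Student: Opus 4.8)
The plan is to reduce both returned quantities to the running–average guarantee of the primal–dual subgradient method (PDSM) and then convert the resulting saddle-point gap into an error on the linear primal objective. By Lemma~\ref{lemma: strong duality estimated problem}, $\widehat m=\max_{\lambda\in\fD_+}\min_{r\in\fR}\widehat L(r,\lambda)$ and $\widehat M=\min_{\lambda\in\fD_-}\max_{r\in\fR}\widehat L(r,\lambda)$ are saddle values, and the associated optimal multipliers $\lambda^*$ satisfy $\|\lambda^*\|_1\le 2H/\xi$ (as in Lemma~\ref{lemma: strong duality true problem}). Both \texttt{PDSM-MIN} and \texttt{PDSM-MAX} run projected primal–dual subgradient steps on $\widehat L$ and return the linear objective $\langle\widehat d^{\pi^1}-\widehat d^{\pi^2},\widehat r_K\rangle$ at the averaged primal iterate $\widehat r_K=\tfrac1K\sum_k r_k$; the two routines are mirror images (the dual set $\fD_+$ is replaced by $\fD_-$ and the update signs are flipped), so I would establish $|\widehat m-\widehat m_K|\le\epsilon/2$ in detail and obtain $|\widehat M-\widehat M_K|\le\epsilon/2$ by the identical argument. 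Summing the two estimates gives the claim.

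Next I would instantiate Proposition~2 of \cite{nedic2009subgradient}, which, for the averaged iterates, bounds the saddle gap $\widehat L(\widehat r_K,\lambda)-\widehat L(\widehat r^*,\widehat\lambda_K)$ uniformly over comparison points by $\frac{D_r^2+\|\lambda_0-\lambda\|_2^2}{2\alpha K}+\frac{\alpha}{2}(C_r^2+C_\lambda^2)$, where $D_r=\|r_0-\widehat r^*\|_2$, $\widehat r^*$ is an optimal primal point, and $C_r,C_\lambda$ bound the primal and dual subgradients. I would then read off the problem-specific constants. Since $\fR=[0,1]^{SAH}$ and $r_0=0$, we have $D_r\le\sqrt{SAH}$, and the dual comparison points lie in the radius-$s$ set so $\|\lambda_0-\lambda\|_2\le s$. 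Using $\|d^\pi\|_2\le\sqrt H$ for every visitation distribution together with $\|\lambda\|_1\le s\sqrt m$ (where $m:=m_{\text{D}}+m_{\text{TC}}+m_{\text{PC}}$) gives $C_r\le 2\sqrt H\,(1+s\sqrt m)$, while each constraint value lies in $[-2H,H]$, so $C_\lambda\le 2H\sqrt m$; hence $C_r^2+C_\lambda^2=\cO(H(1+s\sqrt m)^2)$.

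The core step is turning this gap into $|\widehat m_K-\widehat m|$. For the upper bound I would evaluate the inequality at $\lambda=0$ and use $\widehat L(\widehat r^*,\widehat\lambda_K)\le\widehat m$ (feasibility of $\widehat r^*$ and $\widehat\lambda_K\ge0$), obtaining $\widehat m_K\le\widehat m+\frac{D_r^2}{2\alpha K}+\frac{\alpha}{2}(C_r^2+C_\lambda^2)$. For the lower bound I would pick the dual comparison point of norm $s$ aligned with $[c(\widehat r_K)]^+$, the positive part of the vector $c(\widehat r_K)$ of constraint left-hand sides evaluated at $\widehat r_K$; this forces $(s-\|\lambda^*\|_1)\,\|[c(\widehat r_K)]^+\|_2\le\frac{D_r^2+s^2}{2\alpha K}+\frac{\alpha}{2}(C_r^2+C_\lambda^2)$, and combining with $\widehat m_K\ge\widehat m-\|\lambda^*\|_1\,\|[c(\widehat r_K)]^+\|_2$ closes the lower bound. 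This requires the slack $s-\|\lambda^*\|_1$ to be strictly positive and comparable to $s$, which is exactly why $s$ is taken well above $4H/\xi\ge 2\|\lambda^*\|_1$; the precise value $s=4H/\xi+\sqrt{(4H/\xi)^2+SAH/4}$ is the positive root of $s^2-(8H/\xi)s-SAH/4=0$, so that $SAH\le 4s^2$ and the violation penalty stays a constant multiple of the gap. I expect this reduction—maintaining near-optimality and near-feasibility simultaneously while the dual radius $s$ enters both the gap and its own penalty—to be the main obstacle, since it is what pins down the quadratic defining $s$.

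Finally I would tune the step size and count iterations. The choice $\alpha=\epsilon/(16H(1+s\sqrt m)^2)$ makes the $C_r$ contribution $\frac{\alpha}{2}C_r^2=\epsilon/8$, while the $C_\lambda$ contribution $\frac{\alpha}{2}C_\lambda^2=\frac{\epsilon Hm}{8(1+s\sqrt m)^2}$ is lower order once one uses $(1+s\sqrt m)^2\ge s^2m$ and $s^2\ge SAH/4$; thus the whole subgradient term is a constant fraction of $\epsilon$. It then remains to drive the distance term $\frac{D_r^2+s^2}{2\alpha K}$ below $\cO(\epsilon)$, which (using $D_r^2+s^2=\cO(s^2)$) yields $K=\cO(s^2H(1+s\sqrt m)^2/\epsilon^2)$. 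Substituting $s=\cO(H/\xi+\sqrt{SAH})$, so that $(1+s\sqrt m)^2=\cO(1+Hm(H/\xi^2+SA))$, and simplifying the prefactor through the defining identity $SAH/4=s(s-8H/\xi)$ for $s$ reproduces the stated iteration bound. The identical computation over $\fD_-$ gives the bound for $\widehat M_K$, and adding the two errors completes the proof.
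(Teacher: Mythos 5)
Your plan follows the paper's proof almost step for step: the same split into $|\widehat M-\widehat M_K|\le\epsilon/2$ and $|\widehat m-\widehat m_K|\le\epsilon/2$, the same reliance on Proposition 2 of \cite{nedic2009subgradient} for the averaged primal iterate, the same subgradient bounds (your $C_r,C_\lambda$ match the paper's Lemma \ref{lemma: bound norm subgradients}), the same Slater-based multiplier bound $\|\lambda^*\|_1\le 4H/\xi$ from Lemmas \ref{lemma: strong duality estimated problem}--\ref{lemma: slater condition estimated problem}, and the same step size (your $\alpha$ is exactly the paper's $\epsilon/(4U^2)$ with $U=2\sqrt H\bigl(1+s\sqrt{\overline m}\bigr)$, $\overline m\coloneqq m_{\text{D}}+m_{\text{TC}}+m_{\text{PC}}$). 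Your conversion of the uniform saddle gap into the two one-sided estimates (comparison point $\lambda=0$ for suboptimality, and the norm-$s$ multiplier aligned with $[c(\widehat r_K)]^+$ for the violation side) is sound, so the accuracy claim $|\widehat M-\widehat M_K|+|\widehat m-\widehat m_K|\le\epsilon$ does go through for large enough $K$.

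The genuine gap is in the iteration count, and the algebra you invoke at the end cannot close it. On the violation side you bound the prefactor $\|\lambda^*\|_1/(s-\|\lambda^*\|_1)$ by a constant and then require $(D_r^2+s^2)/(2\alpha K)=\cO(\epsilon)$, arriving at $K=\cO\bigl(s^2H(1+s\sqrt{\overline m})^2/\epsilon^2\bigr)$. Since $s^2=\Theta(H^2/\xi^2+SAH)$, this contains a term of order $SAH^2(1+s\sqrt{\overline m})^2/\epsilon^2$, whereas the stated prefactor $\frac{H^{5/2}}{\xi}\bigl(\sqrt{SA}+\frac{\sqrt H}{\xi}\bigr)=H^2\bigl(\frac{\sqrt{SAH}}{\xi}+\frac{H}{\xi^2}\bigr)$ carries only $\sqrt{SA}$: your bound exceeds the statement by a factor of roughly $\xi\sqrt{SA}/\sqrt H$ whenever $\xi\sqrt{SA}\gg\sqrt H$, and the identity $s(s-8H/\xi)=SAH/4$ merely rewrites $s^2$; it cannot turn $SA$ into $\sqrt{SA}$. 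What the paper does differently is to \emph{keep} the factor $\frac{4H/\xi}{\ell}$, $\ell\coloneqq s-4H/\xi$, multiplying the entire violation-side estimate, so that the distance term on that side is $\frac{4H}{\xi\ell}\cdot\cO\bigl(\frac{s^2}{\alpha K}\bigr)=\frac{H}{\xi}\cdot\cO\bigl(\frac{s}{\alpha K}\bigr)$ rather than $\cO\bigl(\frac{s^2}{\alpha K}\bigr)$; that factor is precisely the origin of the $\frac{H^{5/2}}{\xi}\bigl(\sqrt{SA}+\sqrt H/\xi\bigr)$ prefactor. One caveat in your favor: the $\lambda=0$ side still forces $K\ge\cO(SAH\,U^2/\epsilon^2)$ (the paper's ``term 1''), and the paper's closing assertion that this requirement is dominated by the violation-side one holds only when $\xi\sqrt{SA}=\cO(\sqrt H)$; so the bound your argument actually delivers is arguably what this analysis honestly supports, but it is not the bound stated in the lemma, and your claim to ``reproduce the stated iteration bound'' is incorrect as written.
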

\begin{proof} 
  We prove the result by imposing that both the terms
  $|\widehat{M}-\widehat{M}_K|$ and $|\widehat{m}-\widehat{m}_K|$ are smaller
  than $\epsilon/2$.
  We present the proof only for $|\widehat{M}-\widehat{M}_K|$, because the proof
  for $|\widehat{m}-\widehat{m}_K|$ is completely analogous. For simplicity,
  define $\ell$ as:
  \begin{align*}
    \ell\coloneqq \sqrt{\Big(\frac{4H}{\xi}\Big)^2+\Bigr{\frac{SAH}{4}}},
  \end{align*}
  and note that, in this way, for the choice of $s$ in the statement, we have:
  \begin{align*}
    s=\frac{4H}{\xi}+\ell.
  \end{align*}

  Thanks to Lemma \ref{lemma: bound norm subgradients}, using this choice of
  $s$, we obtain the following upper bound to the norm of the subgradients, that
  we call $U$:
  \begin{align*} 
    U\coloneqq 2\sqrt{H}\bigr{1+s\sqrt{|\cF|}}=
    2\sqrt{H} +2\Big(\frac{4H}{\xi}+\ell\Big)\sqrt{H|\cF|}.
 \end{align*}
 Observe that the choice of $\alpha$ corresponds to:
 \begin{align*}
  \alpha=\frac{\epsilon}{4U^2}.
 \end{align*}

Using Lemma \ref{lemma: slater condition estimated problem} and Lemma
\ref{lemma: bound norm subgradients}, we can apply Proposition 2 of
\cite{nedic2009subgradient} to obtain (recall that we start with
$r_0=0,\lambda_0=0$):
\begin{align}
  &\widehat{M}_K-\widehat{M}
  \le \frac{\|\widehat{r}_K\|_2^2}{2K\alpha}
  +\alpha U^2,\label{eq: term1}\\
  &\widehat{M}-\widehat{M}_K\le 
  \frac{4H}{\xi}\Big(
    \frac{2}{K\alpha\ell}\Big(\frac{4H}{\xi}+\ell\Big)^2
    +\frac{\|\widehat{r}_K\|^2_2}{2K\alpha\ell}+ \frac{\alpha U^2}{2\ell}
  \Big)\label{eq: term2}.
\end{align} 
We now bound each of these terms.

Concerning the first term, i.e., Eq. \eqref{eq: term1}, using that
$\widehat{r}_K\in\fR$ and the choice $\alpha=\epsilon/(4U^2)$ we have:
\begin{align*}
  \widehat{M}_K-\widehat{M}
  &\le\frac{SAH}{2K\alpha}
  +\alpha U^2,\\ 
  &\le\frac{2SAHU^2}{\epsilon K}
  +\frac{\epsilon}{4},
\end{align*}   
which is smaller than $\epsilon/4$ if: 
\begin{align*}
  K\ge \frac{8SAHU^2}{\epsilon^2}.
\end{align*} 
Regarding the second term, namely, Eq. \eqref{eq: term2}, we write: 
\begin{align*}
  \widehat{M}-\widehat{M}_K&\le 
  \frac{4H}{\xi}
    \frac{2}{K\alpha\ell}\Big(\frac{4H}{\xi}+\ell\Big)^2
    +\frac{4H}{\xi}\frac{SAH}{2K\alpha\ell}+ \frac{4H}{\xi}\frac{\alpha U^2}{2\ell}\\ 
    &\markref{(1)}{\le}\frac{4H}{\xi}
    \frac{2}{K\alpha\ell}\Big(\frac{4H}{\xi}+\ell\Big)^2
    +\frac{4H}{\xi}\frac{SAH}{2K\alpha\ell}+\popblue{\frac{\epsilon}{4}}\\ 
    &\markref{(2)}{\le}\frac{4H}{\xi}
    \frac{2}{K\alpha\ell}\Big(\frac{4H}{\xi}+\ell\Big)^2
    +\frac{4H}{\xi}\popblue{\frac{\sqrt{SAH}}{K\alpha}}+{\frac{\epsilon}{4}}\\ 
    &\markref{(3)}{\le}\frac{4H}{\xi}\frac{2}{K\alpha}\popblue{\Bigr{
      \frac{16H}{\xi}+\frac{\sqrt{SAH}}{2}}}
    +\frac{4H}{\xi}{\frac{\sqrt{SAH}}{K\alpha}}+{\frac{\epsilon}{4}}\\ 
    &\le \frac{4H}{\xi}\frac{2}{K\alpha}\Bigr{\frac{16H}{\xi}+\sqrt{SAH}} +\frac{\epsilon}{4}\\
    &=\frac{8H^{3/2}}{\xi K\alpha}\Bigr{\frac{16\sqrt{H}}{\xi}+\sqrt{SA}} +\frac{\epsilon}{4}\\
    &\markref{(4)}{=}
    \frac{32U^2H^{3/2}}{\xi K\epsilon}\Bigr{\frac{16\sqrt{H}}{\xi}+\sqrt{SA}} +\frac{\epsilon}{4},
\end{align*}  
where at (1) we use that $\ell\ge 2H/\xi$ and the choice of
$\alpha=\epsilon/(4U^2)$, at (2) we use that $\ell\ge \sqrt{\frac{SAH}{4}}$, at
(3) we use that $\Big(\frac{4H}{\xi}+\ell\Big)^2/\ell\le
\frac{16H}{\xi}+\frac{\sqrt{SAH}}{2}$ and at (4) we use that
$\alpha=\epsilon/(4U^2)$.

The resulting quantity is smaller than $\epsilon/2$ if:
\begin{align*}
  \frac{32U^2H^{3/2}}{\xi K\epsilon}\Bigr{\frac{16\sqrt{H}}{\xi}+\sqrt{SA}}
  \le\frac{\epsilon}{4}\quad\iff\quad 
     K\ge \frac{128H^{3/2}U^2}{\xi \epsilon^2}\Bigr{\sqrt{SA}  
     + \frac{16\sqrt{H}}{\xi}} .
\end{align*}  
By inserting the definition of $U$, we get:
\begin{align*}
  K&\ge \frac{128H^{3/2}U^2}{\xi \epsilon^2}\Bigr{\sqrt{SA}  
     + \frac{16\sqrt{H}}{\xi}}\\
    &=\frac{128H^{3/2}}{\xi \epsilon^2}\Bigr{\sqrt{SA}  
    + \frac{16\sqrt{H}}{\xi}}\biggr{2\sqrt{H} +2\Big(\frac{4H}{\xi}+\ell\Big)\sqrt{H|\cF|}}^2\\
    &\le \cO\biggr{
      \frac{H^{5/2}}{\xi\epsilon^2}\Bigr{\sqrt{SA}+\frac{\sqrt{H}}{\xi}}
      \Bigr{1+\sqrt{H|\cF|(H/\xi^2+SA)}}^2
    }.
    % &\le \frac{128H^{3/2}}{\xi \epsilon^2}\Bigr{\sqrt{SA}  
    % + \frac{16\sqrt{H}}{\xi}}
    % 16H|\cF|\Big(\frac{4H}{\xi}+\ell\Big)^2\\
    % &= \frac{2048H^{5/2}|\cF|}{\xi \epsilon^2}\Bigr{\sqrt{SA}  
    % + \frac{16\sqrt{H}}{\xi}}
    % \Big(\frac{4H}{\xi}+\sqrt{(4H/\xi)^2+SAH/4}\Big)^2\\
    % &\le \frac{2048H^{5/2}|\cF|}{\xi \epsilon^2}\Bigr{\sqrt{SA}  
    % + \frac{16\sqrt{H}}{\xi}}
    % 4\Bigr{(4H/\xi)^2+SAH/4}\\
    % &= \frac{8192H^{7/2}|\cF|}{\xi \epsilon^2}\Bigr{\sqrt{SA}  
    % + \frac{16\sqrt{H}}{\xi}}\Bigr{\frac{16H}{\xi^2}+\frac{SA}{4}}\\
    % &= \frac{8192H^{7/2}|\cF|}{\xi \epsilon^2}\biggr{\frac{(SA)\sqrt{SA}}{4}
    % +\frac{4SA\sqrt{H}}{\xi} +\frac{16\sqrt{SA}H}{\xi^2}+\frac{256H\sqrt{H}}{\xi^3}}.
\end{align*}

The result follows by noting that this quantity is larger than the upper bound
derived for Eq. \eqref{eq: term1}.

\end{proof}

\begin{lemma}\label{lemma: bound norm subgradients}
  If we execute \texttt{PDSM-MIN} or \texttt{PDSM-MAX} for $K\ge 0$ iterations
  using any choice of $s\ge 0$, then it holds that:
  \begin{align*}
    \max\limits_{k\le K}\max\Bigc{\|\partial_r\widehat{L}(r_k,
    \lambda_k)\|_2,
    \|\partial_\lambda\widehat{L}(r_k,\lambda_k)\|_2}\le
    2\max\Bigc{\sqrt{H}\bigr{1+s\sqrt{|\cF|}},
    H\sqrt{|\cF|}
    },
  \end{align*}
  where we denoted $|\cF|\coloneqq m_{\text{D}}+m_{\text{TC}}+m_{\text{PC}}.$
\end{lemma}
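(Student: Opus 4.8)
The plan is to bound the two blocks of the subgradient, $\partial_r\widehat{L}$ and $\partial_\lambda\widehat{L}$, separately using the explicit formulas in Eq.~\eqref{eq: subgradients estimated problem}, and then take the maximum. The two ingredients I would establish first are purely structural: (i) every (empirical) visitation distribution is, at each stage $h$, a point of the simplex $\Delta^{\SA}$, hence $\|d^\pi_h\|_2\le\|d^\pi_h\|_1=1$ and $\|d^\pi\|_2\le\sqrt{H}$ (and likewise $\|d^\omega\|_2=\sqrt{H}$, since each $d^\omega_h$ is one-hot); and (ii) the projections in \texttt{PDSM-MIN}/\texttt{PDSM-MAX} guarantee that at every iteration $k$ the iterates satisfy $r_k\in\fR$ (so $r_k$ has entries in $[0,1]$, giving $\dotp{d^\pi,r_k}=J^\pi(r_k;\cdot)\in[0,H]$ for any visitation distribution) and $\|\lambda_k\|_2\le s$. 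These are exactly the properties needed to make the bound uniform over $k\le K$, regardless of the choice of $s$.

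For the primal block, I would apply the triangle inequality to the expression for $\partial_r\widehat{L}$: the leading term $\widehat{d}^{\pi^1}-\widehat{d}^{\pi^2}$ has norm at most $2\sqrt{H}$, and every bracketed visitation-distribution difference multiplying a multiplier $\lambda^i$ also has norm at most $2\sqrt{H}$. Collecting the multipliers gives $\|\partial_r\widehat{L}(r_k,\lambda_k)\|_2\le 2\sqrt{H}\,(1+\|\lambda_k\|_1)$. Converting to the $2$-norm via Cauchy--Schwarz, $\|\lambda_k\|_1\le\sqrt{|\cF|}\,\|\lambda_k\|_2\le s\sqrt{|\cF|}$ with $|\cF|=m_{\text{D}}+m_{\text{TC}}+m_{\text{PC}}$, which yields the first candidate $2\sqrt{H}(1+s\sqrt{|\cF|})$.

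For the dual block, I would bound each scalar component of $\partial_\lambda\widehat{L}$ by $2H$: the demonstration component $\max_\pi J^\pi(r_k;\widehat{p}_{\text{D},i})-\dotp{\widehat{d}^{\pi_{\text{D},i}},r_k}-t_i$ combines three quantities each lying in $[0,H]$ (using $r_k\in\fR$ and $t_i\in[0,H]$), hence lies in $[-2H,H]$; the trajectory- and policy-comparison components are differences of two returns in $[0,H]$, hence lie in $[-H,H]$. Since there are $|\cF|$ components each of magnitude at most $2H$, we get $\|\partial_\lambda\widehat{L}(r_k,\lambda_k)\|_2\le 2H\sqrt{|\cF|}$, the second candidate. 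Taking the maximum of the two candidates over all $k\le K$ then gives exactly $2\max\{\sqrt{H}(1+s\sqrt{|\cF|}),\,H\sqrt{|\cF|}\}$.

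There is no real obstacle here; the argument is a uniform boundedness computation. The only points that require care are the bookkeeping constant $2H$ for the demonstration dual coordinate (one must remember that it subtracts two nonnegative terms, each up to $H$, from a quantity in $[0,H]$, rather than naively bounding by $H$), and the Cauchy--Schwarz step that turns the $\ell_1$ norm of $\lambda$ appearing naturally in the primal bound into the $\ell_2$-ball radius $s$ enforced by the projection onto $\fD_+$ or $\fD_-$.
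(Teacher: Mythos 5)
Your proposal is correct and follows essentially the same route as the paper's proof: triangle inequality plus the bound $\|d\|_2\le\sqrt{H}$ on (empirical) visitation distributions for the primal block, the $\ell_1$-to-$\ell_2$ conversion $\|\lambda\|_1\le\sqrt{|\cF|}\,\|\lambda\|_2\le s\sqrt{|\cF|}$ justified by the projections, and component-wise bounds ($2H$ for the demonstration coordinates, $H$ for the comparison coordinates) for the dual block. The paper's only cosmetic difference is that it first relaxes the maximum over iterates to a maximum over all $(r,\lambda)\in\fR\times\fD_{\pm}$ before bounding, which is exactly what your projection argument provides.
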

\begin{proof}
  We prove the result for \texttt{PDSM-MAX} only, because for \texttt{PDSM-MIN}
  the proof is analogous. Consider the subgradient expressions in Eq. \eqref{eq:
  subgradients estimated problem}. We have (the dependence on $s$ is implicit in
  $\fD_-$):
  \begin{align*}
    \max\limits_{k\le K}\|\partial_r \widehat{L}(r_k,\lambda_k)\|_2
    &\markref{(1)}{\le}\popblue{\max\limits_{r\in\fR}\max\limits_{\lambda\in\fD_{-}}}
    \|\partial_r \widehat{L}(r,\lambda)\|_2\\
    &=\max\limits_{r\in\fR}\max\limits_{\lambda\in\fD_{-}}
\Big\|(\widehat{d}^{\pi^1}-\widehat{d}^{\pi^2})+
\sum\nolimits_i\lambda_{\text{D}}^i
\bigr{d^{\widehat{\pi}_{\text{D},i}^{r,*}}-\widehat{d}^{\pi_{\text{D},i}}}\\
&\qquad+
\sum\nolimits_i\lambda_{\text{TC}}^i\bigr{d^{\omega^1_{\text{TC,i}}}-d^{\omega^2_{\text{TC,i}}}}
+\sum\nolimits_i\lambda_{\text{PC}}^i \bigr{\widehat{d}^{\pi^1_{\text{PC},i}}
-\widehat{d}^{\pi^2_{\text{PC},i}}}
\Big\|_2\\
&\le\max\limits_{r\in\fR}\max\limits_{\lambda\in\fD_{-}}
\Big\|\widehat{d}^{\pi^1}-\widehat{d}^{\pi^2}\Big\|_2+
    \sum\nolimits_i |\lambda_{\text{D}}^i|
    \Big\|d^{\widehat{\pi}_{\text{D},i}^{r,*}}-\widehat{d}^{\pi_{\text{D},i}}\Big\|_2\\
  &\qquad
    + \sum\nolimits_i|\lambda_{\text{TC}}^i|\Big\|d^{\omega^1_{\text{TC,i}}}-
    d^{\omega^2_{\text{TC,i}}}\Big\|_2
    +\sum\nolimits_i|\lambda_{\text{PC}}^i|
    \Big\|\widehat{d}^{\pi^1_{\text{PC},i}}
    -\widehat{d}^{\pi^2_{\text{PC},i}}
\Big\|_2\\
&\markref{(2)}{\le}
\popblue{2\sqrt{H}}\max\limits_{\lambda\in\fD_{-}}\bigr{1+\|\lambda\|_1}\\
&\markref{(3)}{\le} 2\sqrt{H}\Bigr{1+\popblue{\sqrt{m_{\text{D}}+m_{\text{TC}}+m_{\text{PC}}}}
\max\limits_{\lambda\in\fD_{-}}\|\lambda\|_{\popblue{2}}}\\
&=2\sqrt{H}\biggs{1+\popblue{s}\sqrt{m_{\text{D}}+m_{\text{TC}}+m_{\text{PC}}}},
% &\markref{(4)}{\le}4\sqrt{Hm_{\text{D}}+m_{\text{TC}}+m_{\text{PC}}},
  \end{align*}
  where at (1) we use that the PDSM projects onto those sets, thus
  $r_k,\lambda_k$ cannot lie outside, at (2) we use the fact that the 2-norm of
  every visit distribution is at most $\sqrt{H}$, at (3) we use the relationship
  between the 1-norm and the 2-norm, recalling that $\lambda$ is
  $(m_{\text{D}}+m_{\text{TC}}+m_{\text{PC}})$-dimensional.

  Regarding the subgradients w.r.t. $\lambda$, we can write:
  \begin{align*}
    \max\limits_{k\le K}\|\partial_\lambda \widehat{L}(r_k,\lambda_k)\|_2
    &\markref{(4)}{\le}\popblue{\max\limits_{r\in\fR}\max\limits_{\lambda\in\fD_{-}}}
    \|\partial_\lambda \widehat{L}(r,\lambda)\|_2\\
    &\markref{(5)}{=} \max\limits_{r\in\fR}
    \Big\|\Bigs{
    \partial_{\lambda^i_{\text{D}}}\widehat{L}(r,\lambda),\;
    \dotsc,
    \partial_{\lambda^i_{\text{TC}}}\widehat{L}(r,\lambda),\;
    \dotsc,
    \partial_{\lambda^i_{\text{PC}}}\widehat{L}(r,\lambda)
    }
    \Big\|_2\\
    &\markref{(6)}{\le}
    \Big\|\Bigs{
    2H,\;\dotsc,
    H,\;\dotsc,
    H
    }
    \Big\|_2\\
    &\le
    2H\sqrt{m_{\text{D}}+m_{\text{TC}}+m_{\text{PC}}},
  \end{align*}
  where at (4) we use that the PDSM projects onto those sets, thus
  $r_k,\lambda_k$ cannot lie outside, at (5) we note that the subgradient vector
  is $(m_{\text{D}}+m_{\text{TC}}+m_{\text{PC}})$-dimensional, and write down
  its components, noting also that there is no dependence on $\lambda$. At (6)
  we use the formulas in Eq. \eqref{eq: subgradients estimated problem} and note
  that the difference between returns and expected returns is bounded in
  $[-H,+H]$ because $r\in\fR$, and also that $t_i\in[0,H]$ for all
  $i\in\dsb{m_{\text{D}}}$ by hypothesis.
  
  The result follows by joining the two upper bounds.
\end{proof}

\subsection{Other Proofs and Results}
\label{apx: other proofs and results}

\begin{lemma}\label{lemma: change objective}
  Let $d\in\Nat$, and let $\cY$ be a subset of $\RR^d$. Let $h: \cY \to\RR$ be
  any function that attains both minimum and maximum in $\cY$, and define
  $M_h\coloneqq \max_{y\in\cY}h(y)$ and $m_h\coloneqq \min_{y\in\cY}h(y)$. Then:
  \begin{align*}
    & \frac{M_h-m_h}{2}=\min\limits_{k\in\RR}\max\limits_{y\in\cY}\Big|k-h(y)\Big|,\\
    &\frac{M_h+m_h}{2}=\argmin\limits_{k\in\RR}\max\limits_{y\in\cY}\Big|k-h(y)\Big|,\\
    &\max\limits_{y\in\cY}\Big|k-h(y)\Big|= \max\Bigc{k-m_h,
    M_h-k}\quad\forall k\in\RR.
  \end{align*}  
\end{lemma}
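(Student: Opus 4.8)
The plan is to establish the third identity first, since the other two then follow from a one-dimensional optimization. For any fixed $k\in\RR$ and any $y\in\cY$, I would rewrite the absolute value as $|k-h(y)|=\max\{k-h(y),\,h(y)-k\}$. Taking the maximum over $y\in\cY$ and invoking the elementary identity $\max_{y}\max\{f(y),g(y)\}=\max\{\max_{y}f(y),\,\max_{y}g(y)\}$, the expression becomes $\max\{\,k-\min_{y}h(y),\,\max_{y}h(y)-k\,\}=\max\{k-m_h,\,M_h-k\}$, where the last step simply inserts the definitions of $m_h$ and $M_h$. This proves the third claim for every $k\in\RR$, and it is valid precisely because $h$ attains its extrema, so that $m_h,M_h$ are finite and $m_h\le M_h$.

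For the first two identities, I would then analyze the univariate function $\phi(k):=\max\{k-m_h,\,M_h-k\}$ obtained above. The first branch $k\mapsto k-m_h$ is strictly increasing and the second $k\mapsto M_h-k$ is strictly decreasing, so $\phi$ is convex, piecewise linear, and coercive; hence it attains its minimum, and the minimizer is unique because $\phi$ is strictly decreasing to the left of the crossing point and strictly increasing to its right. Solving $k-m_h=M_h-k$ gives the crossing point $k^\star=(M_h+m_h)/2$, and substituting back yields $\phi(k^\star)=(M_h-m_h)/2$. Therefore $\argmin_{k\in\RR}\phi(k)=(M_h+m_h)/2$ and $\min_{k\in\RR}\phi(k)=(M_h-m_h)/2$, which are exactly the second and first identities.

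This is essentially the computation of the Chebyshev center of the interval $[m_h,M_h]$ on the real line, which is why the lemma underlies Proposition \ref{prop: rewrite objectives use case} and Proposition \ref{prop: uninf}. There is no genuine obstacle here: the only points requiring care are the interchange of the two maxima in the first paragraph (justified by the stated elementary identity) and the verification that the minimizer is attained and unique (justified by coercivity together with the strict monotonicity of the two branches, which is needed so that $\argmin$ denotes a single point). Beyond these observations the argument is a routine one-variable calculation.
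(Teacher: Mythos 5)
Your proposal is correct and follows essentially the same route as the paper's proof: both first establish the identity $\max_{y\in\cY}|k-h(y)|=\max\{k-m_h,\,M_h-k\}$ by exchanging the maximum over $y$ with the maximum of the two branches, and then locate the minimizer at the crossing point $k-m_h=M_h-k$, substituting back to obtain the minimum value $(M_h-m_h)/2$. If anything, your argument is slightly more careful than the paper's, which justifies the minimization step only by the informal remark that ``the maximum between two items is minimized when the two items coincide,'' whereas you supply the monotonicity/coercivity reasoning that makes the attainment and uniqueness of the minimizer rigorous.
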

\begin{proof}
  For any $k\in\RR$, we can write:
  \begin{align*}
    \max\limits_{y\in\cY}\Big|k-h(y)\Big|
    &=\max\limits_{y\in\cY}\max\Bigc{k-h(y),
    h(y)-k}\\
    &=\max\Bigc{\max\limits_{y\in\cY}k-h(y),
    \max\limits_{y\in\cY}h(y)-k}\\
    &=\max\Bigc{k-\min\limits_{y\in\cY}h(y),
    \max\limits_{y\in\cY}h(y)-k}\\
    &=\max\Bigc{k-m_h, M_h-k}.
  \end{align*}

  Then, we know that the maximum between two items is minimized when the two
  items coincide. Thus, the argmin must satisfy:
  \begin{align*}
    k-m_h = M_h-k\quad\iff\quad k=\frac{M_h+m_h}{2}.
  \end{align*}

  Substituting this value into the previous expression, we get:
  \begin{align*}
    \max\Bigc{\frac{M_h+m_h}{2}-m_h, M_h-\frac{M_h+m_h}{2}}=\frac{M_h-m_h}{2}.
  \end{align*}

  This concludes the proof.
\end{proof}

% \begin{prop}\label{prop: xFg Mm} It holds that: $x_{\cF,g}=(M+m)/2$.
%   \end{prop} \begin{proof} Apply Lemma \ref{lemma: change objective}.
%   \end{proof}

% \begin{prop}\label{prop: IFg Mm} It holds that: $\cI_{\cF,g}=(M-m)/2$, where
%   $M,m$ are defined in Eq. \eqref{eq: def M m}. \end{prop}

\rewriteobjectivesusecase*
\begin{proof}
  Apply Lemma \ref{lemma: change objective}.
\end{proof}

\uninf*
\begin{proof}
  Apply Lemma \ref{lemma: change objective}.
\end{proof}

\subsubsection{Using \rob for Estimating the Worst-Case Loss}
\label{apx: bounding worst case error with rob}

Thanks to Lemma \ref{lemma: change objective}, we realize that the worst-case
loss suffered by any object $x\in\cX_g$ can be equivalently computed as:
\begin{align*}
  \max\limits_{r\in\cR_\cF}\cL_g(r,x)=\max\Bigc{x-m,M-x}.
\end{align*}
This suggests that we can use \rob to quantify the error suffered in the worst
case by arbitrary $x\in\cX_g$, potentially the output provided by other ReL
algorithms. To do so, we can use the estimate:
\begin{align*}
  \max\Bigc{x-\widehat{m}_K,\widehat{M}_K-x},
\end{align*}
and note that:
\begin{align*}
  \Big|\max\Bigc{x-\widehat{m}_K,\widehat{M}_K-x}&-\max\Bigc{x-m,M-x}\Big|\le
  \max\Bigc{|m-\widehat{m}_K|,|M-\widehat{M}_K|}\\
  &\le |m-\widehat{m}_K|+|M-\widehat{M}_K|\\
  &\le (|m-\widehat{m}|+|M-\widehat{M}|)
  +(|\widehat{m}_K-\widehat{m}|+|\widehat{M}_K-\widehat{M}|),
\end{align*}
namely, the error can be upper bounded by the estimation and iteration errors
considered by Theorem \ref{thr: guarantees caty}. Therefore, we have the
following corollary:
\begin{coroll}
  Let $x\in\cX_g$ be arbitrary. Under the setting of Theorem \ref{thr:
guarantees caty}, with the number of samples and iterations specified in the
statement of the theorem, with probability $1-\delta$, it holds that:
\begin{align*}
  \Big|\max\limits_{r\in\cR_\cF}\cL_g(r,x)-
  \max\Bigc{x-\widehat{m}_K,\widehat{M}_K-x}\Big|\le 2\epsilon.
\end{align*}
\end{coroll}

\subsection{Other ReL Problems}\label{apx: other kinds feedback}

The algorithm presented in Section \ref{sec: use case} can be straightforwardly
extended, along with its theoretical guarantees, to consider other kinds of
feedback and applications that preserve the convexity of the problem.

For instance, if we replace the application $g$ with that of assessing a
preference between two trajectories (see Table \ref{table: application}), then
it is clear that the scheme of the algorithm does not change, but we just have
to modify the computation of the subgradients.

As another example, we might consider demonstrations from ``bad'' policies $\pi$,
i.e., demonstrations from policies whose performance is almost the worst
possible:
\begin{align*}
  J^\pi(r^\star;p)\le\min_{\pi'}J^{\pi'}(r^\star;p)+t,
\end{align*}
for some $t>0$. Note that the reward-free exploration algorithm of
\cite{menard2021fast} works also in this setting, and its theoretical guarantees
can be easily adapted to this setting.

We mention that we could also consider as feedback a \emph{fractional}
comparison of policies $\pi^1,\pi^2$ (or trajectories $\omega^1,\omega^2$):
\begin{align*}
  J^{\pi^1}(r^\star;p)\ge\alpha J^{\pi^2}(r^\star;p),
\end{align*}
for some $\alpha\in(0,1]$.

\section{Experimental Details}\label{apx: experimental details}

We provide here additional details on the simulations conducted. Note that the
experiment has been conducted on a laptop with processor ``AMD Ryzen 5 5500U
with Radeon Graphics 2.10 GHz'' with 8GB of RAM, and required a few seconds for
execution.

\subsection{Target Environment, Target Reward $r^\star$, Application $g$}

For the application $g$, we considered the environment reported in Figure
\ref{fig: road} (left) and described in Section \ref{sec: use case}, with
initial state the C lane, and the stationary transition model $p$ described
below. Note that $p$ depends only on the lane and the action played, thus, with
abuse of notation, we write $p$ as:
\begin{align*}
  &p_h(\cdot|L,a_L)=\begin{cases}
    1, \text{if }\cdot = L\\
    0, \text{if }\cdot = C\\
    0, \text{if }\cdot = R\\
  \end{cases}
  &&p_h(\cdot|C,a_L)=\begin{cases}
    0.6, \text{if }\cdot = L\\
    0.4, \text{if }\cdot = C\\
    0, \text{if }\cdot = R\\
  \end{cases}
  p_h(\cdot|R,a_L)=\begin{cases}
    0, \text{if }\cdot = L\\
    0.6, \text{if }\cdot = C\\
    0.4, \text{if }\cdot = R\\
  \end{cases}\\
  &p_h(\cdot|L,a_C)=\begin{cases}
    \scalebox{0.85}{0.55}, \text{if }\cdot = L\\
    \scalebox{0.85}{0.45}, \text{if }\cdot = C\\
    0, \text{if }\cdot = R\\
  \end{cases}
  &&p_h(\cdot|C,a_C)=\begin{cases}
    \scalebox{0.85}{0.3}, \text{if }\cdot = L\\
    \scalebox{0.85}{0.4}, \text{if }\cdot = C\\
    \scalebox{0.85}{0.3}, \text{if }\cdot = R\\
  \end{cases}
  p_h(\cdot|R,a_C)=\begin{cases}
    0, \text{if }\cdot = L\\
    \scalebox{0.85}{0.45}, \text{if }\cdot = C\\
    \scalebox{0.85}{0.55}, \text{if }\cdot = R\\
  \end{cases}\\
  &p_h(\cdot|L,a_R)=\begin{cases}
    0.3, \text{if }\cdot = L\\
    0.7, \text{if }\cdot = C\\
    0, \text{if }\cdot = R\\
  \end{cases}
  &&p_h(\cdot|C,a_R)=\begin{cases}
    0, \text{if }\cdot = L\\
    0.3, \text{if }\cdot = C\\
    0.7, \text{if }\cdot = R\\
  \end{cases}
  p_h(\cdot|R,a_R)=\begin{cases}
    0, \text{if }\cdot = L\\
    0, \text{if }\cdot = C\\
    1, \text{if }\cdot = R\\
  \end{cases}
\end{align*}
Intuitively, action $a_L$ moves to the left w.p. $0.6$, and keeps the lane w.p.
$0.4$, except when it is on the left lane, where it keeps the lane. Action $a_R$
is analogous but with a different bias. Instead, action $a_C$ keeps the lane
w.p. $0.4$, and moves to the left or to the right w.p. $0.3$. When it is on the
borders, it cannot move in a certain direction, thus the remaining probability
is splitted equally in the other two lanes.
The target reward $r^\star$ considered is described in Section \ref{sec: use
case}, and is shown in Figure \ref{fig: road}, on the right.
\begin{figure}[!t]
  \centering
  \begin{minipage}[t!]{0.49\textwidth}
    \centering
    \includegraphics[width=0.95\linewidth]{map.pdf}
\end{minipage}
\begin{minipage}[t!]{0.49\textwidth}
    \centering
    \includegraphics[width=0.95\linewidth]{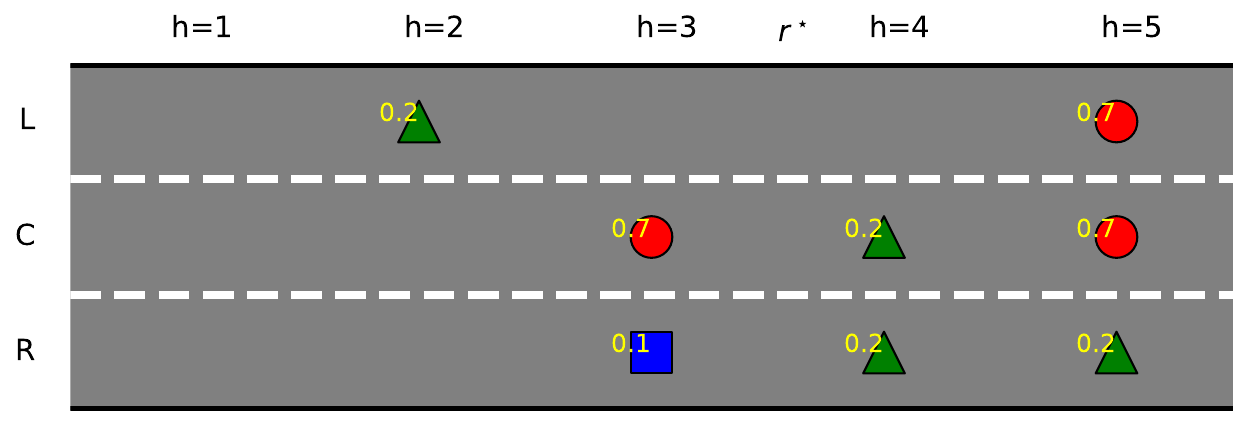}
\end{minipage}
\caption{ (Left) The target environment considered in the
experiment.
(Right) Representation of $r^\star$ for the target environment.}
\label{fig: road}
 \end{figure}
Finally, the occupancy measures $d^{\pi^1},d^{\pi^2}$ describing the
application $g$ arise from the policies described in Section \ref{sec: use case}
and the transition model $p$ presented earlier, and are shown in Figure \ref{fig: appl g}.
\begin{figure}[!t]
  \centering
  \begin{minipage}[t!]{0.49\textwidth}
    \centering
    \includegraphics[width=0.95\linewidth]{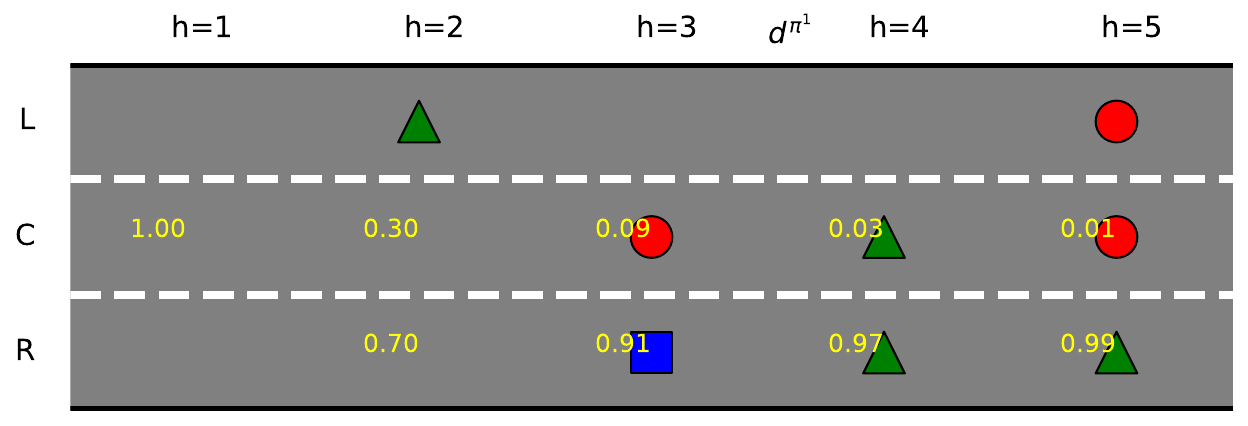}
\end{minipage}
\begin{minipage}[t!]{0.49\textwidth}
    \centering
    \includegraphics[width=0.95\linewidth]{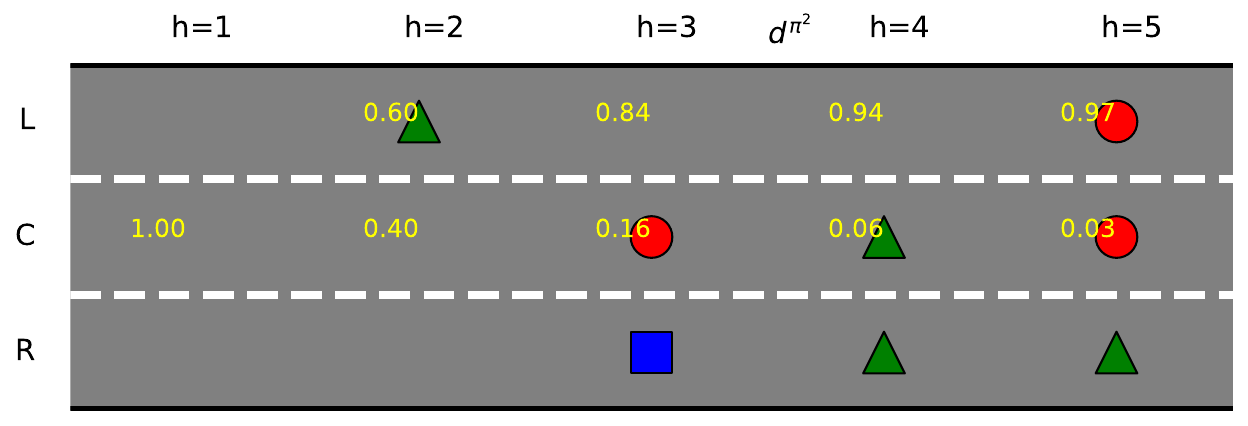}
\end{minipage}
\caption{ (Left) Plot of $d^{\pi^1}$.
(Right) Plot of $d^{\pi^2}$.}
\label{fig: appl g}
 \end{figure}

\subsection{Feedback $\cF$}

\paragraph{Trajectory comparisons.}
We construct three trajectory comparison feedback. The first pair is in Figure
\ref{fig: TC 1}, and we associated $t_1=0.3$ to it. The second pair of
trajectories is in Figure \ref{fig: TC 2}, and we set $t_2=1$. Finally, the
third pair is in Figure \ref{fig: TC 3} and has $t_3=-0.5$.

\paragraph{Comparisons.}
Concerning the comparisons feedback, we considered the new environment shown in
Figure \ref{fig: new maps C D} on the left, keeping the same transition model
$p$ described earlier but using the left $L$ lane as initial state. We compared
the two occupancy measures in Figure \ref{fig: C 1} using $t_1=0$, and the two
occupancy measures in Figure \ref{fig: C 2} using $t_2=0.5$.

\paragraph{Demonstrations.}
For the demonstrations feedback, we adopted the map in Figure \ref{fig: new maps
C D} on the right, preserving the transition model $p$, but using lane R as
initial state. We considered only one feedback, whose policy has the occupancy
measure in Figure \ref{fig: D 1}, to which we associated $t_1=1$.

\begin{figure}[!t]
  \centering
  \begin{minipage}[t!]{0.49\textwidth}
    \centering
    \includegraphics[width=0.95\linewidth]{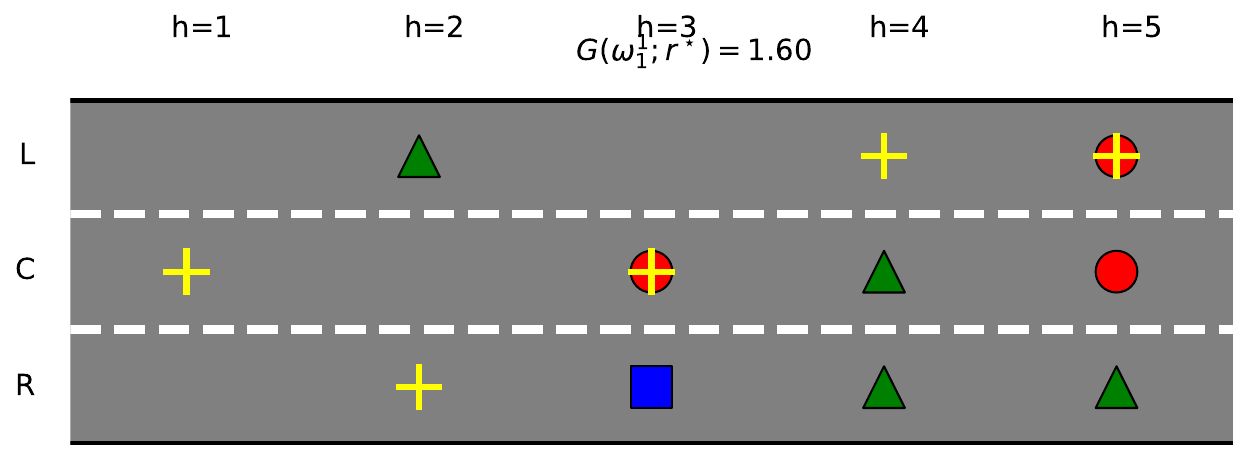}
\end{minipage}
\begin{minipage}[t!]{0.49\textwidth}
    \centering
    \includegraphics[width=0.95\linewidth]{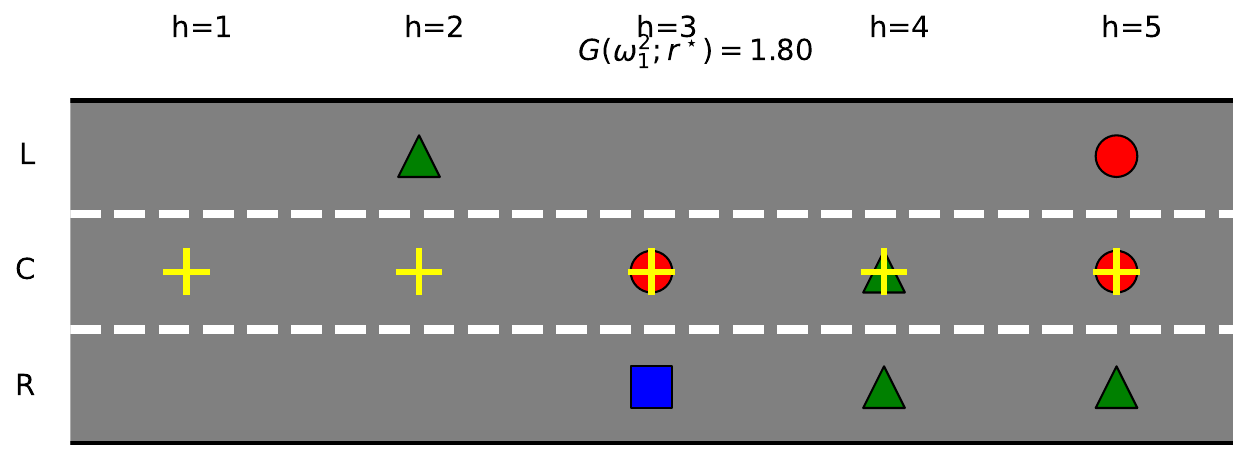}
\end{minipage}
\caption{ The trajectories compared in the first feedback.
$\omega^1_1$ is on the left, and $\omega^2_1$ on the right.}
\label{fig: TC 1}
 \end{figure}

 \begin{figure}[!t]
  \centering
  \begin{minipage}[t!]{0.49\textwidth}
    \centering
    \includegraphics[width=0.95\linewidth]{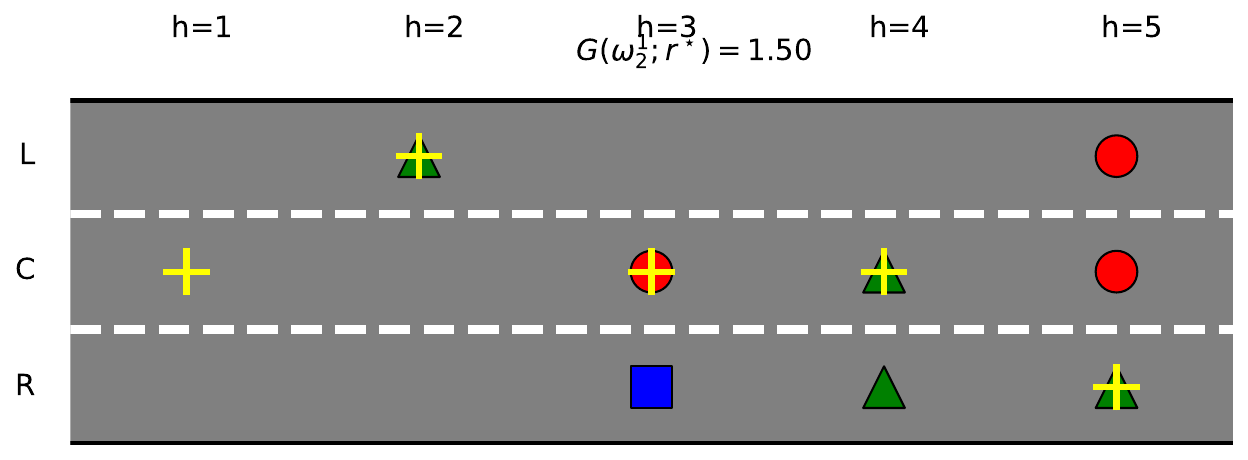}
\end{minipage}
\begin{minipage}[t!]{0.49\textwidth}
    \centering
    \includegraphics[width=0.95\linewidth]{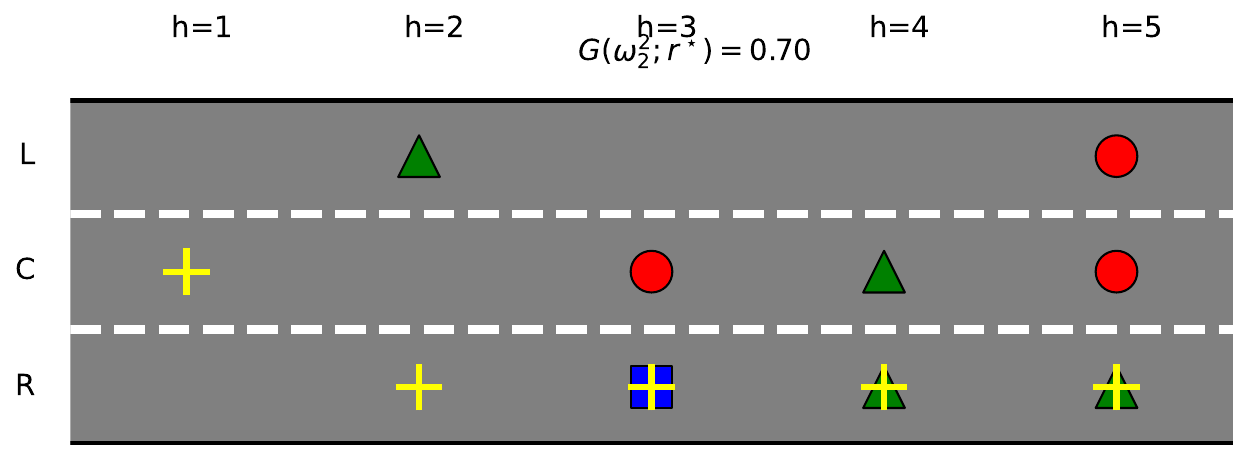}
\end{minipage}
\caption{ The trajectories compared in the second feedback.
$\omega^1_2$ is on the left, and $\omega^2_2$ on the right.}
\label{fig: TC 2}
 \end{figure}

 \begin{figure}[!t]
  \centering
  \begin{minipage}[t!]{0.49\textwidth}
    \centering
    \includegraphics[width=0.95\linewidth]{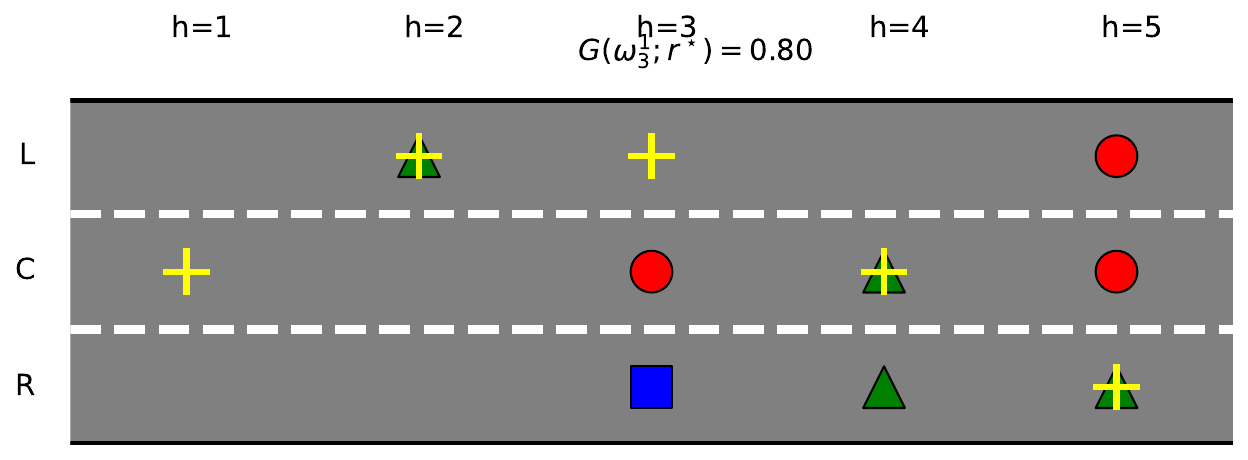}
\end{minipage}
\begin{minipage}[t!]{0.49\textwidth}
    \centering
    \includegraphics[width=0.95\linewidth]{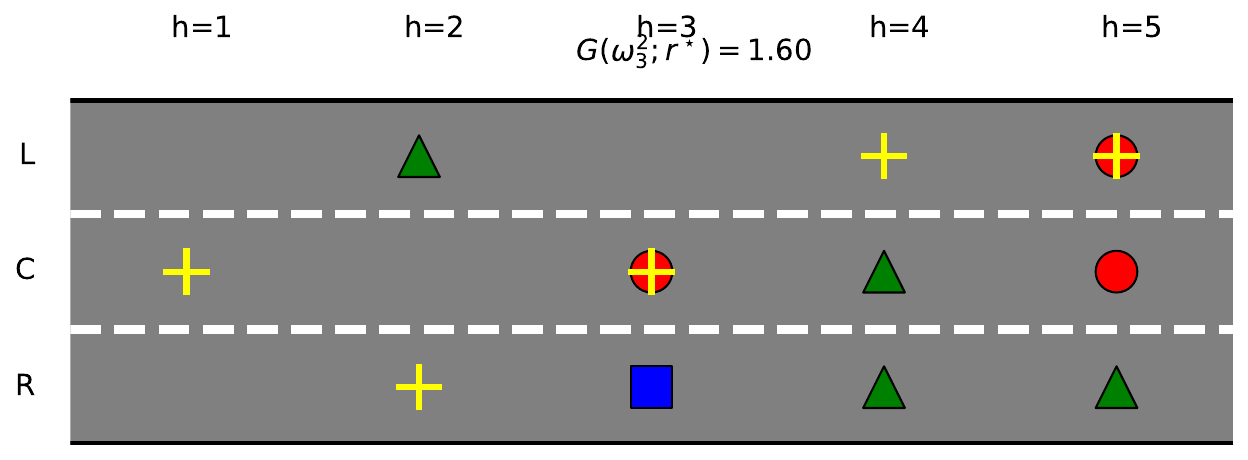}
\end{minipage}
\caption{ The trajectories compared in the third feedback.
$\omega^1_3$ is on the left, and $\omega^2_3$ on the right.}
\label{fig: TC 3}
 \end{figure}

 \begin{figure}[!t]
  \centering
  \begin{minipage}[t!]{0.49\textwidth}
    \centering
    \includegraphics[width=0.95\linewidth]{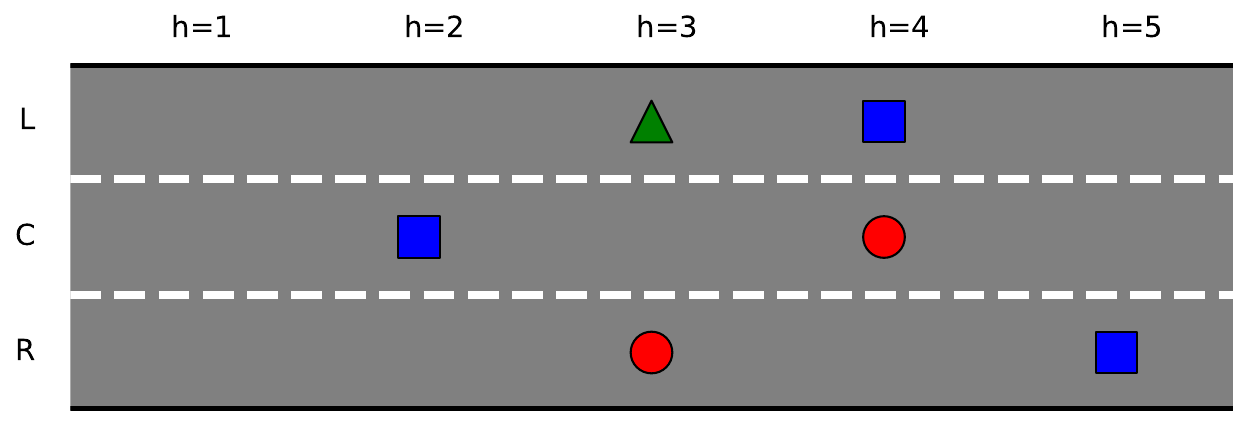}
\end{minipage}
\begin{minipage}[t!]{0.49\textwidth}
    \centering
    \includegraphics[width=0.95\linewidth]{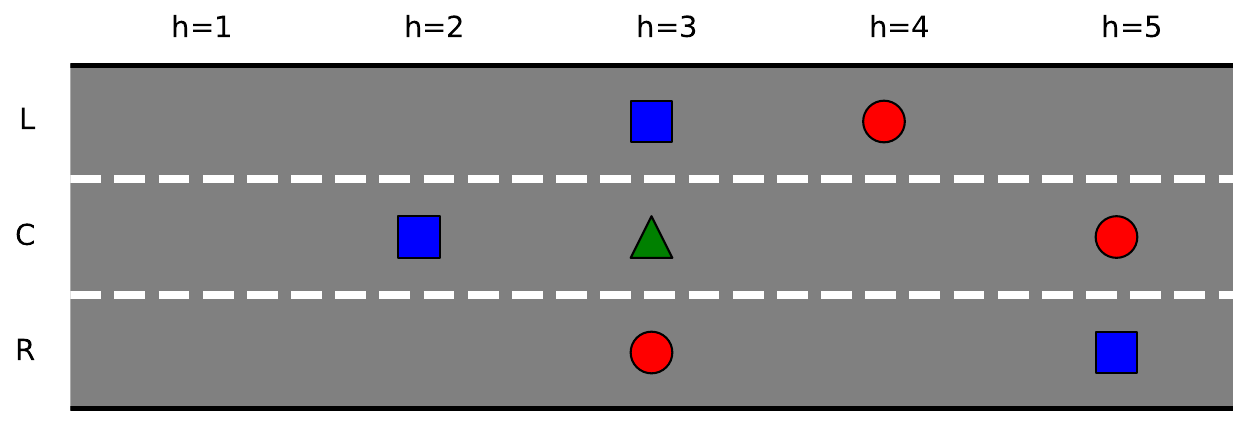}
\end{minipage}
\caption{ (Left) The new map considered for the comparisons
feedback. (Right) The new map considered for the demonstrations feedback.}
\label{fig: new maps C D}
 \end{figure}

 \begin{figure}[!t]
  \centering
  \begin{minipage}[t!]{0.49\textwidth}
    \centering
    \includegraphics[width=0.95\linewidth]{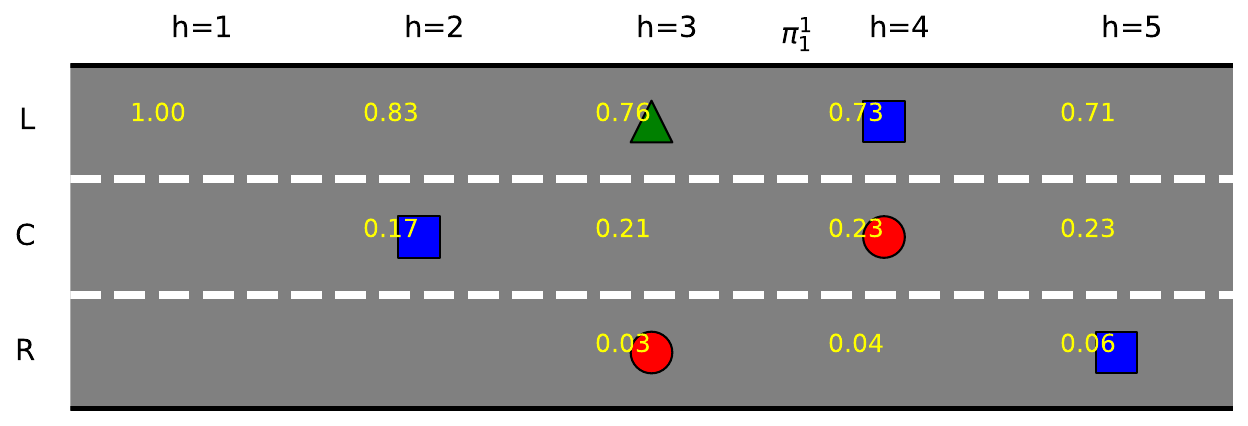}
\end{minipage}
\begin{minipage}[t!]{0.49\textwidth}
    \centering
    \includegraphics[width=0.95\linewidth]{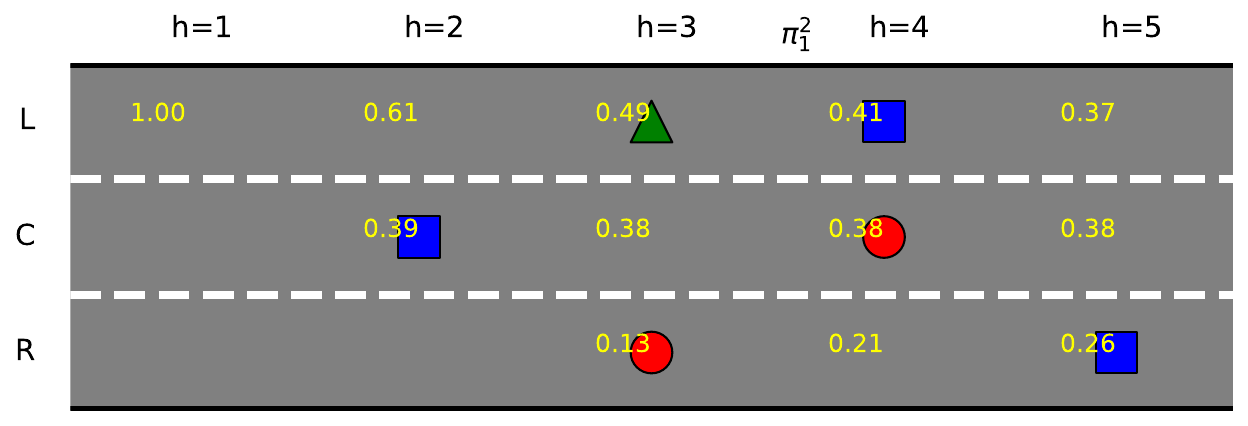}
\end{minipage}
\caption{ The occupancy measures compared in the first comparisons
feedback. $\pi^1_1$ is on the left, and $\pi^2_1$ on the right.}
\label{fig: C 1}
 \end{figure}

 \begin{figure}[!t]
  \centering
  \begin{minipage}[t!]{0.49\textwidth}
    \centering
    \includegraphics[width=0.95\linewidth]{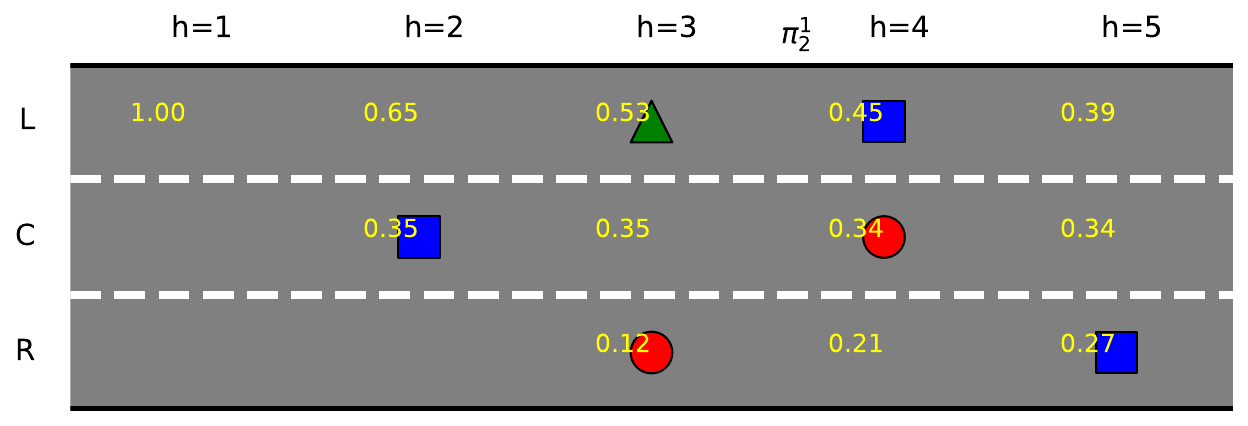}
\end{minipage}
\begin{minipage}[t!]{0.49\textwidth}
    \centering
    \includegraphics[width=0.95\linewidth]{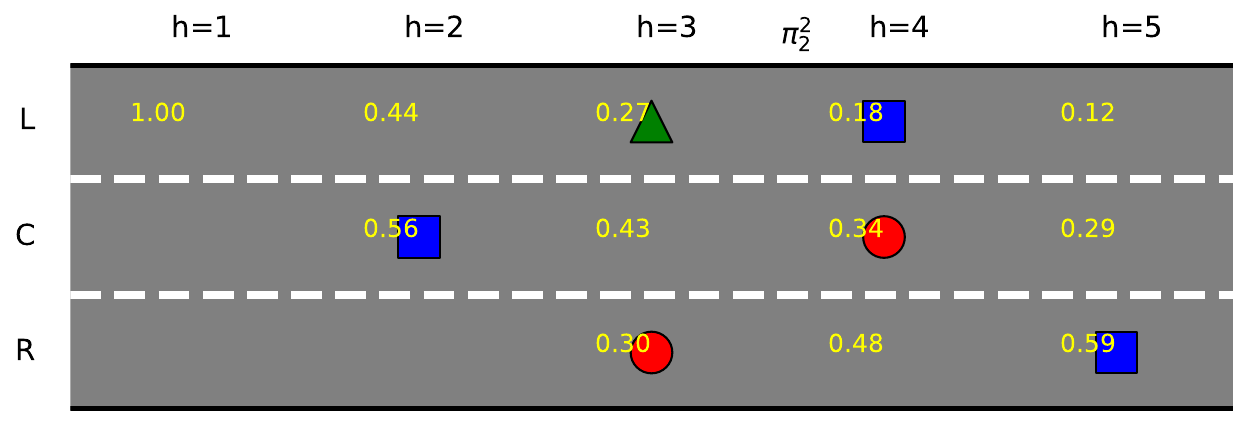}
\end{minipage}
\caption{ The occupancy measures compared in the second comparisons
feedback. $\pi^1_2$ is on the left, and $\pi^2_2$ on the right.}
\label{fig: C 2}
 \end{figure}

 \begin{figure}[!t]
  \centering
  \includegraphics[width=0.46\linewidth]{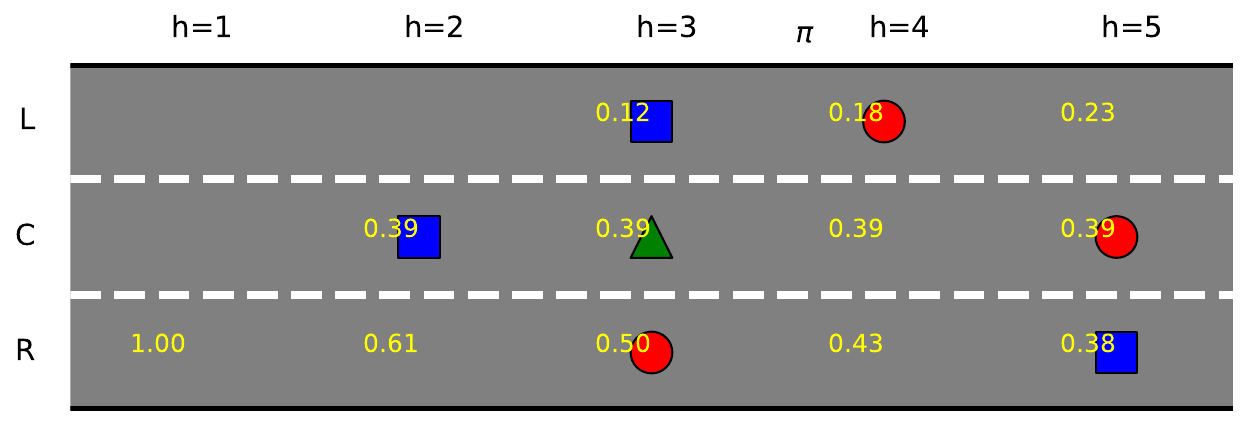}
\caption{ The occupancy measure of the expert's policy for the
demonstrations feedback.}
\label{fig: D 1}
 \end{figure}

 \subsection{Simulation}

 The execution of \rob generated the sequence of reward functions
 $\widehat{r}_{m,k}$ for finding $m$ in Figure \ref{fig: seq r m} on
 the left, while on the right we plotted the corresponding value of the
 objective function $\Delta J (\widehat{r}_{m,k})$.

 The analogous plots for $M$ are in Figure \ref{fig: seq r M}.

 \begin{figure}[!t]
  \centering
  \begin{minipage}[t!]{0.49\textwidth}
    \centering
    \includegraphics[width=0.95\linewidth]{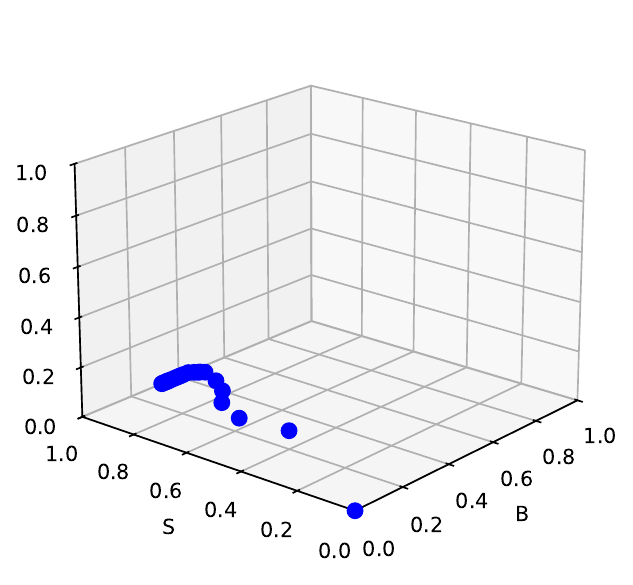}
\end{minipage}
\begin{minipage}[t!]{0.49\textwidth}
    \centering
    \includegraphics[width=0.95\linewidth]{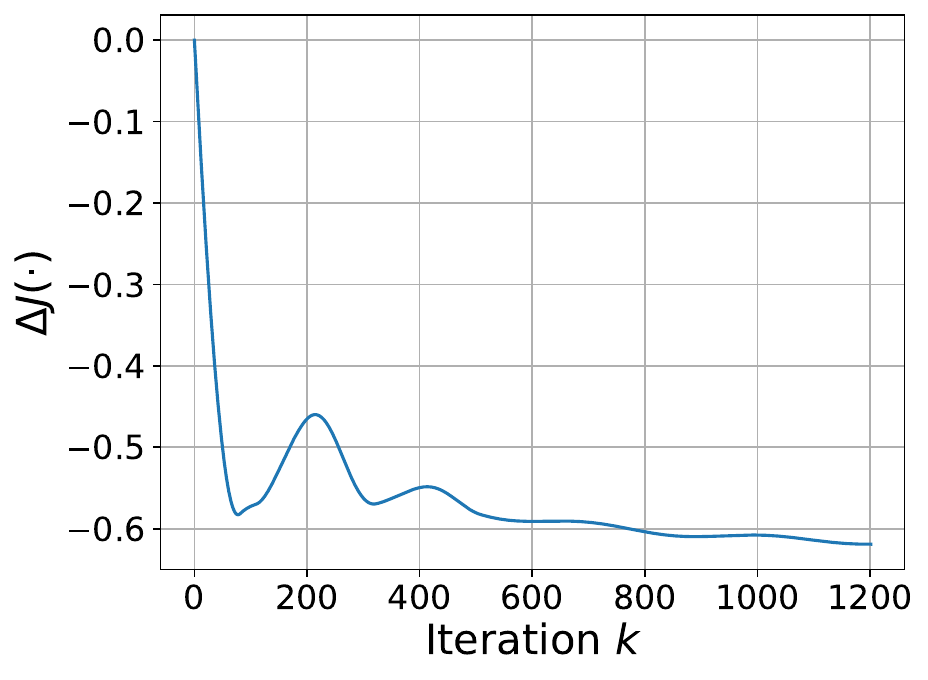}
\end{minipage}
\caption{\footnotesize (Left) The sequence of rewards $\widehat{r}_{m,k}$
computed by our algorithm. (Right) The corresponding values of the objective
function.}
\label{fig: seq r m}
 \end{figure}

 \begin{figure}[!t]
  \centering
  \begin{minipage}[t!]{0.49\textwidth}
    \centering
    \includegraphics[width=0.95\linewidth]{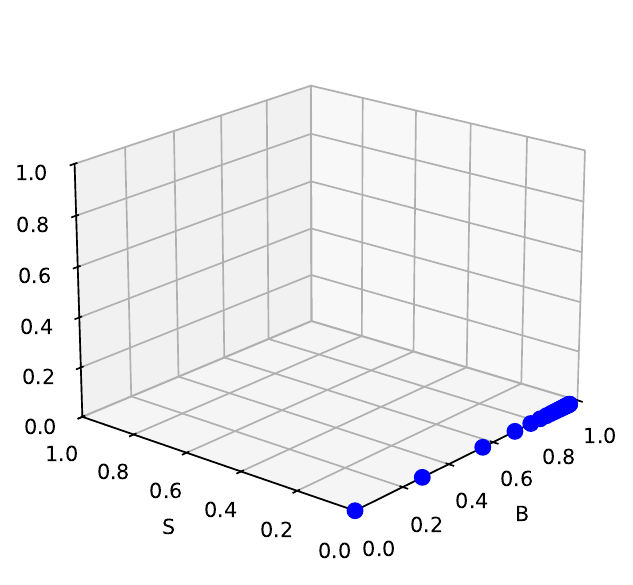}
\end{minipage}
\begin{minipage}[t!]{0.49\textwidth}
    \centering
    \includegraphics[width=0.95\linewidth]{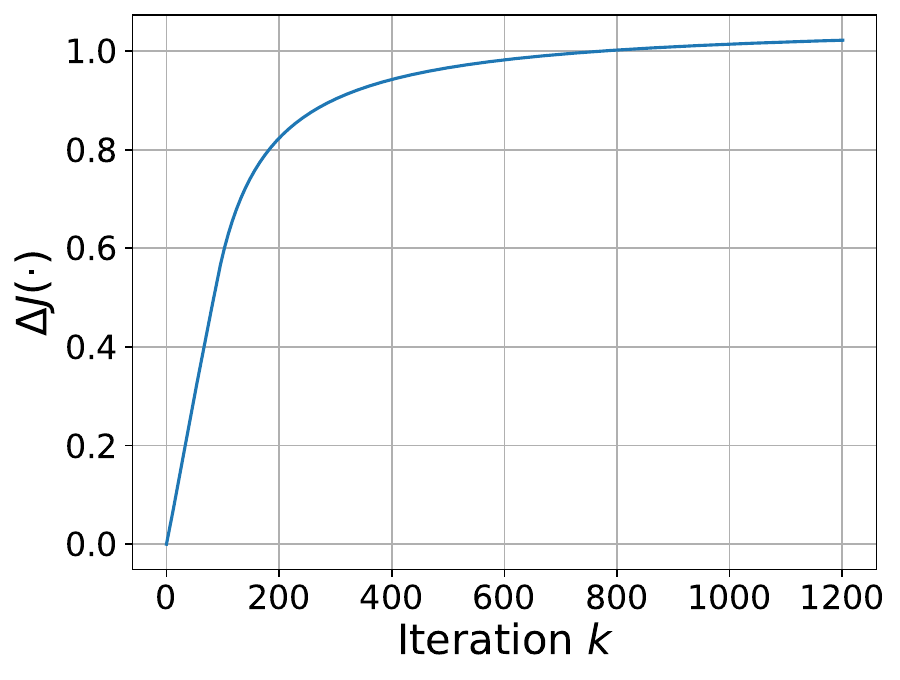}
\end{minipage}
\caption{\footnotesize (Left) The sequence of rewards $\widehat{r}_{M,k}$
computed by our algorithm. (Right) The corresponding values of the objective
function.}
\label{fig: seq r M}
 \end{figure}

 \subsection{Validation through Discretization}
 \label{apx: validation exp}

To understand if the values of
$\widehat{r}_{M},\widehat{r}_{m},\widehat{r}_{\cF,g},\widehat{M}_K,\widehat{m}_k$
computed by the execution of \rob (see Figure \ref{fig: road and fs}, right)
make sense, i.e., are close to the true values $r_M,r_m,r_{\cF,g}$, we have
computed them also through an ``exact'' method, by computing a discretization of
the feasible set and then taken the rewards that maximize/minimize the objective
function (for approximating $r_M,r_m$), and then averaged them. The results of
this computation are reported in Figure \ref{fig: true fs}. Clearly, these
values are close to those in Figure \ref{fig: road and fs}, thus the computation
carried out by \rob makes sense.

\begin{figure}[!t]
  \centering
  \includegraphics[width=0.46\linewidth]{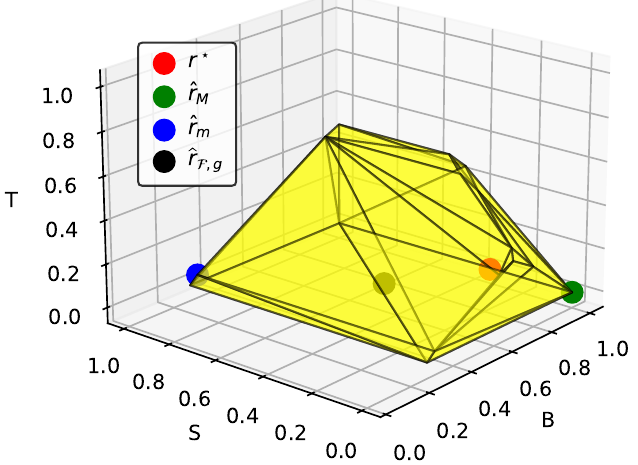}
\caption{\footnotesize The rewards $r_M,r_m,r_{\cF,g}$ computed through a
discretization of the feasible set.}
\label{fig: true fs}
 \end{figure}

\end{document}